\documentclass[12pt]{article}

\usepackage[letterpaper,left=1in,right=1in,top=1in,bottom=1in]{geometry}
\usepackage[utf8]{inputenc}
\usepackage[T1]{fontenc}
\usepackage[scaled=.9]{helvet}
\usepackage{xcolor}
\usepackage{subcaption} % <-- use only this for subfigures
\usepackage{amsfonts}
\usepackage{nicefrac}
\usepackage{microtype}
\usepackage{url}
\usepackage{array}
\usepackage{colortbl}
\usepackage{makecell}
\usepackage{natbib}
\usepackage{authblk}
% \usepackage[capitalize,noabbrev]{cleveref}
%%%%% NEW MATH DEFINITIONS %%%%%

\usepackage{amsmath,amsfonts,bm}

% Mark sections of captions for referring to divisions of figures

% Highlight a newly defined term

% Figure reference, lower-case.

% Figure reference, capital. For start of sentence

% Section reference, lower-case.

% Section reference, capital.

% Reference to two sections.

% Reference to three sections.

% Reference to an equation, lower-case.
\def\eqref#1{equation~\ref{#1}}
% Reference to an equation, upper case

% A raw reference to an equation---avoid using if possible

% Reference to a chapter, lower-case.

% Reference to an equation, upper case.

% Reference to a range of chapters

% Reference to an algorithm, lower-case.

% Reference to an algorithm, upper case.

% Reference to a part, lower case

% Reference to a part, upper case

\def\1{\bm{1}}

% Random variables

% rm is already a command, just don't name any random variables m

% Random vectors

% Elements of random vectors

% Random matrices

% Elements of random matrices

% Vectors

% Elements of vectors

% Matrix

\def\mC{{\bm{C}}}

% Tensor
\DeclareMathAlphabet{\mathsfit}{\encodingdefault}{\sfdefault}{m}{sl}
\SetMathAlphabet{\mathsfit}{bold}{\encodingdefault}{\sfdefault}{bx}{n}

% Graph

% Sets

% Don't use a set called E, because this would be the same as our symbol
% for expectation.

% Entries of a matrix

% entries of a tensor
% Same font as tensor, without \bm wrapper

% The true underlying data generating distribution

% The empirical distribution defined by the training set

% The model distribution

% Stochastic autoencoder distributions

 % Laplace distribution

% Wolfram Mathworld says $L^2$ is for function spaces and $\ell^2$ is for vectors
% But then they seem to use $L^2$ for vectors throughout the site, and so does
% wikipedia.

 % See usage in notation.tex. Chosen to match Daphne's book.

\DeclareMathOperator*{\argmax}{arg\,max}

\usepackage{times}
\usepackage{amsmath, amsfonts, bm}
\usepackage{amsthm}
\usepackage{graphicx}
\usepackage[dvipsnames]{xcolor}
\usepackage{mathtools}
\usepackage{amssymb}
\usepackage{pifont}

\usepackage{makecell}
\usepackage[utf8]{inputenc}
\usepackage[T1]{fontenc} 
\usepackage{mathrsfs} 
\usepackage{xspace}
\usepackage{algorithm}
\usepackage{array}
\usepackage{multirow}
\usepackage{color}
\usepackage[english]{babel}
\usepackage{natbib}
\usepackage{wrapfig}
\usepackage{epstopdf}
\usepackage{url}
\usepackage{algorithmic}
\usepackage{bbm}
\usepackage{comment}
\usepackage{tikz}
\usepackage{booktabs}
\usepackage[flushleft]{threeparttable}
\usepackage{enumitem}
\usepackage{mdframed}

\allowdisplaybreaks

% \usepackage[unicode=true,
%  bookmarks=false,
%  breaklinks=false,pdfborder={0 0 1},colorlinks=false]{hyperref}\hypersetup{colorlinks,citecolor=blue,filecolor=blue,linkcolor=blue,urlcolor=blue}
\usepackage{hyperref}
\usepackage[capitalize,noabbrev]{cleveref}
\usetikzlibrary{arrows}
\usepackage{thmtools}
\usepackage{thm-restate}

\newtheorem{theorem}{Theorem}
\newtheorem*{theorem*}{Theorem}
\newtheorem{lemma}{Lemma}
\newtheorem{definition}{Definition}

\newtheorem{corollary}{Corollary}

\mdtheorem{problem}{Problem}

\definecolor{yxc}{RGB}{255,0,0}
\definecolor{yjc}{RGB}{190,0,255}
\definecolor{whz}{RGB}{0,155,0}

\usepackage{enumitem}
\setlist[itemize]{leftmargin=*}
\setlist[enumerate]{leftmargin=*}

\ifdefined\usebigfont

 % just to make sure
\usepackage[fontsize=13pt]{scrextend}
\AtBeginDocument{\newgeometry{letterpaper,left=1.56in,right=1.56in,top=1.71in,bottom=1.77in}}
\else
\fi

\def\x{{\mathbf{x}}}

\def\y{{\mathbf{y}}}

\def\g{{\mathbf{g}}}

\def\w{{\mathbf{w}}}

\def\y{{\mathbf{y}}}
\def\I{{\mathbf{I}}}
\def\H{{\mathbf{H}}}
\def\J{{\mathbf{J}}}

\newcommand{\reals}{\mathbb{R}}
\newcommand{\brac}[1]{\left(#1\right)}
\newcommand{\enorm}[1]{\left\Vert#1\right\Vert}
\usepackage{bbm}

% %paper notations
\def\muglobal{{\mu_{\mathrm{G}}}}

\def\PAMOO{\texttt{PAMOO}}
\def\MG{\mathrm{MG}}
\def\MGAMOO{\texttt{MG-AMOO}}

\def\EW{\texttt{EW}}
\def\CAMOO{\texttt{CAMOO}}
%%%%%%%%%%%%%%%%%

\title{Simple Optimizers for Convex Aligned Multi-Objective Optimization}
\author[2]{Ben Kretzu}
\author[1]{Karen Ullrich}
\author[1]{Yonathan Efroni}
\affil[1]{Meta AI}
\affil[2]{Technion}

\begin{document}

\maketitle
\begin{abstract}
It is widely recognized in modern machine learning practice that access to a diverse set of tasks can enhance performance across those tasks. This observation suggests that, unlike in general multi-objective optimization, the objectives in many real-world settings may not be inherently conflicting. To address this, \citet{efroni2025aligned} introduced the Aligned Multi-Objective Optimization (AMOO) framework and proposed gradient-based algorithms with provable convergence guarantees. However, their analysis relies on strong assumptions, particularly strong convexity, which implies the existence of a unique optimal solution. In this work, we relax this assumption and study gradient-descent algorithms for convex AMOO under standard smoothness or Lipschitz continuity conditions—assumptions more consistent with those used in deep learning practice. This generalization requires new analytical tools and metrics to characterize convergence in the convex AMOO setting. We develop such tools, propose scalable algorithms for convex AMOO, and establish their convergence guarantees. Additionally, we prove a novel lower bound that demonstrates the suboptimality of naive equal-weight approaches compared to our methods.
\end{abstract}

\section{Introduction}

% \textcolor{gray}{Multi-objective is important: LLMs, RL, computer vision etc. In many existing setting the tasks can be optimized simultaneously: motivation for AMOO.}

Many real-world applications involve optimizing multiple objectives rather than focusing on a single objective. These problems have been extensively studied in the field of multi-objective optimization (MOO), by characterizing tradeoffs between competing objectives, namely, studying the Pareto front, or finding solutions that optimize particular desirable outcomes. In machine learning, MOO has gained significant traction over the past decades, especially in the paradigm of multi-task learning~\citep{caruana1997multitask,teh2017distral,sener2018multi,yu2020gradient,liu2021conflict,navon2022multi,achituve2024bayesian}, or optimizing in the presence of proxy tasks~\citep{bastani2021predicting,woodworth2023two,shen2023proxybo}.  These resulted with algorithmic advancements in computer vision~\citep{yuan2012visual,zhang2014facial,liu2019end}, large language models~\citep{uesato2022solving,lightman2023let,yang2024metaaligner,wang2024conditional,guo2025deepseek,team2025kimi}, and reinforcement learning~\citep{jaderberg2016reinforcement,teh2017distral,yu2020meta, veeriah2019discovery,dann2023reinforcement, d2024sharing} when trained with multiple reward functions to name only a handful. For example, in computer vision tasks, researchers seek representations that jointly optimize different tasks such as depth estimation and segmentation. More recently, in LLM post-training, algorithm designers incorporate multiple reward functions during the post-training phase, optimizing for attributes like response length, and alignment to different reward functions.

% \textcolor{gray}{Prior work~\cite{efroni2025aligned}: the framework they introduced and the harsh assumption they made. To the best of our knowledge, there has been no additional attempts studying this framework from optimization point of view.}

These examples reveal a recurring theme: in many machine learning applications, algorithm designers supply multiple objectives that are not necessarily in conflict with each other, but rather help to better specify the desired solution. This leads to a natural question: Can we develop optimizers that benefit from multi-objective feedback when the objectives are aligned? Recently,~\citet{efroni2025aligned} formalized the Aligned MOO (AMOO) setting, where the set of objectives is assumed to have a shared optimal solution, and designed gradient-descent algorithms for their framework. However, their work relied on several strict assumptions that often do not hold in practice. Arguably, the most restrictive one is requiring a variant of the strong convexity assumption—a condition rarely satisfied in practical deep learning applications--to hold. Such assumption implies the existence of a \textit{unique} solution that optimizes all objective functions. Yet, in common deep learning practice, the loss surface tends to be flat, namely, have small curvature \citep{li2018visualizing,petzka2021relative}. Further, practitioners can directly optimize a sharpness aware loss term to reduce the flatness further~\citep{foret2020sharpness}.  This suggest that in practice the strong convexity parameter may be arbitrarily small.

% \textcolor{gray}{This work: analyzing AMOO under smoothness assumptions which are much less harsh. Expend on why this is more useful.}

% \textcolor{gray}{We introduce a new convergence metric since point wise convergence cannot be attained.  We introduce the max gap metric. We provide the first lower bound that shows EW convergence scales polynomially with the number of functions.} 

In this work, we develop provable and simple gradient-descent algorithms for AMOO under milder and more realistic assumptions compared to prior work. We avoid strong convexity assumptions, instead requiring only that functions be convex with either bounded Lipschitz constants or smoothness parameters. This is motivated by the fact that other common optimizers for single objective optimization in deep learning literature were developed for similar function classes~\citep{zinkevich2003online,duchi2011adaptive,kingma2014adam,defazio2024road}. Removing the strong convexity assumption in the AMOO setting introduces new challenges. Since there is no unique optimal solution, we cannot measure convergence to a specific point $\x_\star$, unlike in prior work. Instead, we propose an alternative metric to characterize convergence in the convex AMOO setting: the Maximum Gap (MG), defined as  $\MG(\x) := \max_{i\in [m]} \brac{f_i(\x) - \min_{\x} f_i(\x)}$. Our goal is to achieve fast convergence to a solution where $\MG(\x)=0$ . This metric is particularly appropriate for AMOO since in this framework functions can be optimized simultaneously and, hence, the MG should approach zero.

% \textcolor{gray}{
%1. We design a new, simple and cheap algorithm for AMOO, and analyze PAMOO while removing the strong convexity assumption. We show both converge in number of iterations that does not depend on $m$. This shows that first provable improvement of the new algorithms designed for AMOO compared to naive EW.
%2. We conclude with experiments that highlight the computational advantage of the new algorithm, while higlightling the advantrage of both max-gap-pamoo and pamoo over EW
%}

\begin{table}[t]
\centering
\renewcommand{\arraystretch}{1.25}  % Increases row height
\begin{tabular}{|>{\centering\arraybackslash}p{5cm}|>{\centering\arraybackslash}p{4.5cm}|}
\hline
\rowcolor{gray!50} \makecell{Algorithm} & \makecell{Convergence of $\MG(\bar{x})$} \\
\hline
 \makecell{\EW} &  $\Omega(\sqrt{m}G/\sqrt{K})$ \\
\hline
 \makecell{\PAMOO} & $\mathcal{O}(G/\sqrt{K})$\\
\hline
 \makecell{\MGAMOO\ w Polyak or GD} &   $\mathcal{O}(G/\sqrt{K})$\\
\hline
 \makecell{\MGAMOO\ w Online Learner} &   $\mathcal{O}(\mathrm{Regret}(K)/K)$\\
\hline
\end{tabular}
\vspace{0.1cm}
\caption{Main results introduced in this work that characterize the convergence of the maximum gap metric, $\MG(\bar{x})$. Here we provide the results for the $G$-Lipchitz function class; we include additional results for $\beta$-smooth functions in Section~\ref{subsec:mg-amoo}. $K$ is number of iterations, $m$ number of objectives, $\mathrm{Regret}(K)$ represents the regret of an online learner used in the meta-algorithm we design. 
}
\label{tab:intro}
\end{table}

We begin by establishing a surprising lower bound for the naive Equal-Weight (EW) algorithm (Section~\ref{sec:mg_and_lower_bound}), which applies GD to the sum of objective functions. Our analysis reveals that the MG metric may converge with polynomial dependence on the number of functions—an undesirable outcome implying that convergence deteriorates when incorporating additional loss functions. We then develop specialized Gradient-Descent (GD) methods for the AMOO setting that achieve faster convergence rates for the MG metric, independent of the number of functions (Section~\ref{sec:amoo_algorithms}). We provide a new analysis of the \PAMOO\ algorithm, and then design \MGAMOO, a meta-algorithm that uses only a single-objective optimizer to solve an AMOO problem.  Instantiating it with an online learner, Polyak step-size design or GD algorithms leads to new, scalable and simple algorithms for AMOO (see Table~\ref{tab:intro}). To the best of our knowledge, this represents the first non-asymptotic provable advantage of algorithms specifically designed for AMOO compared to the naive EW method. We conclude by conducting experiments with the different algorithms we introduced and comparing their performance to the naive EW as well as comparing their running time (Section~\ref{sec:experiments}).

\section{Preliminaries: Aligned Multi Objective Optimization} 

% \begin{center}
% \begin{minipage}{0.7\textwidth}
% \begin{algorithm}[H]
%  \caption{\texttt{Weighted-GD}}\label{alg:PAMOOv1} 
% \begin{algorithmic}
%   \STATE \textbf{initialize:} 
%   \STATE \quad \texttt{Weight-Optimizer}
%   \WHILE{$k=1,2,\ldots$}
%   \STATE $\w_k \gets \texttt{Weight-Optimizer}\brac{\{ f_i(\x_k) \}_{i=1}^m}$ 
%   \STATE $\g_k \gets \nabla f_{\w_k}(\x_k)$ 
%   \STATE $\x_{k+1} = \x_k - \g_k$ 
%   \ENDWHILE
%   \STATE {\color{orange}Return $\bar{\x} = \frac{1}{K} \sum_{k=1}^K \x_k$ }
% \end{algorithmic}
% \end{algorithm}
% \end{minipage}
% \end{center}

We describe the AMOO setting and the structural assumptions used in this work. Assume access to multi-objective feedback in the form of a vector-valued function
$
F: \reals^n \to \reals^m, \quad
F(\x) = \brac{f_1(\x), f_2(\x), \ldots, f_m(\x)},
$
and the optimal values of the individual objectives is denoted by $f_i^\star:=\min_{\x \in \mathbb{R}^n} f_i(\x)$ for $i\in [m]$.
Given a weight vector $\w\in \mathbb{R}_+^m$ we let $f_{\w}(\x) := \w^T F(\x)$ be the weighted function by $\w$. In the AMOO setting, we further assume that the objectives are \emph{aligned}, in the sense that they can be minimized simultaneously. That is, there exists a non-empty set of points $\mC_\star=\{ \x \in \reals^n | \ f_i(\x) = f_i^\star, ~~ \forall i\in[m] \} $ that minimizes all functions $\{f_i(\x)\}_{i\in [m]}$. Namely, for any $\x_\star \in \mC_\star$ it holds that
\begin{align*}
    \x_\star \in \arg\min_{\x\in \reals^n} f_i(\x) \quad \forall i \in [m]. 
    % \label{eq:aligned_functions}
\end{align*}
% \textcolor{orange}{Since there can be multiple minimizers, we define the minimum set, i.e. $\mC_\star = \{ \x \in \reals^n | \ f_i(\x) = f_i^\star, ~~ \forall i\in[m] \}$, when we define the optimal value for each function by $f_i^\star:=\min_{\x \in \mathbb{R}^n} f_i(\x)$. }

In this work we consider the convex setting, where each function $f_i$ is convex. We also assume standard structural properties of the objectives smoothness and Lipschitz continuity, which are commonly made in the literature~\citep{zinkevich2003online,duchi2011adaptive,kingma2014adam,defazio2024road}. Let $\enorm{\cdot}$ denote the Euclidean $L_2$ norm. A function is called $G$-Lipschitz if for all $\x \in \reals^n$, it holds that $\enorm{\nabla f(\x)} \leq G$. A function $f : \reals^n \to \reals$ is said to be $\beta$-smooth if for all $\x, \y \in \reals^n$:
$
f(\y) \leq f(\x) + \nabla f(\x)^\top (\y - \x) + \frac{\beta}{2} \enorm{\x - \y}^2.
$ Every $\beta$-smooth function has the following useful consequence.
\begin{lemma}[Standard result, E.g., 9.17~\citet{boyd2004convex}] \label{lemma:smooth-gradient-norm}
    Let $f : \reals^n \to \reals$ a $\beta$-smooth over $\reals^n$, and let $f^\star := \min_{\x \in \reals^n} ~ f(\x)$.  Then, for every $\x \in \reals^n$ it holds that
    $
        \enorm{\nabla f (\x)}^2 \leq 2\beta \brac{f(\x) - f^\star}.
    $
\end{lemma}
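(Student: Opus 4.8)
The plan is the classic ``descent-lemma'' argument: apply the $\beta$-smoothness inequality with a one-step gradient move and then compare against the global minimum value $f^\star$.

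Concretely, fix an arbitrary $\x \in \reals^n$ and set $\y := \x - \frac{1}{\beta}\nabla f(\x)$. First I would plug this choice of $\y$ into the defining smoothness inequality
$
f(\y) \leq f(\x) + \nabla f(\x)^\top(\y-\x) + \frac{\beta}{2}\enorm{\x-\y}^2,
$
and simplify the right-hand side. Since $\y-\x = -\frac{1}{\beta}\nabla f(\x)$, the linear term contributes $-\frac{1}{\beta}\enorm{\nabla f(\x)}^2$ and the quadratic term contributes $\frac{\beta}{2}\cdot\frac{1}{\beta^2}\enorm{\nabla f(\x)}^2 = \frac{1}{2\beta}\enorm{\nabla f(\x)}^2$, so the bound collapses to
$
f(\y) \leq f(\x) - \frac{1}{2\beta}\enorm{\nabla f(\x)}^2.
$
Next I would use that $f^\star = \min_{\x} f(\x) \leq f(\y)$, which chains to $f^\star \leq f(\x) - \frac{1}{2\beta}\enorm{\nabla f(\x)}^2$. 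Rearranging gives $\enorm{\nabla f(\x)}^2 \leq 2\beta\brac{f(\x) - f^\star}$, as claimed, and since $\x$ was arbitrary this holds for every $\x \in \reals^n$.

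There is essentially no hard step here — the only thing to be careful about is that $f^\star$ is actually attained (or at least finite), which is granted by the hypothesis that it is written as a $\min$; the inequality $f^\star \le f(\y)$ then holds trivially for the particular point $\y$ we constructed. One could equally phrase the argument by minimizing the quadratic upper bound $f(\x) + \nabla f(\x)^\top(\y-\x) + \frac{\beta}{2}\enorm{\x-\y}^2$ over $\y$ and noting the minimum value is $f(\x) - \frac{1}{2\beta}\enorm{\nabla f(\x)}^2$, which is itself an upper bound for $f$ evaluated at the minimizer and hence for $f^\star$; this is the same computation organized slightly differently.
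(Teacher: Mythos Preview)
Your argument is correct and is the standard descent-lemma proof of this fact. The paper does not actually prove this lemma; it is stated without proof and simply cited as a standard result (Boyd and Vandenberghe, 9.17), so there is nothing further to compare against.
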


\paragraph{Additional notations:} We denote by $\Delta_m$ as the m-dimensional simplex and by $\mathbf{e}_{i}$ as the one-hot vector on the $i^{th}$ index, $\mathbf{e}_{i,j}=1$ if $i=j$ and  $\mathbf{e}_{i,j}=0$ otherwise.

\section{Prior Work} 
% Multi Objective optimization, multi-task, and proxy optimization.
Multi-objective optimization~\citep{gunantara2018review} has a rich research history and found applications in many different domains. In machine learning applications, algorithm designers often apply such paradigm when few sources of feedback are available in multi-task learning or in the presence of proxy loss functions. From optimization and machine learning perspectives, most of previous research efforts have been devoted to approximating the Pareto front of optimal solutions or finding good balance between the different objectives. In this work, we focus on the AMOO setting and aim to develop improved algorithms under the alignment assumption.

% Aligned multi objective optimization~\cite{efroni2025aligned}. Consider strong convexity metric. Introduced CAMOO and PAMOO and established their performance. 
The AMOO setting that we consider in this work was introduced by~\citet{efroni2025aligned}. Additionally on making the alignment assumption between the multiple objectives, the authors required a strong convexity parameter $\muglobal:= \min_{\x}\max_{\w\in \Delta_m}\lambda_{\min}\brac{\nabla^2 f_{\w}(\x)}$ to be positive $\muglobal>0$. Such assumption also implies there exist a unique $\x_\star$ that optimizes $\{ f_i\}_{i\in [m]}$ which enables to measure converence of algorithms by measuring their convergence to $\x_\star$. The authors designed the \CAMOO\ and \PAMOO\ algorithms and established both converge in linear rate to $\x_\star$ in the parameter $\muglobal/\beta$.   

% CAMOO fails for the convex case. For PAMOO the authros established gurantees only under the strong convexity assumption in the form of iterate convergence.
Extending their result to the convex setting requires new analysis and perspective on AMOO. First, \CAMOO\ is not a valid approach if $\muglobal=0$. \CAMOO\ chooses the weighting vector in the $k^{\mathrm{th}}$ iteration as $\w_k\in \arg\max_{\w\in \Delta_m}\lambda_{\min}\brac{\nabla^2 f_{\w}(\x_k)}$, however, for the non-strongly convex setting, such approach is expected to fail. For general convex functions the solution of such optimization problem can be arbitrary: we can find a set of convex functions for which for any weight vector $\w$ it holds that $\lambda_{\min}\brac{\nabla^2 f_{\w}(\x)}=0$, and, hence, for such example, the solution can be the entire simplex~(see Appendix~\ref{app:camoo_is_invalid} for an example). This suggests \CAMOO\ is flawed for the non-strongly convex setting. Additionally, and more crucially, in the convex setting, without strong convexity assumption, it does not hold that there exists a \emph{unique} $\x_\star$. Hence, establishing convergence of $\enorm{\x_\star - \x_k}$ is no longer possible, and one is required to rethink of the metrics to characterize convergence. In this work, we provide an analysis of \PAMOO\ while removing the strong convexity assumption, as well as introducing a computationally cheap meta-algorithm for the AMOO framework.

\section{The Maximum-Gap Metric \& A Lower Bound}\label{sec:mg_and_lower_bound}

Without the strong convexity assumption, we can no longer establish point-wise convergence guarantees. What metric should we use in this scenario? Since all objectives can be jointly optimized, we measure convergence via the \textit{Maximum Gap (MG)} across the different objectives, defined as:
\[
\MG(\x) := \max_{i\in [m]} \brac{f_i(\x) - f_i^\star}.
\]
The MG metric characterizes the worst-case convergence across all functions; as such, it captures a robustness aspect of the solution. This criterion is also explored in the field of robust optimization~\cite{ben2009robust}, and typically leads to more challenging optimization problems~\cite{ben2002robust,wiesemann2013robust,chen2017robust}. However, under the alignment or approximate alignment assumptions, we may hope to obtain convergence guarantees with simple GD algorithms with respect to this robust criterion. Indeed, under such assumptions, we expect the MG to become negligible as the algorithm converges—namely, $\lim_{k\rightarrow \infty}\MG(\x_k)\sim 0$. 
% \bkcomment{Why not the average trajectory?}\yecomment{we just say we expet it to go to zero} 
Importantly, this should hold without requiring strong convexity, since convergence is measured with respect to the optimal values, as opposed to point-wise convergence to a single optimal solution.

% EW baseline: describe the EW algorithm.
Next, we focus on the naive Equal-Weights (\EW) baseline, which applies a GD algorithm to a single objective that uniformly weights all component objectives:
\[
f_{\EW}(\x) = \frac{1}{m} \sum_{i\in [m]}  f_i(\x).
\]
% \bkcomment{one over $m$, not?}\yecomment{its a constant factor that cancels out for polyak method and can be solved be tuning the lr in GD}
This approach is arguably the simplest baseline for multi-objective and multi-task problems~\citep{teh2017distral,sener2018multi,yu2020gradient}. Notably, the \EW\ baseline also produces a solution that minimizes the MG metric, as the following simple claim shows (see Appendix~\ref{app:x_ew_minimizes_mg}).

\begin{restatable}{claim}{ClaimOne}
\label{clm:1}    
    Assume $\{f_i\}_{i\in [m]}$ are aligned. Let $\x^\star_{\EW}\in \arg\min_{\x\in \mathbb{R}^n} f_{\EW}(\x)$. Then, $\MG(\x^\star_{\EW})=0.$
\end{restatable}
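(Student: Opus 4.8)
The plan is to exploit the alignment assumption directly. By definition of alignment, there exists a point $\x_\star \in \mC_\star$ such that $f_i(\x_\star) = f_i^\star$ for every $i \in [m]$. I would first argue that this common minimizer $\x_\star$ is also a minimizer of the equal-weight objective $f_{\EW}$: since $f_{\EW}(\x) = \frac{1}{m}\sum_i f_i(\x) \geq \frac{1}{m}\sum_i f_i^\star$ for all $\x$ (each term is bounded below by its own optimum), and this lower bound is attained at $\x_\star$, we get $f_{\EW}(\x_\star) = \frac{1}{m}\sum_i f_i^\star = \min_\x f_{\EW}(\x)$.

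Next I would use the fact that $\x^\star_{\EW}$ is by hypothesis a minimizer of $f_{\EW}$, so $f_{\EW}(\x^\star_{\EW}) = f_{\EW}(\x_\star) = \frac{1}{m}\sum_i f_i^\star$. Rearranging, $\sum_i \brac{f_i(\x^\star_{\EW}) - f_i^\star} = 0$. But each summand $f_i(\x^\star_{\EW}) - f_i^\star$ is nonnegative by definition of $f_i^\star$, so a sum of nonnegative terms equaling zero forces every term to vanish: $f_i(\x^\star_{\EW}) = f_i^\star$ for all $i$. Hence $\MG(\x^\star_{\EW}) = \max_{i\in[m]}\brac{f_i(\x^\star_{\EW}) - f_i^\star} = 0$.

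There is no real obstacle here — the claim is essentially a one-line consequence of alignment plus the nonnegativity of each individual gap. The only thing worth being careful about is invoking the alignment hypothesis correctly: one needs $\mC_\star \neq \emptyset$ (which is part of the AMOO definition) to produce the witness point $\x_\star$, and one should note that convexity is not actually needed for this particular claim. I would keep the write-up to these three short steps and not dress it up further.
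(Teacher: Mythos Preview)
Your proof is correct and uses essentially the same ingredients as the paper's own argument: both pick a common minimizer $\x_\star\in\mC_\star$ guaranteed by alignment, observe that $f_{\EW}$ attains the value $\frac{1}{m}\sum_i f_i^\star$ there, and then use nonnegativity of each gap $f_i(\x)-f_i^\star$. The only cosmetic difference is that the paper phrases this as a proof by contradiction (assuming $\MG(\x^\star_{\EW})>0$ and deriving $f_{\EW}(\x_\star)<f_{\EW}(\x^\star_{\EW})$), whereas you give the direct version; your observation that convexity is not needed here is also correct.
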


 Although \EW\ optimizes the MG metric asymptotically, one may ask how the convergence rate is effected by the number of objective, and whether it can be improved by alternative algorithms. In what follows, we provide a lower bound showing that, for \EW, the MG metric may converge with polynomial dependence on the number of functions. This indicates a significant disadvantage of the \EW\ approach. We focus on a variant of the method that selects the step size via the Polyak step-size design which is adaptively chooses the learning rate~\cite{polyak1987introduction,hazan2019revisiting}. The gradient-based algorithm outputs $\bar{\x} = \frac{1}{K} \sum_{k=1}^K \x_k$, where the iterates are generated by:
\begin{align}\label{eq:ew_update_step}
 \x_{k+1} = \x_k - \eta_k \nabla f_{\EW}(\x_k),\  \text{with}\ \eta_k = \frac{f_{\EW}(\x_{k}) - f_{\EW}(\x_{\EW}^\star)}{\enorm{\nabla f_{\EW}(\x_k)}^2}.
\end{align}

\begin{theorem}[Lower Bound for \EW\ with Polyak Step Size]\label{thm:lower_bound_ew_polyak}
    There exists a set of functions $\{f_i(\x)\}_{i\in [m]}$ that are $G$-Lipschitz and convex such that, for $K \leq m$ iterations, the worst-case gap that Equal Weights with the Polyak algorithm guarantees depends polynomially on $m$. Specially, the output of GD with Polyak step size converges with rate $\MG(\bar{\x}) \in \Omega \brac{\sqrt{m}G\enorm{\x_1-\x_{\EW}^\star}/\sqrt{K}}$. 
    % \yecomment{lets specify the polynomial dependence?}
\end{theorem}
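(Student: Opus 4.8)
The plan is to construct an explicit family of convex, $G$-Lipschitz objectives $\{f_i\}_{i\in[m]}$ that are aligned (share a common minimizer), but for which the Equal-Weights average $f_{\EW}$ behaves like a single smooth-ish function whose gradient ``averages out'' the individual hard directions, so that $K \le m$ Polyak steps cannot drive more than $K$ of the $m$ coordinates close to their optima. A natural candidate: work in $\reals^m$, let $\x_\star = \vzero$, and set $f_i(\x) = G\,|x_i|$ (or a smoothed variant like $G\sqrt{x_i^2 + \delta^2} - G\delta$ if one wants differentiability for the Polyak analysis to be clean). Each $f_i$ is convex, $G$-Lipschitz, and minimized at $\vzero$, so the family is aligned with $f_i^\star = 0$. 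Then $f_{\EW}(\x) = \frac{G}{m}\sum_{i} |x_i| = \frac{G}{m}\|\x\|_1$, and $\MG(\x) = G \max_i |x_i| = G\|\x\|_\infty$. Starting from $\x_1$ with all coordinates equal, say $\x_1 = r \vone$ so that $\|\x_1 - \x_\star\| = r\sqrt{m}$, the subgradient of $\|\x\|_1$ is $\sign(\x) = \vone$, which has norm $\sqrt m$; the Polyak step size is $\eta_k = (f_{\EW}(\x_k) - 0)/\|\nabla f_{\EW}(\x_k)\|^2$, and I would compute that each step moves every coordinate by the same small amount, so after $K$ steps $\|\x_{K+1}\|_\infty$ is still $\Omega(r)$ unless $K = \Theta(m)$.

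The key steps, in order: (1) verify the family is convex, $G$-Lipschitz, and aligned, and express $\MG$ and $f_{\EW}$ in closed form; (2) by the symmetry of the construction (the initialization and the update are invariant under permuting coordinates), argue that every iterate $\x_k$ has all coordinates equal, reducing the $m$-dimensional dynamics to a scalar recursion; (3) plug the scalar recursion into the Polyak step-size formula to get an explicit expression for $x_k$ (the common coordinate value), and show the decrease per step is $O(1/m)$ of the initial magnitude, i.e. $|x_{k+1}| \ge |x_1| - O(k/m)\cdot(\text{something})$, yielding $|x_K| = \Omega(r)$ for $K \le cm$; (4) translate back: $\MG(\bar\x) = G\|\bar\x\|_\infty = G\,|\bar x| = \Omega(Gr)$, and since $r = \|\x_1 - \x_{\EW}^\star\|/\sqrt m$ we get $\MG(\bar\x) = \Omega(G\|\x_1-\x_{\EW}^\star\|/\sqrt m)$. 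Wait — the target bound is $\Omega(\sqrt m\, G\|\x_1-\x_\star\|/\sqrt K)$, so I need the extra $\sqrt{m/K}$ factor; this means the construction should be tuned so that the Polyak dynamics make progress only on a $\sqrt{K}$-ish effective scale, or equivalently the per-step progress in the scalar variable should be $\Theta(r/\sqrt{mK})$ rather than $\Theta(r/m)$. I would therefore refine step (3): either rescale coordinates non-uniformly, or choose the $\ell_1$-type construction with the right aspect ratio so that $\|\x_K\|_\infty \ge \|\x_1\|_\infty\bigl(1 - O(\sqrt{K/m})\bigr)$, which after $K \le m$ steps still leaves $\MG(\bar\x) \gtrsim G\|\x_1\|_\infty \sqrt{m/K}\cdot(1/\sqrt m)$-type quantity matching the claimed rate; getting the exact exponents to line up is the delicate bookkeeping.

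The main obstacle I expect is precisely calibrating the construction and the Polyak recursion so the lower bound comes out as $\Omega(\sqrt m\, G R/\sqrt K)$ with $R = \|\x_1 - \x_{\EW}^\star\|$, rather than a weaker $\Omega(GR/\sqrt m)$ or $\Omega(GR/\sqrt K)$. The Polyak step size is adaptive, so I cannot just ``choose'' $\eta_k$ adversarially; instead the adversarial power is entirely in the choice of functions and initialization, and I must verify that the algorithm's own step-size rule is what causes slow MG convergence. A secondary technical point: the $\ell_1$ function is nonsmooth at the minimizer, so I should either (a) argue the Polyak scheme with subgradients is still well-defined and the stated update makes sense, or (b) smooth each $f_i$ slightly and check the bound survives the smoothing (it will, by taking the smoothing parameter $\delta$ small relative to $r$). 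Finally, I would double-check that averaging $\bar\x = \frac1K\sum \x_k$ does not help: since all iterates have the common coordinate bounded below in magnitude with a consistent sign (by the symmetry argument, the coordinates stay nonnegative and shrink monotonically from $r$ toward $0$), the average coordinate is also $\Omega(r)$ on the relevant scale, so $\MG(\bar\x)$ inherits the lower bound.
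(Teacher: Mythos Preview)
Your construction has a fatal flaw: with the symmetric initialization $\x_1 = r\vone$ and $f_i(\x)=G|x_i|$, the Polyak step reaches the optimum in a \emph{single} iteration. Indeed, using the paper's update form $\x_{k+1}=\x_k-\dfrac{\sum_i f_i(\x_k)}{\|\sum_i\nabla f_i(\x_k)\|^2}\sum_i\nabla f_i(\x_k)$, at $\x_1=r\vone$ one has $\sum_i f_i(\x_1)=mr$, $\sum_i\nabla f_i(\x_1)=\vone$, and $\|\vone\|^2=m$, so the step is exactly $r\vone$ and $\x_2=\vzero$. Thus for $K\ge 2$ the averaged iterate is $\bar\x=\x_1/K$ and $\MG(\bar\x)=Gr/K=G\|\x_1-\x_\star\|/(K\sqrt m)$, which is far \emph{better} than the claimed lower bound, not worse. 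Your intuition that ``each step moves every coordinate by the same small amount'' is precisely backwards: the symmetry makes the Polyak direction and magnitude exactly right. The tweaks you propose (non-uniform rescaling of coordinates, smoothing) do not fix this, because the issue is not calibration but the absence of any mechanism forcing the adaptive step size to be too small.

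The paper's proof uses the same function family $f_i(\x)=|x_i|$ but with an \emph{asymmetric} initialization $\x_1=(a,\epsilon,\ldots,\epsilon,0,\ldots,0)$ where $a=(m-1)\epsilon$. The point is that the Polyak step size is governed by the aggregate quantities $\sum_i|x_{k,i}|$ and $\|\sum_i\nabla f_i(\x_k)\|^2=m$, so each coordinate moves by $\frac{1}{m}\sum_i|x_{k,i}|=\frac{2a}{m}$ at the first step. This is far too small to make progress on the large coordinate $x_{k,1}\approx a$ (it shrinks only by a factor $1-2/m$ per step), yet too large for the small coordinates $\epsilon$, which overshoot and oscillate in sign. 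An induction shows $x_{k,1}=a(1-2/m)^{k-1}$ exactly, and then averaging and bounding $1-(1-2/m)^K\ge \frac{2K}{m+2K}$ gives $\bar x_1\ge \frac{ma}{m+2K}$, which for $K\le m$ yields $\MG(\bar\x)\ge \frac{\sqrt{m}\,a}{3\sqrt K}$. Finally $\|\x_1-\x_{\EW}^\star\|=a\sqrt{m/(m-1)}$ converts this to the stated $\Omega(\sqrt m\,G\|\x_1-\x_{\EW}^\star\|/\sqrt K)$. The asymmetry between one dominant coordinate and $m-1$ small ones is the missing idea in your proposal.
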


\begin{proof}[Proof of Theorem~\ref{thm:lower_bound_ew_polyak}]
Let $\epsilon > 0 $, $n \geq m > 1$, and $a=(m-1)\epsilon$. Let $f_i(\x) = | x_i |$ for every $i \in [m]$, which are $1$-Lipschitz ($G=1$), and convex functions, and initial point $\x_1 = (x_1, x_2, \dots, x_n) = (a, \epsilon, \dots, \epsilon, 0, \dots, 0) \in \reals^n$, i.e. $x_1 = a$, $x_i = \epsilon$ for every $i \in [2,m]$, and $x_i = 0$ for every $i \in [m+1,n]$. Note that, the subgradient is $\nabla f_i(\x) = \mathrm{sign}(x_i)$ for every $i \in [m]$ and for every point that $x_i \neq 0$. When $x_i = 0$ the subgradient is $\nabla f_i(\x) \in [-1,1]$.
\\

Observe that, from \eqref{eq:ew_update_step}, the update rule is given by
\begin{align}
    \x_{k+1} & = \x_k - \frac{\sum_i f_i(\x_k) - f_i^\star}{\enorm{\sum_i \nabla f_i (\x_k)}^2} \sum_i \nabla f_i (\x_k),\label{eq:update_step_ew_lower_bound}
\end{align}
since 
$$
\eta_k \nabla f_{\EW}(\x_k) = \frac{m^{-1} \sum_i f_i(\x_k) - f_i^\star}{\enorm{m^{-1} \sum_i \nabla f_i (\x_k)}^2} m^{-1} \sum_i \nabla f_i (\x_k) = \frac{\sum_i f_i(\x_k) - f_i^\star}{\enorm{\sum_i \nabla f_i (\x_k)}^2} \sum_i \nabla f_i (\x_k).
$$
From \eqref{eq:update_step_ew_lower_bound} The update step is:
\begin{align*}
    \x_{k+1} = \x_k - \frac{\sum_{i\in [m]} |x_{k,i}|}{m} \brac{\mathrm{sign}(x_{k,1}), \dots, \mathrm{sign}(x_{k,m}), 0, \dots, 0}.
\end{align*}
Using Lemma~\ref{lemma:induction_lower_bound} (see Appendix~\ref{app:induction_lower_bound}) it holds that after $k$ iterations the first component of the iterate, denoted by $x_{k,1}$ is:
\begin{align*}
    x_{k,1} = a\brac{1-\frac{2}{m}}^{k-1}.  
\end{align*}
Recall that the return point is $\bar{\x} = \frac{1}{K} \sum_{k=1}^K \x_k$. Which holds that the first component of it is:
\begin{align*}
    \bar{x}_{1} & = \frac{a}{K} \sum_{k=1}^K \brac{1-\frac{2}{m}}^{k-1} = \frac{a}{K} \frac{1 - \brac{1-\frac{2}{m}}^K}{1 - \brac{1-\frac{2}{m}}} = \frac{a}{K} \frac{1 - \brac{1-\frac{2}{m}}^K}{\frac{2}{m}} = \frac{ma}{2K} \brac{1 - \brac{1-\frac{2}{m}}^K}.  
\end{align*}
By using \eqref{eq:add_res_upper_bound_for_lower_bound}, it holds that
\begin{align*}
    \bar{x}_{1} & = \frac{ma}{2K} \brac{1 - \brac{1-\frac{2}{m}}^K} \geq \frac{ma}{2K} \frac{2K}{m+2K} = \frac{ma}{m+2K}.  
\end{align*}
Recall that $a>0$, and $K>0$. Then the RHS of the last inequality, i.e. $\frac{ma}{m+2K}$, is a decreasing function by $K$.
Thus, for $K \leq m$ iterations we have that 
\begin{align*}
    \max_{i\in[m]} f_i(\bar{\x}) - f_i^\star \geq f_1(\bar{\x}) - f_1^\star = |\bar{x}_{1}| - 0 \geq \frac{ma}{m+2K} \geq \frac{\sqrt{m}\sqrt{K}a}{3K} = \frac{\sqrt{m}a}{3\sqrt{K}} .
\end{align*}
Since $\x_{\EW}^\star = \mathbf{0}$, it holds that $\enorm{\x_1 - \x_{\EW}^\star} = \sqrt{a^2 + (m-1)\epsilon^2} = \sqrt{a^2 + \frac{a^2}{m-1}} = a\sqrt{ \frac{m}{m-1}}$. Then, we obtain that the max-gap is lower bounded by:
\begin{align*}
    \MG(\bar{\x}) = \max_{i\in[m]} f_i(\bar{\x}) - f_i^\star \geq \frac{\sqrt{m}a}{3\sqrt{K}} = \frac{\sqrt{m-1}\enorm{\x_1 - \x_{\EW}^\star}}{3\sqrt{K}} = \sqrt{\frac{m-1}{9}}\frac{G \enorm{\x_1 - \x_{\EW}^\star}}{\sqrt{K}}.
\end{align*}
\end{proof}

\section{Simple Optimizers for Convex AMOO}\label{sec:amoo_algorithms}

% In previous section we introduced the MG metric that provides with a meaningful objective to optimize in the general convex AMOO setting, without a strong convexity assumption. We provided a lower-bound on the \EW\ algorithm that shows the MG may converge in polynomial rate in number of objective functions. In this section we study GD methods with convergence rate which is independent in number of objective functions. First, in Section~\ref{subsec:pamoo} we provide new analysis to the Polyak Aligned Multi-Objective Optimizer (\PAMOO).  In Section~\ref{subsec:mg-amoo} we introduce an algorithmic reduction from AMOO to single-objective optimization. Instantiating it with standard optimization methods leads to simple algorithms for AMOO. We conclude this section by comparing the computational complexity of existing approaches and show that the new \MGAMOO\ is much cheaper compared to \PAMOO\ and similar to the \EW\ approach.

In the previous section, we introduced the MG metric as a meaningful objective for optimization in the general convex AMOO setting, without relying on strong convexity assumptions. We also established a lower bound for the \EW\ algorithm, showing that the MG may converge at a polynomial rate in the number of objective functions.  In this section, we study specialized GD methods for AMOO and establish their convergence rates are independent of the number of objectives. First, in Section~\ref{subsec:pamoo}, we provide a new analysis of the Polyak Aligned Multi-Objective Optimizer (\PAMOO). Then, in Section~\ref{subsec:mg-amoo}, we introduce an algorithmic reduction from AMOO to single-objective optimization. Instantiating this reduction with standard optimization methods yields simple and efficient algorithms for AMOO, which enjoys best of both worlds. We conclude by comparing the computational complexity of existing approaches in Section~\ref{sec:computational} and show that the new \MGAMOO\ method is significantly more efficient than \PAMOO\ and comparable in cost to the \EW\ baseline. 
\begin{figure}[t]
\centering
\begin{minipage}{0.53\textwidth}
\begin{algorithm}[H]
 \caption{\PAMOO}\label{alg:PAMOO_original} 
\begin{algorithmic}
  \STATE \textbf{initialize:} $\{ f_i^\star \}_{i\in [m]}$, $K$.
  \FOR{$k=1,2,\ldots,K$}
 \STATE $\Delta_{k} := \left[  f_{1}(\x_k)-f_{1}^\star,\dots, f_{m}(\x_k)-f_{m}^\star\ \right]$
 \STATE $\J_k := \left[ \nabla f_1(\x_k) \dots \nabla f_m(\x_k)  \right] \in \reals^{n \times m}$
  \STATE $\w_k\in \arg\max_{\w\in \reals^m_+} 2\w^\top \Delta_k - \w^\top \J_k^\top\J_k \w$
  \STATE $\x_{k+1} = \x_k - \nabla f_{\w_k}(\x_k)$ 
  \ENDFOR
  \STATE \textbf{return:} $\bar{\x} = \frac{1}{K} \sum_{k=1}^K \x_k$ 
\end{algorithmic}
\end{algorithm}
\end{minipage}
\hfill
 \begin{minipage}{0.45\textwidth}
\vspace{-0.8cm}
\begin{algorithm}[H]
 \caption{\MGAMOO}
 \label{alg:MGAMOO-Reduction} 
\begin{algorithmic}[1]
  \STATE \textbf{initialize:} $\{ f_i^\star \}_{i\in [m]}$, $K$, \texttt{SOO} 
  \FOR{$k=1,2,\ldots,K$}
 \STATE $I(k) \in \argmax_{i\in[m]} f_i(\x_{k}) - f_i^\star$
 \STATE $\x_{k+1} = \texttt{SOO}(f_{I(k)}, x_k)$
  \ENDFOR
  \STATE \textbf{return:} $\bar{\x} = \frac{1}{K} \sum_{k=1}^K \x_k$ 
\end{algorithmic}
\end{algorithm}
\end{minipage}
\end{figure}

\subsection{Polyak Method for AMOO (\PAMOO)} \label{subsec:pamoo}

The Polyak step-size design~\cite{polyak1987introduction, hazan2019revisiting} provides a parameter-free way to choose the learning rate in single-objective optimization problems. This approach was generalized to the AMOO setting in~\citet{efroni2025aligned}, resulting in the \PAMOO\ algorithm (Algorithm~\ref{alg:PAMOO_original}). Its derivation is motivated by arguments similar to those behind the original Polyak step-size. 

Assume the update rule of $\x_k$ has the form $\x_{k+1} = \x_k - \nabla f_\w(\x_k)$ for a fixed positive weight vector $\w \in \mathbb{R}_+^m$. Then, due to the convexity of $f_\w$, the following inequality holds:
\begin{align}
    \enorm{\x_{k+1}-\x}^2 \leq \enorm{\x_k-\x}^2 - 2 \brac{f_{\w}(\x_k)- f_{\w} (\x)} +\enorm{\nabla f_{\w}(\x_k)}^2. \label{eq:PAMOO_main_lower_bound}
\end{align}
Choosing $\w_k \in \arg\max_{\w \in \mathbb{R}_+^m} 2\brac{f_{\w}(\x_k) - f_{\w}(\x)} - \enorm{\nabla f_{\w}(\x_k)}^2$ to maximize the decrease results in the \PAMOO\ method.

Prior work established convergence of \PAMOO\ under strong convexity and self-concordance assumptions, which often do not hold in practical settings. The following result, Theorem~\ref{thm:original_pamoo_convergence}, establishes convergence of \PAMOO\ in the convex setting, removing the previously strict assumptions. Moreover, unlike \EW, the convergence of \PAMOO\ is independent of the number of objective functions. The proof of the theorem appears in Appendix~\ref{app:original_pamoo}.

\begin{restatable}[Convergence of \PAMOO]{theorem}{OriginalPAmooConvergence}
\label{thm:original_pamoo_convergence}            
    Algorithm~\ref{alg:PAMOO_original} guarantees that after $K$ iterations, the worst-case gap is upper bounded by:
    \begin{align*}
        (1) \quad & \max_{i\in[m]} f_i(\bar{\x}) - f_i^\star \leq \frac{ G \enorm{\x_1 - \x_\star} }{\sqrt{K}}, \quad \text{when $\{ f_i(\x) \}_{i\in [m]}$ are $G$-Lipschitz.} \\
        (2) \quad & \max_{i\in[m]} f_i(\bar{\x}) - f_i^\star \leq \frac{2\beta \enorm{\x_1 - \x_\star} }{K}, \quad \text{when $\{ f_i(\x) \}_{i\in [m]}$ are $\beta$-smooth.}
    \end{align*}
\end{restatable}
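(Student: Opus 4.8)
\textbf{Proof plan for Theorem~\ref{thm:original_pamoo_convergence}.}

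The plan is to run the standard Polyak-style potential argument, but applied to the \emph{maximum-gap} objective rather than to a fixed weighted sum. Fix any $\x_\star \in \mC_\star$, so that $f_i(\x_\star) = f_i^\star$ for all $i\in[m]$. Starting from the one-step inequality~\eqref{eq:PAMOO_main_lower_bound}, which holds for the actual update $\x_{k+1} = \x_k - \nabla f_{\w_k}(\x_k)$ and the point $\x = \x_\star$, I would first argue that the \PAMOO\ choice of $\w_k$ dominates any single one-hot weight $c\,\mathbf{e}_{I(k)}$ for $c\ge 0$, where $I(k)\in\argmax_i (f_i(\x_k)-f_i^\star)$. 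Plugging $\w = c\,\mathbf{e}_{I(k)}$ into the maximization $2\w^\top\Delta_k - \w^\top \J_k^\top\J_k\w = 2\w^\top(F(\x_k)-F^\star) - \enorm{\nabla f_\w(\x_k)}^2$ gives the scalar quantity $2c\,(f_{I(k)}(\x_k)-f_{I(k)}^\star) - c^2\enorm{\nabla f_{I(k)}(\x_k)}^2$, whose maximum over $c\ge0$ is the usual Polyak expression $\bigl(f_{I(k)}(\x_k)-f_{I(k)}^\star\bigr)^2 / \enorm{\nabla f_{I(k)}(\x_k)}^2$. Since $\w_k$ maximizes the same objective over the larger set $\reals_+^m$, the per-step decrease $2\brac{f_{\w_k}(\x_k)-f_{\w_k}(\x_\star)} - \enorm{\nabla f_{\w_k}(\x_k)}^2$ is at least this quantity, so
\begin{align*}
  \enorm{\x_{k+1}-\x_\star}^2 \;\le\; \enorm{\x_k-\x_\star}^2 \;-\; \frac{\bigl(f_{I(k)}(\x_k)-f_{I(k)}^\star\bigr)^2}{\enorm{\nabla f_{I(k)}(\x_k)}^2} \;=\; \enorm{\x_k-\x_\star}^2 - \frac{\MG(\x_k)^2}{\enorm{\nabla f_{I(k)}(\x_k)}^2}.
\end{align*}

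Next I would bound the denominator. For the Lipschitz case, $\enorm{\nabla f_{I(k)}(\x_k)}\le G$, so the decrease is at least $\MG(\x_k)^2/G^2$. Telescoping over $k=1,\dots,K$ and using $\enorm{\x_{K+1}-\x_\star}^2\ge 0$ yields $\sum_{k=1}^K \MG(\x_k)^2 \le G^2\enorm{\x_1-\x_\star}^2$, hence $\min_k \MG(\x_k)$ — and in fact, by Cauchy–Schwarz / Jensen, the average — is $O(G\enorm{\x_1-\x_\star}/\sqrt{K})$. To get the bound on $\MG(\bar\x)$ for the \emph{averaged} iterate I would use convexity of each $f_i$: $f_i(\bar\x) - f_i^\star \le \frac1K\sum_k (f_i(\x_k)-f_i^\star) \le \frac1K\sum_k \MG(\x_k)$, and the right side is independent of $i$, so $\MG(\bar\x)\le \frac1K\sum_k\MG(\x_k) \le \frac1K\sqrt{K\sum_k\MG(\x_k)^2} \le G\enorm{\x_1-\x_\star}/\sqrt{K}$ by Cauchy–Schwarz. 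For the $\beta$-smooth case, instead of a uniform gradient bound I would invoke Lemma~\ref{lemma:smooth-gradient-norm}, which gives $\enorm{\nabla f_{I(k)}(\x_k)}^2 \le 2\beta\bigl(f_{I(k)}(\x_k)-f_{I(k)}^\star\bigr) = 2\beta\,\MG(\x_k)$; substituting into the per-step decrease gives $\enorm{\x_{k+1}-\x_\star}^2 \le \enorm{\x_k-\x_\star}^2 - \MG(\x_k)/(2\beta)$, and telescoping yields $\sum_k \MG(\x_k)\le 2\beta\enorm{\x_1-\x_\star}^2$, so again by convexity $\MG(\bar\x)\le \frac1K\sum_k\MG(\x_k) \le 2\beta\enorm{\x_1-\x_\star}^2/K$. (I would double-check the constant here against the claimed $2\beta\enorm{\x_1-\x_\star}/K$ — the natural computation gives a squared distance, so either the statement intends $\enorm{\x_1-\x_\star}^2$ or a slightly different normalization; I would reconcile this in the write-up.)

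The main obstacle — and the only genuinely new idea relative to the classical Polyak analysis — is the comparison step linking the \PAMOO\ weight choice to the active coordinate $I(k)$: one must observe that restricting the weight maximization to one-hot directions recovers exactly a single-objective Polyak step on the worst function, and that this lower bound on the decrease is precisely $\MG(\x_k)^2/\enorm{\nabla f_{I(k)}(\x_k)}^2$, which couples the potential drop to the metric we actually want to control. Everything afterward is routine telescoping plus Jensen/Cauchy–Schwarz. A minor subtlety to handle carefully is the subgradient/degenerate case where $\nabla f_{I(k)}(\x_k)=0$: then $f_{I(k)}$ is already minimized at $\x_k$, so $\MG(\x_k)=0$ and the step is vacuous, which should be noted so the division is well-defined. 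I would also state explicitly that the bounds hold for \emph{every} $\x_\star\in\mC_\star$, hence one may take the one minimizing $\enorm{\x_1-\x_\star}$.
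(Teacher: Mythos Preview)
Your proposal is correct and matches the paper's approach exactly: the one-hot comparison is Lemma~\ref{lemma:conv_PAMOOv1}, and the telescoping with Cauchy--Schwarz (Lipschitz case) or Lemma~\ref{lemma:smooth-gradient-norm} (smooth case) followed by Jensen is precisely Lemma~\ref{lemma:upper_bound_over_trajectory_gap} combined with Lemma~\ref{lemma:worst_case_gap_avg}. Your suspicion about the smooth-case constant is also warranted---the paper's own derivation in \eqref{eq:smooth_general_max_gap} produces $2\beta\enorm{\x_1-\x_\star}^2/K$, so the missing square in the theorem statement appears to be a typo.
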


This result establishes convergence of the MG metric at a rate that is independent of the number of objective functions, representing a substantial improvement over the \EW\ method, which converges at a rate polynomial in the number of objectives (Theorem~\ref{thm:lower_bound_ew_polyak}).

\subsection{Reduction Approach: Maximum-Gap AMOO (\MGAMOO)}\label{subsec:mg-amoo}

In this section, we introduce a meta-algorithm for the AMOO setting that uses a single-objective optimizer in a black-box manner and returns a near-optimal solution for the AMOO problem. Specifically, we design a reduction from AMOO to single-objective optimization. We show that any online learning algorithm with vanishing regret can be used to solve an AMOO instance.

We refer to this algorithmic reduction as Maximum Gap AMOO (\MGAMOO), see Algorithm~\ref{alg:MGAMOO-Reduction}. The algorithm assumes access to a single-objective optimizer (\texttt{SOO})—a procedure that receives the current iterate $\x_k$, a function $f$, and returns an updated iterate $\x_{k+1}$. Similar to \PAMOO, it also assumes access to the optimal values $\{f_i^\star\}_{i \in [m]}$. At each iteration, the function with the largest gap is selected, and \texttt{SOO} is called to produce the next iterate.

This reduction is motivated by the following simple lemma:

% \textcolor{purple}{The following lemma looks like O2B Lemma... Can we say anything smart on it?}

\begin{lemma}[Reduction: AMOO to Single-Objective Optimization]\label{lemma:worst_case_gap_avg}
    Let $I(k) \in \argmax_{i\in[m]} f_i(\x_{k}) - f_i^\star$, and let $\{ \x_k \}_{k=1}^K$ be the optimization trajectory. Any algorithm that returns the average iterate $\bar{\x} = \frac{1}{K} \sum_{k=1}^K \x_k$ guarantees the worst-case max-gap is upper bounded by
    \begin{align}
        \MG(\bar{\x}) := \max_{i\in[m]} f_i(\bar{\x}) - f_i^\star \leq \frac{1}{K}\sum_{k=1}^K \brac{f_{I(k)}(\x_k) - f_{I(k)}^\star}. \label{eq:lemma_2_main_lemma}
    \end{align}
\end{lemma}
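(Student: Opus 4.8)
The statement is an averaging inequality that combines convexity with the definition of the index $I(k)$. The plan is to bound $\MG(\bar{\x})$ from above by first exploiting convexity of each $f_i$ to move the average inside the function, and then invoking the maximality of $I(k)$ at each step. Concretely, fix an arbitrary index $j \in [m]$. By convexity of $f_j$ and Jensen's inequality applied to $\bar{\x} = \frac{1}{K}\sum_{k=1}^K \x_k$, we have $f_j(\bar{\x}) \leq \frac{1}{K}\sum_{k=1}^K f_j(\x_k)$, hence $f_j(\bar{\x}) - f_j^\star \leq \frac{1}{K}\sum_{k=1}^K \brac{f_j(\x_k) - f_j^\star}$.

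\textbf{Main step.} The key observation is that for each iteration $k$, the index $I(k)$ is chosen precisely to maximize the per-function gap at $\x_k$, so $f_j(\x_k) - f_j^\star \leq \max_{i\in[m]}\brac{f_i(\x_k) - f_i^\star} = f_{I(k)}(\x_k) - f_{I(k)}^\star$ for every $j$. Summing this over $k$ and dividing by $K$ gives $\frac{1}{K}\sum_{k=1}^K \brac{f_j(\x_k) - f_j^\star} \leq \frac{1}{K}\sum_{k=1}^K \brac{f_{I(k)}(\x_k) - f_{I(k)}^\star}$. Chaining with the convexity bound yields $f_j(\bar{\x}) - f_j^\star \leq \frac{1}{K}\sum_{k=1}^K \brac{f_{I(k)}(\x_k) - f_{I(k)}^\star}$. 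Since the right-hand side does not depend on $j$, we may take the maximum over $j \in [m]$ on the left, obtaining exactly \eqref{eq:lemma_2_main_lemma}.

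\textbf{Remarks on difficulty.} There is essentially no obstacle here — the lemma is a one-line consequence of Jensen plus the definition of $I(k)$, and the only thing to be careful about is the order of the two bounds (convexity first to reduce to a sum over iterates, then per-iterate maximality), and the fact that the final right-hand side is $j$-independent so that the outer maximum passes through cleanly. One should also note that the lemma as stated makes no use of the optimization dynamics or the alignment assumption at all: it holds for \emph{any} trajectory $\{\x_k\}$ whatsoever, which is exactly why it serves as a clean reduction — the burden of making the right-hand side small is entirely shifted onto the single-objective optimizer \texttt{SOO}, whose job is to control $\sum_k \brac{f_{I(k)}(\x_k) - f_{I(k)}^\star}$ (e.g.\ via a regret bound or a Polyak/GD descent argument), and that is handled separately in the convergence theorems that build on this lemma.
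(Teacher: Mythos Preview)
Your proof is correct and essentially identical to the paper's: both apply Jensen's inequality to pass the average inside $f_j$, then use the per-iterate maximality of $I(k)$ to replace $f_j(\x_k)-f_j^\star$ by $f_{I(k)}(\x_k)-f_{I(k)}^\star$. The only cosmetic difference is that the paper fixes the maximizing index $I_e$ at $\bar{\x}$ up front, whereas you work with an arbitrary $j$ and take the maximum at the end; these are equivalent.
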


\begin{proof}[Proof of Lemma~\ref{lemma:worst_case_gap_avg}]
    Denote $I_e \in \argmax_{i\in[m]} f_i(\bar{\x}) - f_i^\star$, the index of the function with the maximum gap at $\bar{\x}$. Since $I(k) \in \argmax_{i\in[m]} f_i(\x_k) - f_i^\star$, it follows that for every $k$:
    \begin{align*}
        f_{I_e}(\x_k) - f_{I_e}^\star \leq f_{I(k)}(\x_k) - f_{I(k)}^\star.
    \end{align*}
    By convexity of $\{f_i\}_{i \in [m]}$ and Jensen's inequality, we have $f_{I_e}(\bar{\x}) \leq \frac{1}{K} \sum_{k=1}^K f_{I_e}(\x_k)$. Summing over $k$:
    \begin{align*}
        \max_{i\in[m]} f_i(\bar{\x}) - f_i^\star = f_{I_e}(\bar{\x}) - f_{I_e}^\star \leq \frac{1}{K} \sum_{k=1}^K \brac{f_{I_e}(\x_k) - f_{I_e}^\star} \leq \frac{1}{K} \sum_{k=1}^K \brac{f_{I(k)}(\x_k) - f_{I(k)}^\star}.
    \end{align*}
\end{proof}

This result implies that we can minimize the MG of the average iterate by generating a sequence of iterates with low average regret with respect to the functions selected by \MGAMOO, which always chooses the function with the largest current gap. 

A direct corollary of Lemma~\ref{lemma:worst_case_gap_avg} is a reduction from AMOO to online learning~\citep{cesa2006prediction,hazan2016introduction,orabona2019modern}. The online learning literature provides algorithms—such as online GD, online mirror descent, and online Newton step—that achieve sublinear regret on sequences of adaptive adversarially chosen functions \citep{hazan2016introduction}:
\begin{align}
    \mathrm{Regret} (K):= \sup_{\{f_1,\ldots,f_K\} \subseteq \mathcal{F}} \brac{ \sum_{k=1}^K f_k(\x_k) -  \sum_{k=1}^K f_t(\x)} \in o(K), \label{eq:online learning guarantee}
\end{align}
for any bounded convex functions, $\{f_k\}_{k=1}^K$, over $\mathcal{X}$, and for any $\x \in \mathcal{X}$, when $\mathcal{X}$ is a bounded convex set. 
% Actually, any oblivious online learner can satisfy this. 
% This holds since the subgradients of the $\max_i f_i(\x) $ \yecomment{i think it should be the gap here, not just the max (can you try to write it as a proof of the corrollay / write a proposition if its not a direct corollary? think its a bit sloppy and concerned readers would have negative opinion on it.)} lie in the convex-hull of all functions that achieve the maximum pointwise, and the online learner only observes the function itself (rather than the gap). An alternative explanation is that, for any point, independent of the iteration, the online learner receives a deterministic function that depends solely on its predefined gap. Moreover, although the problem $\max_i f_i(\x) $ is not necessarily smooth (even when each individual function is smooth), the online learner can still achieve improved performance under smoothness criteria.

% Note that the largest gap is known in advance for each point. Therefore, the functions are passed from \MGAMOO to the single-objective optimizer \texttt{SOO} in an oblivious manner. Consequently, any oblivious online learner with sub-linear regret suffices for this setting.

Instantiating \MGAMOO\ with such an online learner yields the following guarantee:

\begin{corollary}[Online to Aligned Multi-Objective (O2AMO)]\label{col:online_reduction}
    Let $\mathcal{C}_\star$ a bounded set. Let $\{f_{i}\}_{i\in[m]}$ satisfy the AMOO assumption and be bounded over $\mathcal{X} := Convex Hull \big{\{}\{\x_1, \dots, \x_K \} \bigcup \mathcal{C}_\star \big{\}}$. Instantiating the single-objective optimizer \texttt{SOO} in \MGAMOO\ with an adaptive adversary online learning algorithm with regret $\mathrm{Regret}(K)$, guarantees that $MG(\bar{\x}) \leq \mathrm{Regret}(K) / K$.
\end{corollary}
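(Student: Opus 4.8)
The plan is to invoke Lemma~\ref{lemma:worst_case_gap_avg} to reduce the claim to a bound on the cumulative gap $\sum_{k=1}^K (f_{I(k)}(\x_k) - f_{I(k)}^\star)$, and then to recognize that, along the trajectory generated by \MGAMOO, this sum is exactly a regret expression that the online learner controls. First I would observe that \MGAMOO\ feeds the online learner the sequence of loss functions $f_{I(1)}, f_{I(2)}, \dots, f_{I(K)}$, where each $I(k)$ is chosen adaptively (it depends on $\x_k$, hence on all past plays of the learner). Since each $f_{I(k)}$ is one of the $m$ convex functions $\{f_i\}_{i\in[m]}$, all of which are bounded on the convex set $\mathcal{X}$, this is a legitimate sequence of adaptive adversarial losses for which the online learning guarantee~\eqref{eq:online learning guarantee} applies. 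Then, for any comparator $\x \in \mathcal{X}$,
\[
\sum_{k=1}^K f_{I(k)}(\x_k) - \sum_{k=1}^K f_{I(k)}(\x) \leq \mathrm{Regret}(K).
\]

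Next I would choose the comparator to be a point $\x_\star \in \mathcal{C}_\star$; this is valid since $\mathcal{C}_\star \subseteq \mathcal{X}$ by construction and $\mathcal{C}_\star$ is nonempty and bounded by the AMOO assumption and the hypothesis. For such $\x_\star$ we have $f_{I(k)}(\x_\star) = f_{I(k)}^\star$ for every $k$, because $\x_\star$ simultaneously minimizes all the objectives. Substituting this into the regret bound gives
\[
\sum_{k=1}^K \brac{f_{I(k)}(\x_k) - f_{I(k)}^\star} \leq \mathrm{Regret}(K).
\]
Combining this with Lemma~\ref{lemma:worst_case_gap_avg}, which states $\MG(\bar{\x}) \leq \frac{1}{K}\sum_{k=1}^K (f_{I(k)}(\x_k) - f_{I(k)}^\star)$, yields $\MG(\bar{\x}) \leq \mathrm{Regret}(K)/K$, as desired.

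The only subtle points — none of them genuinely hard — are bookkeeping ones. I need the online learner's regret guarantee to hold against an \emph{adaptively} chosen sequence of losses, which is why the statement explicitly restricts to online learners with adaptive-adversary regret bounds (online GD, online mirror descent, online Newton step all qualify); a guarantee that only holds for oblivious adversaries would not suffice here since $I(k)$ depends on the learner's own iterate $\x_k$. I also need boundedness of the functions over $\mathcal{X}$, and that $\mathcal{X}$ is a bounded convex set containing both the trajectory and $\mathcal{C}_\star$ — both are supplied by hypothesis, and the definition of $\mathcal{X}$ as the convex hull of $\{\x_1,\dots,\x_K\}\cup\mathcal{C}_\star$ is exactly what makes the comparator $\x_\star$ admissible while keeping all iterates in the feasible set. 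There is no real obstacle; the content is entirely in setting up the reduction so that "the function with the largest gap" becomes "the adaptive adversary's choice" and "the optimal value $f_i^\star$" becomes "the loss of the fixed comparator $\x_\star$."
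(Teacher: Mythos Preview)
Your proposal is correct and follows essentially the same approach as the paper's own proof: apply the online learner's adaptive-adversary regret bound to the sequence $\{f_{I(k)}\}_{k=1}^K$, choose the comparator $\x_\star\in\mathcal{C}_\star\subseteq\mathcal{X}$ so that $f_{I(k)}(\x_\star)=f_{I(k)}^\star$, and then invoke Lemma~\ref{lemma:worst_case_gap_avg}. Your write-up is simply more explicit about the adaptive-versus-oblivious distinction and the role of the set $\mathcal{X}$, but there is no substantive difference.
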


% \textcolor{red}{\tiny YE: actually to prove it we should say this algorithm is similar to apply an online learner to $\max_{i\in [m]} f_i(\x) -f_i(\x_\star)$ (since the sub-gradient of $\max_{i} f_i(x)$ is the convex-hull of all functions that achive the maximum pointwise (https://math.stackexchange.com/questions/229025/subgradients-of-function)). Then we can add that $\max_{i\in [m]} f_i(x)$ is not necessarily lipchitz smooth but when analyzing specific algorithms (such as GD or Polyak step size) we can get improved performance for the Lipchitz smooth case.\\
% the thing which is missing is to do the reduction w.r.t. to $\max_i f_i(x)$ (not we still use $f_{I(k)}$ which means we'd need an online learner wrt to adverserial loss. if we work with $max_{i} f_i$ will only require online learner wrt to an oblivious loss function).\\
% much more direct would be to say we apply gradient descent on $\max_i f_i(x) - f_i(\x_\star)$.
% }

\begin{proof}[Proof of Corollary \ref{col:online_reduction}]
    Recall that $\{f_{I(k)}\}_{k=1}^K \subseteq \{f_i\}_{i=1}^m$ are bounded and convex over $\mathcal{X}$, which is bounded and convex itself. Then, by the regret guarantee in \eqref{eq:online learning guarantee}, it holds for any $\x \in \mathcal{X}$ that $\sum_{k=1}^K f_{I(k)}(\x_k) - f_{I(k)}(\x) \leq \mathrm{Regret}(K) \in o(K)$, where $I(k)$ is chosen adversarially based on $x_k$ (see Algorithm~\ref{alg:MGAMOO-Reduction}). Setting $\x = \x_\star \in \mathcal{C}_\star \subseteq \mathcal{X}$ and applying Lemma~\ref{lemma:worst_case_gap_avg} completes the proof.
\end{proof}

This guarantee is analgous to the well-known Online-to-Batch (O2B) conversion. In words, if an online learner achieves sublinear regret, then averaging its iterates over a data sequence yields strong convergence guarantees in the \MGAMOO~setting.

% \textcolor{red}{\tiny explain the reduction to O2AMO better.}

We also analyze two additional and widely used single-objective optimization algorithms—GD and the Polyak step-size design method—and show that both can be applied in the smooth and Lipchitz settings. The Polyak step-size design and GD in the smooth case require different specialized analysis to bound the term in the right hand side of \eqref{eq:lemma_2_main_lemma}. The analysis of these algorithms and settings is not included in standard online learning analysis. The result for GD in the Lipchitz case is a direct application of standard online learning analysis. 

% \yecomment{say something about the fact the G-Lipschitz case is a corollary of online learning, but the smooth case required adapting techniques from optimization literature?} \bkcomment{Yes. I think it will be better if we mention the non-smooth case as a direct result from the reduction. Also, to mention that we can't give a better result in the smooth case. However, we showed that we got a better result when we used MG PAMOO.}

% \textcolor{red}{**Add here the result for \MGAMOO: GD (Smooth+Lipchitz), Polyak step size method (Smooth+Lipchitz)**}.

% \begin{minipage}{0.49\textwidth}
% \begin{algorithm}[H]
%  \caption{\PAMOO}
%  \label{alg:PAMOO}    
% \begin{algorithmic}[1]
%  \STATE \textbf{inputs:} $\{ f_i(\x_k) \}_{i=1}^m$
%  \STATE $\Delta_{\x} := \left[  \Delta_{\x,1}\dots \Delta_{\x,m} \right]$\\ $\Delta_{\x,i}:=f_{i}(\x_k)-f_{I}^\star$
%  \STATE $\J_\x := \left[ \nabla f_1(\x) \dots \nabla f_m(\x)  \right] \in \reals^{n \times m}$
%   \STATE $\w\in \arg\max_{\w\in \reals^m_+} 2\w^\top \Delta_\x - \w^\top \J_\x^\top\J_\x \w$
%  \STATE \textbf{return:} $\w$
% \end{algorithmic}
% \end{algorithm}
% \end{minipage}
% \hfill
% \begin{minipage}{0.48\textwidth}
% \vspace{-0.5cm}
% \begin{algorithm}[H]
%  \caption{\MGPAMOO}
%  \label{alg:PAMOOv2} 
% \begin{algorithmic}[1]
%  \STATE \textbf{inputs:} $\{ f_i(\x_k) \}_{i=1}^m$
%  \STATE $I \in \argmax_{i\in[m]} f_i(\x_{k}) - f_i^\star$
%  \STATE Set $\eta = \frac{f_{I}(\x_{k}) - f_{I}^\star}{\enorm{\nabla f_{I}(\x_k)}^2}$
%  \STATE \textbf{return:} $\w = \eta \mathbf{e}_{I}$
% \end{algorithmic}
% \end{algorithm}
% \end{minipage}

\begin{restatable}[\MGAMOO: Gradient-Descent and Polyak step size method]{theorem}{MGPAmooConvergence}\label{thm:MG_AMOO} Assume that \emph{Polyak} or \emph{GD} methods are used as the \texttt{SOO}. \MGAMOO~(Algorithm~\ref{alg:MGAMOO-Reduction}) guarantees that after $K$ iterations the worst-case gap is upper bounded by:
    \begin{align*}
        (1) \quad & \MG(\bar{\x}) := \max_{i\in[m]} f_i(\bar{\x}) - f_i^\star \leq \frac{3}{2}\frac{ G \enorm{\x_1 - \x_\star} }{\sqrt{K}} , \quad \text{when $\{ f_i(\x) \}_{i\in [m]}$ are $G$-Lipschitz.} \\
        (2) \quad & \MG(\bar{\x}) := \max_{i\in[m]} f_i(\bar{\x}) - f_i^\star \leq \frac{2\beta \enorm{\x_1 - \x_\star} }{K}, \quad \text{ when $\{ f_i(\x) \}_{i\in [m]}$ are $\beta$-smooth.}
    \end{align*}
\end{restatable}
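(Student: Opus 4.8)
The plan is to prove both bounds by using Lemma~\ref{lemma:worst_case_gap_avg} to reduce control of $\MG(\bar{\x})$ to control of the running sum $S_K := \sum_{k=1}^K \brac{f_{I(k)}(\x_k) - f_{I(k)}^\star}$: the theorem follows as soon as we show $S_K = \mathcal{O}\brac{G\enorm{\x_1-\x_\star}\sqrt{K}}$ in the Lipschitz case and $S_K = \mathcal{O}\brac{\beta\enorm{\x_1-\x_\star}^2}$ in the smooth case, since then $\MG(\bar{\x}) \le S_K/K$. The one structural ingredient driving every estimate is the alignment assumption: any $\x_\star \in \mathcal{C}_\star$ minimizes \emph{every} $f_i$, so $f_{I(k)}(\x_\star) = f_{I(k)}^\star$ for all $k$ no matter which index \MGAMOO\ picks. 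Hence a single fixed comparator $\x_\star$ (take the element of $\mathcal{C}_\star$ closest to $\x_1$) can be used in a telescoping distance argument across all iterations, even though the active function changes every step.

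\textbf{Lipschitz case.} When \texttt{SOO} is the Polyak step size, the step on $f_{I(k)}$ together with convexity and comparator $\x_\star$ yields the standard Polyak inequality $\enorm{\x_{k+1}-\x_\star}^2 \le \enorm{\x_k-\x_\star}^2 - \brac{f_{I(k)}(\x_k)-f_{I(k)}^\star}^2/\enorm{\nabla f_{I(k)}(\x_k)}^2$; bounding the denominator by $G^2$ and telescoping gives $\sum_k \brac{f_{I(k)}(\x_k)-f_{I(k)}^\star}^2 \le G^2\enorm{\x_1-\x_\star}^2$, and Cauchy--Schwarz converts this into $S_K \le G\enorm{\x_1-\x_\star}\sqrt{K}$. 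When \texttt{SOO} is GD with a standard schedule such as $\eta_k = \enorm{\x_1-\x_\star}/(G\sqrt{k})$, the iterates are exactly online gradient descent run against the adaptively chosen loss sequence $f_{I(1)},\dots,f_{I(K)}$ with comparator $\x_\star$, so the classical online-GD regret bound gives $S_K = \sum_k \brac{f_{I(k)}(\x_k)-f_{I(k)}(\x_\star)} \le \tfrac{3}{2}G\enorm{\x_1-\x_\star}\sqrt{K}$. Dividing by $K$ and applying Lemma~\ref{lemma:worst_case_gap_avg} gives part (1), the constant $\tfrac{3}{2}$ being the larger of the two sub-case constants.

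\textbf{Smooth case (the hard part).} Here a single GD or Polyak step on $f_{I(k)}$ does not visibly shrink $f_{I(k)}(\x_k)-f_{I(k)}^\star$, and the naive bookkeeping---``convexity gives $f_{I(k)}(\x_k)-f_{I(k)}^\star \le \inner{\nabla f_{I(k)}(\x_k)}{\x_k-\x_\star}$'' combined with ``the descent lemma gives $\tfrac{1}{2\beta}\enorm{\nabla f_{I(k)}(\x_k)}^2 \le f_{I(k)}(\x_k)-f_{I(k)}(\x_{k+1})$''---telescopes only to the vacuous $0 \le \enorm{\x_1-\x_\star}^2$. The fix is a sharper per-step inequality. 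For GD with step $1/\beta$, the standard smoothness--convexity inequality at the minimizer, $f_{I(k)}^\star \ge f_{I(k)}(\x_k) + \inner{\nabla f_{I(k)}(\x_k)}{\x_\star-\x_k} + \tfrac{1}{2\beta}\enorm{\nabla f_{I(k)}(\x_k)}^2$, when substituted into the expansion of $\enorm{\x_{k+1}-\x_\star}^2$ makes the gradient-norm terms cancel, leaving $\tfrac{2}{\beta}\brac{f_{I(k)}(\x_k)-f_{I(k)}^\star} \le \enorm{\x_k-\x_\star}^2 - \enorm{\x_{k+1}-\x_\star}^2$. For the Polyak step size, combine the Polyak inequality above with Lemma~\ref{lemma:smooth-gradient-norm}, $\enorm{\nabla f_{I(k)}(\x_k)}^2 \le 2\beta\brac{f_{I(k)}(\x_k)-f_{I(k)}^\star}$, which shows the per-step decrease of $\enorm{\cdot-\x_\star}^2$ is at least $\tfrac{1}{2\beta}\brac{f_{I(k)}(\x_k)-f_{I(k)}^\star}$. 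In either sub-case the per-step progress is proportional to the current gap, so telescoping yields $S_K \le 2\beta\enorm{\x_1-\x_\star}^2$ (up to the sub-case constant), and dividing by $K$ via Lemma~\ref{lemma:worst_case_gap_avg} gives part (2).

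I expect the smooth estimate to be the main obstacle: the crux is realizing that the elementary ``convexity $+$ descent lemma'' argument is too lossy and that one must instead invoke co-coercivity (respectively Lemma~\ref{lemma:smooth-gradient-norm}) so that each step's contraction of $\enorm{\cdot-\x_\star}^2$ is lower bounded by a constant multiple of the \emph{current} gap $f_{I(k)}(\x_k)-f_{I(k)}^\star$---this is precisely what makes the telescoped sum a constant rather than vacuous, and it is only legitimate because alignment supplies a single comparator $\x_\star$ that is optimal for every $f_{I(k)}$. Once that viewpoint (online learning against the adaptive sequence, or Polyak against a common comparator) is in place, the Lipschitz case is routine.
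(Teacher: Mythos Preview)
Your proposal is correct and follows essentially the same route as the paper: reduce via Lemma~\ref{lemma:worst_case_gap_avg}, then for each \texttt{SOO}/regularity combination bound $\sum_k \brac{f_{I(k)}(\x_k)-f_{I(k)}^\star}$ exactly as the paper does (Polyak inequality plus Cauchy--Schwarz for Lipschitz--Polyak, the online-GD regret bound for Lipschitz--GD, and Polyak plus Lemma~\ref{lemma:smooth-gradient-norm} for smooth--Polyak). The only deviation is in the smooth--GD sub-case: you take step $1/\beta$ and invoke the co-coercivity inequality at $\x_\star$ (valid since alignment forces $\nabla f_{I(k)}(\x_\star)=0$), whereas the paper takes step $1/(2\beta)$ and uses only the weaker Lemma~\ref{lemma:smooth-gradient-norm}; your variant is just as short and actually yields a smaller constant, so either argument proves the stated bound.
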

The proof of the theorem is in Appendix~\ref{app:MG_AMOO}.

\subsection{Computational Complexity}\label{sec:computational}
% \yecomment{discussion on the computational compexity of \PAMOO, \MGAMOO, and \EW.}

% Among the three algorithmic approaches considered in this work \PAMOO\ requires the most compute. The Jacobian calculation requires $O(nm^2)$ operations where $n$ is the dimension of parameter space and $m$ is the number of functions. Additionally, it requires to solve a quadratic constraint optimization problem. Even though there are approaches to reduce the computational burden the need to calculate gradients with respect to all functions and solving a convex optimization problem per-iteration adds non-negligible amount of compute and design complexity.

% Alternatively, both \EW\ and \MGAMOO\ methods require only $O(m)$ operations: the first to sum the functions, and the latter to calculate the function with maximum gap. The computational complexity of both methods also depends on the computational complexity of the \texttt{SOO} method it uses. Since both allow for the application of GD or projected gradient-descent both can be very cheap to apply.

Among the three algorithmic approaches considered in this work, \PAMOO\ requires the highest computational cost. Specifically, computing the Jacobian incurs a cost of $O(nm^2)$ operations, where $n$ is the dimension of the parameter space and $m$ is the number of objective functions. Additionally, each iteration requires solving a quadratic constrained optimization problem. Although certain techniques may reduce the overall burden, the need to compute gradients with respect to all functions and solve a convex optimization problem at each step introduces significant computational and implementation complexity.

In contrast, both the \EW\ and \MGAMOO\ methods require only $O(m)$ operations per iteration: the former to sum the functions, and the latter to identify the function with the maximum gap. The overall computational cost of these methods also depends on the complexity of the single-objective optimizer (\texttt{SOO}) employed. Since both \EW\ and \MGAMOO\ can leverage simple optimizers such as GD or projected GD, they are often substantially cheaper to apply in practice.

\section{Relaxing AMOO Assumption: $\epsilon$-Approximate AMOO}

The alignment assumption requires that all objective functions share a common optimal solution. However, in practice, we may only expect the existence of a near-optimal solution that approximately minimizes all functions. To address such scenarios, we seek algorithms that are robust to this relaxation of the alignment setting.

To formalize this approximate scenario, we define the set of $\epsilon$-approximate solutions:
\begin{align}
    \mC_\epsilon = \left\{ \x \in \reals^n \mid f_i(\x) - f_i^\star \leq \epsilon \quad \forall i \in [m] \right\}. \label{eq:Ce_set}
\end{align}
We assume that $\mC_\epsilon$ is non-empty and contains at least one element. This assumption generalizes the alignment condition in AMOO; indeed, setting $\epsilon = 0$ recovers the original AMOO setting.

The analytical tools developed in this work extend naturally to the $\epsilon$-approximate AMOO setting. In particular, we show that all the proposed algorithms remain applicable, recovering solutions that are $\epsilon$-near-optimal across all objectives. Below we provide an informal statement of the result, and refer the reader to the appendix for complete proofs.

\begin{restatable}[(Informal) Convergence in Convex $\epsilon$-Approximate AMOO]{theorem}{ConvexAppAMOO}
\label{thm:convex_app_convex_amoo} 
Let $\epsilon > 0$ and assume $\mC_\epsilon$ is non-empty. Then, \PAMOO\ and \MGAMOO, instantiated with an online learner, GD, or the Polyak step-size method, output a solution $\bar{\x}$ satisfying asymptotically $\MG(\bar{\x}) \leq \epsilon$. 
% \yecomment{verify: Further, the convergence rate is independent of $\epsilon$.}
\end{restatable}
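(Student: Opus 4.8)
\textbf{Proof proposal for Theorem~\ref{thm:convex_app_convex_amoo}.}
The plan is to revisit each of the three convergence proofs from the exact AMOO setting (Theorems~\ref{thm:original_pamoo_convergence} and~\ref{thm:MG_AMOO}, and Corollary~\ref{col:online_reduction}) and track how the error term $\epsilon$ propagates when the comparator point $\x_\star$ is replaced by an arbitrary $\x_\epsilon \in \mC_\epsilon$, rather than a point in $\mC_\star$. The key structural observation is that in all three analyses the only place the alignment assumption is used is in steps of the form ``$f_i(\x_\star) - f_i^\star = 0$''; under the $\epsilon$-approximate assumption this becomes ``$f_i(\x_\epsilon) - f_i^\star \leq \epsilon$'', and since these quantities are added $K$ times and then divided by $K$ (for Lipschitz/online) or they enter once (for smooth), the net effect is a clean additive $+\epsilon$ (up to the multiplicative constants already present) on top of the original bound.

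Concretely, I would proceed in three parallel steps. First, for \MGAMOO\ with an online learner: in the proof of Corollary~\ref{col:online_reduction} we set $\x = \x_\epsilon \in \mC_\epsilon$ instead of $\x_\star$ in the regret inequality $\sum_{k} f_{I(k)}(\x_k) - f_{I(k)}(\x) \leq \mathrm{Regret}(K)$, so that $\sum_k (f_{I(k)}(\x_k) - f_{I(k)}^\star) \leq \mathrm{Regret}(K) + \sum_k (f_{I(k)}(\x_\epsilon) - f_{I(k)}^\star) \leq \mathrm{Regret}(K) + K\epsilon$; dividing by $K$ and invoking Lemma~\ref{lemma:worst_case_gap_avg} gives $\MG(\bar\x) \leq \mathrm{Regret}(K)/K + \epsilon$, which tends to $\epsilon$ since $\mathrm{Regret}(K) \in o(K)$. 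Second, for \MGAMOO\ with GD or Polyak step size (Theorem~\ref{thm:MG_AMOO}): the telescoping/potential argument that bounds $\frac{1}{K}\sum_k (f_{I(k)}(\x_k) - f_{I(k)}^\star)$ is run against $\x_\epsilon$; wherever the old proof used $f_{I(k)}(\x_\star) = f_{I(k)}^\star$, one now carries a residual $f_{I(k)}(\x_\epsilon) - f_{I(k)}^\star \leq \epsilon$, producing an extra $\epsilon$ (Lipschitz case, after averaging) or an extra term that vanishes relative to the $1/K$ rate in the smooth case since $\enorm{\nabla f_{I(k)}(\x_\epsilon)}^2 \leq 2\beta\epsilon$ by Lemma~\ref{lemma:smooth-gradient-norm}. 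Third, for \PAMOO\ (Theorem~\ref{thm:original_pamoo_convergence}): the analysis hinges on the one-step inequality~\eqref{eq:PAMOO_main_lower_bound} evaluated at $\x = \x_\epsilon$ and the fact that $\w_k$ maximizes the decrease over $\w \in \reals_+^m$; here $2(f_{\w}(\x_k) - f_{\w}(\x_\epsilon)) \geq 2(f_{\w}(\x_k) - f_{\w}^\star) - 2\epsilon\|\w\|_1$, and one checks that the choice of $\w_k$ still yields a useful descent, telescoping to a bound of the stated $\epsilon$-offset form after dividing by $K$ (Lipschitz) or by $K$ with the smoothness refinement.

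The main obstacle I anticipate is in the \PAMOO\ case: unlike \MGAMOO, \PAMOO\ optimizes over the unnormalized weight cone $\reals_+^m$, and the residual penalty $\epsilon\|\w\|_1$ scales with the magnitude of the chosen weights, which is \emph{not} a priori bounded. The fix is to observe that the objective $2\w^\top\Delta_k - \w^\top \J_k^\top\J_k\w$ is a concave quadratic in $\w$, so its maximizer has norm controlled by $\|\Delta_k\|$ and the geometry of $\J_k$; combined with the bound $\|\Delta_k\|_\infty \leq$ (something controlled by Lipschitzness/smoothness and the distance to $\mC_\epsilon$), one can show $\|\w_k\|_1$ stays bounded along the trajectory, so the $\epsilon\|\w_k\|_1$ term contributes only an $O(\epsilon)$ (not growing) correction. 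Alternatively — and this is the cleaner route for an informal statement — one can restrict the $\arg\max$ to $\w \in \Delta_m$ (or note that the existing proof already effectively does, after normalization), eliminating the issue entirely at the cost of constants. Either way the conclusion $\MG(\bar\x) \to \epsilon$ as $K \to \infty$ follows, matching the informal statement; I would relegate the precise bookkeeping of constants and the boundedness-of-weights lemma to the appendix.
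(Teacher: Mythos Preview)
Your treatment of the online-learner instantiation is exactly what the paper does, and your GD-smooth sketch is essentially right too (though the extra term does not ``vanish relative to the $1/K$ rate'' --- it contributes a constant $+2\epsilon$ after averaging, which is fine for the statement but is not asymptotically negligible).

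The real gap is in the Polyak and \PAMOO\ cases. You propose to analyze the \emph{unmodified} algorithms against the comparator $\x_\epsilon$ and absorb the residual. But for unmodified Polyak with $\eta_k = (f_{I(k)}(\x_k) - f_{I(k)}^\star)/\enorm{\nabla f_{I(k)}(\x_k)}^2$, the one-step inequality against $\x_\epsilon$ picks up an extra $+2\epsilon\,\eta_k$ term, and $\sum_k \eta_k$ is not a priori bounded (e.g.\ for $f(x)=x^2$ the Polyak step size is the constant $1/4$, so the sum grows linearly in $K$). The same issue is precisely the $\epsilon\enorm{\w_k}_1$ penalty you flagged for \PAMOO; your proposed fixes --- bounding $\enorm{\w_k}_1$ via the quadratic structure, or restricting to the simplex --- are plausible directions but neither is what the paper does, and the first would require nontrivial additional work since the maximizer's norm depends on the conditioning of $\J_k^\top\J_k$, which is not controlled by the assumptions.

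The paper's route is cleaner and different in kind: it \emph{modifies the algorithms} to incorporate $\epsilon$. For \MGAMOO\ with Polyak it uses the shifted step $\eta_k = (f_{I(k)}(\x_k) - f_{I(k)}^\star - \epsilon)/\enorm{\nabla f_{I(k)}(\x_k)}^2$ together with a stopping rule once the max gap drops below $\epsilon$; for \PAMOO\ it replaces $\Delta_k$ by the vector with entries $f_i(\x_k)-f_i^\star-\epsilon$ in the quadratic subproblem. With these modifications the one-step descent becomes $\enorm{\x_{k+1}-\x_\epsilon}^2 \le \enorm{\x_k - \x_\epsilon}^2 - (f_{I(k)}(\x_k) - f_{I(k)}^\star - \epsilon)^2/\enorm{\nabla f_{I(k)}(\x_k)}^2$, and the remainder of the argument is identical to the exact case with the $\epsilon$ offset carried through. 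So the ``mild changes'' alluded to in the informal statement are algorithmic, not purely analytic --- a point your proposal misses for Polyak and handles by a different (and not fully justified) mechanism for \PAMOO.
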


This result implies that both \PAMOO\ and \MGAMOO\ are applicable in the $\epsilon$-approximate AMOO setting. Notably, this substantially improves upon the results of~\citet{efroni2025aligned}: their results are applicable in the approximate case only if $\epsilon \leq \epsilon_0$, with $\epsilon_0$ depending on structural parameters such as smoothness and strong convexity. In contrast, our analysis removes this restriction and establishes that both \PAMOO\ and \MGAMOO\ can recover an $\epsilon$-near-optimal solution for any value of $\epsilon$.

\section{Experiments} \label{sec:experiments}

In this section, we tested the algorithms \EW, \PAMOO\ and \MGAMOO\ the reduction approach we introduced in this work. We experimented with an additional variation of \MGAMOO\ with weight averaging or momentum, as we refer to it. That is, we calculate the new weight vector $\w_{\mathrm{new}}$ according to the \MGAMOO\ update rule and update the weight by $\w_{k+1}=(1-\beta_{\mathrm{mom}}) \w_k + \beta_{\mathrm{mom}} \w_{\mathrm{new}}.$ In all experiments, we set $\beta_{\mathrm{mom}}=0.95$ similar to the default setting of other optimizers.

\begin{table}[t]
  \centering
  \begin{tabular}{lcc}
    \toprule
    \textbf{Algorithm} & \textbf{SGD [iter/time]} $\uparrow$  & \textbf{Adam [iter/time]} $\uparrow$ \\
    \midrule
    \EW & 1 & 1 \\
    \PAMOO & 0.07 & 0.08 \\
    \MGAMOO\ w/ Polyak & 0.97 & 0.96 \\
    \MGAMOO\ w/ GD & 0.84 & 0.85 \\
    \bottomrule
  \end{tabular}
  \vspace{0.1cm}
  \caption{Average iterations per time across problems (P1)-(P3) relative to \EW. \MGAMOO\ and \EW\ running time are substantially better compared to \PAMOO.}
  \label{tab:running_time}
\end{table}

Similar to \citet{efroni2025aligned}, in the learning problem we address, one network is tasked with replicating the outputs of another fixed network. The fixed network, characterized by parameters $\theta_\star$, is represented as $h_{\theta_\star}: \mathbb{R}^{d_i} \rightarrow \mathbb{R}^{d_o}$. The second network, with parameters $\theta$, is denoted as $h_{\theta}: \mathbb{R}^{d_i} \rightarrow \mathbb{R}^{d_o}$. Both networks are 2-layer neural networks utilizing ReLU activation functions and consist of 512 hidden units.

We draw data from a uniform distribution $\mathcal{D}=\{ \x_i \}_{i}$ where $\x_i\in \mathrm{Uniform}([-1,1]^{d_i})$. We consider three loss functions:
\begin{align*}
    % \small
    &\forall i \in [3]: \quad f_i(\theta) \!=\! \frac{1}{|\mathcal{D}|}\!\!\sum_{\x\in \mathcal{D}}\!\!\brac{(h_\theta(\x)\! -\! h_{\theta_\star}(\x)-\epsilon_i)^\top \H_i (h_\theta(\x)\! -\! h_{\theta_\star}(\x)) -\epsilon_i}^{\alpha_i}\!,
\end{align*}
where $\H_i\in \mathbb{R}^{d_o\times d_o}$  is a positive definite matrix and $\alpha_i\geq 1$.

We choose 3 optimization problems (P1)-(P3) that demonstrate the behavior and robustness of our method, problems become progressively harder.
Problem (P1) is the switching example of~\citet{efroni2025aligned}. We set $\H_i=\I$, and $\alpha_i\in \{1,1.5,2\}$ and $\epsilon_i=0$. This implies the alignment assumption is satisfied. For such a choice $f_1$ has larger curvature for large losses whereas $f_2$ and $f_3$ have larger curvature  for small losses. Hence, we expect the the multi-objective feedback can allow the optimizer to focus on different functions in different parts of the feature space. We demonstrate the switching of weights empirically in \cref{app:weights}.  
Problems (P2) is similar to (P1), while setting $\epsilon_i\in \{0,0.05,-0.05\}$, which tests the setting in which the functions are not perfectly aligned.
Problem (P3) additionally tests the performance of the algorithms for scenario $f_1,f_2$ and $f_3$ are not strongly convex in the network outputs, namely, when the Hessian is ill-conditioned. We implement this by setting the problem dimension $d_i=100$, the Hessian to $\H_i=0.5\cdot\mathrm{Diag}(u_1+1,\ldots,(u_{90}+1)10^{-3}, \ldots, (u_{100}+1)10^{-3})$, with $\epsilon_i\in \{0,0.01,-0.01\}$. Our experiment is inspired by problem P1 tested in~\citep{schneider2019deepobs,chaudhari2019entropy}.

We used \EW, \PAMOO\ and \MGAMOO\ as the algorithms that choose the weight vector at each iteration $\w_k$, and applied SGD or ADAM optimizers to the weighted loss by $\w_k$. Figure~\ref{fig:main} depicts the convergence of the algorithms. As observed, \PAMOO, which requires more compute compared to \MGAMOO\ and \EW\ achieves the best performance, and \MGAMOO\ is substantially better than \EW, while suffering with no degradation in its running time (see Table~\ref{tab:running_time}). Additionally, it is evident the algorithms are also robust to small violations in the alignment assumption: the algorithms consistently converge on problems (P2) for instances of the $\epsilon$-approximate AMOO setting.

% \subsection{}

% \yecomment{present comparsion of running times of algorithms}
% \yecomment{present few problems including the approximate case}
% \begin{figure}[H]
% \centering
% \begin{minipage}[b]{0.24\textwidth}
%   \includegraphics[width=\linewidth]{fig1.png}
%   \caption*{(a)}
% \end{minipage}
% \hfill
% \begin{minipage}[b]{0.24\textwidth}
%   \includegraphics[width=\linewidth]{fig2.png}
%   \caption*{(b)}
% \end{minipage}
% \hfill
% \begin{minipage}[b]{0.24\textwidth}
%   \includegraphics[width=\linewidth]{fig3.png}
%   \caption*{(c)}
% \end{minipage}
% \hfill
% \begin{minipage}[b]{0.24\textwidth}
%   \includegraphics[width=\linewidth]{fig4.png}
%   \caption*{(d)}
% \end{minipage}
%   \caption{Maximum-Gap versus iterates for different algorithms.}
%   \label{fig:toy_experiments}
% \end{figure}

\begin{figure}[t]
  \centering
  \begin{tabular}{ccc}
    \begin{subfigure}{0.3\textwidth}
      \centering
      \includegraphics[width=\linewidth]{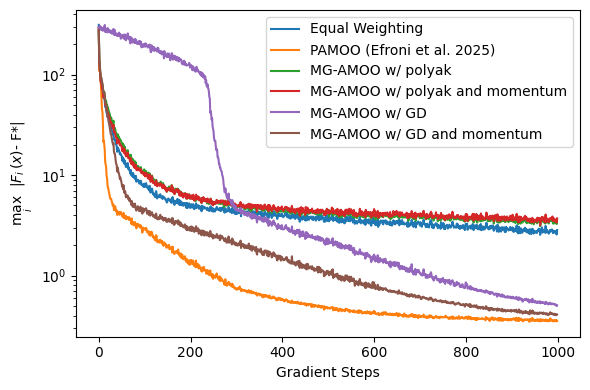}
      \caption{(P1) w/ SGD backend}
      \label{fig:sub1}
    \end{subfigure} &
    \begin{subfigure}{0.3\textwidth}
      \centering
      \includegraphics[width=\linewidth]{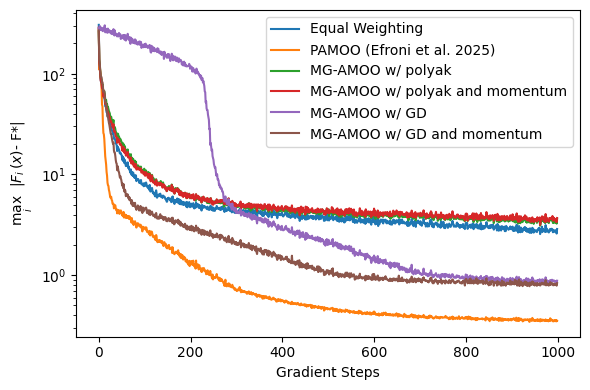}
      \caption{ (P2) w/ SGD backend}
      \label{fig:sub2}
    \end{subfigure} &
    \begin{subfigure}{0.3\textwidth}
      \centering
      \includegraphics[width=\linewidth]{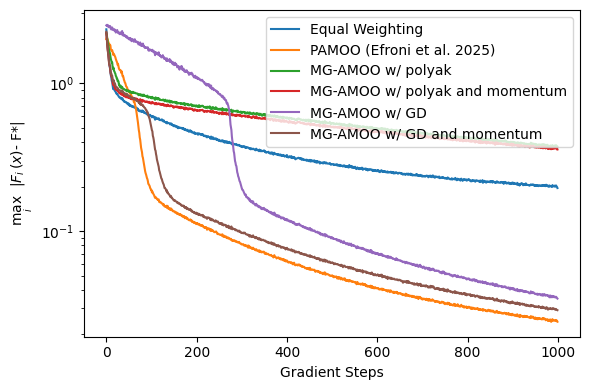}
      \caption{(P3) w/ SGD backend}
      \label{fig:sub3}
    \end{subfigure} \\
    \begin{subfigure}{0.3\textwidth}
      \centering
      \includegraphics[width=\linewidth]{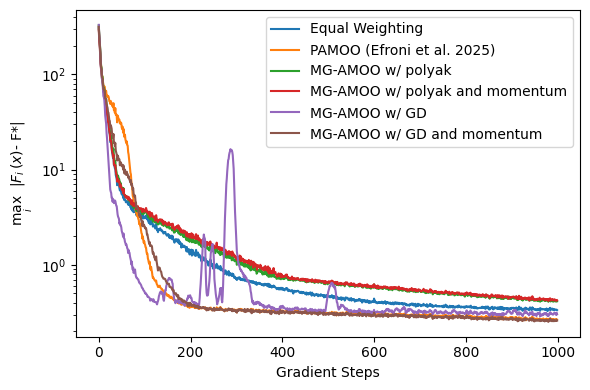}
      \caption{(P1) w/ Adam backend}
      \label{fig:sub4}
    \end{subfigure} &
    \begin{subfigure}{0.3\textwidth}
      \centering
      \includegraphics[width=\linewidth]{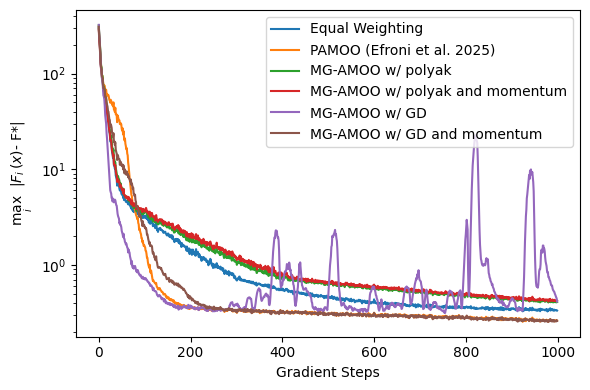}
      \caption{ (P2) w/ Adam backend}
      \label{fig:sub5}
    \end{subfigure} &
    \begin{subfigure}{0.3\textwidth}
      \centering
      \includegraphics[width=\linewidth]{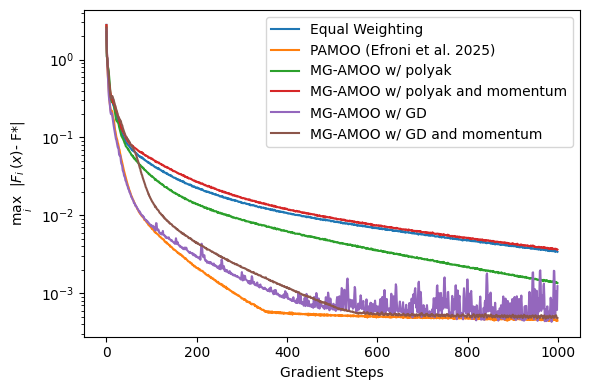}
      \caption{(P3) w/ Adam backend}
      \label{fig:sub6}
    \end{subfigure}
  \end{tabular}
  \caption{Performance of different AMOO algorithms with SGD (top) and Adam (bottom) backend. We show the convergence properties in three increasingly difficult problems: our switching problem (left), the switching problem when the optimum is not exactly alighned (middle), and when the problem is additionally ill-conditioned (right). When we augment our algorithm with momentum it can almost recover the performance of \texttt{PAMOO} at a fraction of the computational cost to run it. }
  \label{fig:main}
\end{figure}

\section{Conclusions, Limitations \& Future Work} 

% We studied the convex AMOO setting, introducing the MG metric and new analytical tools that eliminate the need for strong convexity. We proposed \MGAMOO, a simple meta-algorithm that makes use of standard single-objective optimizers, and proved convergence guarantees for its variants, as well as convergence guarantees for the more compute-intensive \PAMOO\ algorithm. We also established a non-asymptotic lower bound for \EW, showing its performance deteriorates with more objectives. Additionally, we showed both \MGAMOO\ and \PAMOO\ are robust to the alignment assumption. Experiments demonstrate that \MGAMOO\ converges faster than \EW\ while avoiding \PAMOO’s high computational cost.

% Future work can complement the limitations of this work. Our nor past work have studied the stochastic and non-convex settings, which are important to investigate for practical applications. Further, future work should focus on conducting large-scale experiments on deep learning, reinforcement learning, and LLMs with multiple rewards. Lastly, we conjecture that AMOO algorithms  can also be of use for general multi-objective optimization. This can be achieved by first finding a strong common solution via AMOO optimizer, and afterwards fine-tune the solution to achieve specific trade-offs. This perspective offers a new algorithmic lens for multi-objective optimization, with potential to impact on modern machine learning applications in general.

We studied the convex AMOO setting, introducing the MG metric and new analytical tools that remove the need for strong convexity. We proposed \MGAMOO, a simple meta-algorithm that leverages standard single-objective optimizers, and established convergence guarantees for its variants, as well as for the more compute-intensive \PAMOO. We also derived a non-asymptotic lower bound for \EW, showing a suprising disadvantage of its: its performance may degrade as the number of objectives increases. Additionally, we showed that both \MGAMOO\ and \PAMOO\ remain robust even when the alignment assumption is relaxed. Experiments confirm that \MGAMOO\ achieves faster convergence than \EW\ without incurring the high computational cost of \PAMOO.

Future work can address the limitations of this study. Neither our work nor prior efforts have explored the stochastic or non-convex AMOOO settings, which are important for practical applications. Further research should also include large-scale experiments on computer vision, reinforcement learning, or LLMs with multiple reward signals. We conjecture that AMOO algorithms may benefit general multi-objective optimization by first identifying a strong common solution, followed by fine-tuning to satisfy specific trade-offs. This approach offers a new algorithmic perspective with potential impact on a wide range of machine learning applications in the future.

\bibliographystyle{plainnat}
\bibliography{citation}

\newpage
\appendix

\section{Experiments}

In this section we provide additional details on the experiments. We also share a Python notebook with  implementations of \EW, \PAMOO, and variations of \MGAMOO\, the meta-algorithm we introduce in this work.

\paragraph{Dataset.}
We sample 1000 batches of 1000 points from an independent uniform distribution, $\x_i \sim \mathrm{Uniform}([-1,1]^{20})$. The target data is generated using a randomly initialized network defined as $t(x_i) = h_{\theta_\star}(x_i) + 10$. The output dimension of the target is $100$.

\paragraph{Network architecture.}
Both the ground truth and target networks share the same architecture: two-layer neural networks with 512 hidden units and ReLU activations. Each network outputs a vector in $\mathbb{R}^7$.

\paragraph{Training.}
For both tasks, we set the SGD learning rate to $0.0005$ and the ADAM learning rate to $0.002$, for \EW, \PAMOO\ and \MGAMOO\ variations of calculating the weight vector.

\paragraph{Implementation details \MGAMOO.}
%Describe the algorihtm
We implemented the \MGAMOO\ variations by calculating the function with maximum gap in each iteration $\w_{\mathrm{new}}(k)\in \arg\max_{i\in [m]} f_i(\x_k) - f_i^\star$. When using momentum we set the weight vector on the $k^{\mathrm{th}}$ iteration as
\begin{align*}
    \w_k = (1-\beta_{\mathrm{mom}}) \w_{k-1} + \beta_{\mathrm{mom}} \w_{\mathrm{new}}(k), 
\end{align*}
where $\beta_{\mathrm{mom}}=0.95$. Otherwise, with momentum, we set
\begin{align*}
    \w_k = \w_{\mathrm{new}}(k).
\end{align*}
Finally, the loss at the $k^{\mathrm{th}}$ iteration is being set as $f_{\w_k}(\x)=\sum_{i\in[m]}w_i f_i(\x)$.

%Describe the Polyak variation
When using the Polyak variation of \MGAMOO\ we also scale the loss by the Polyak step-size $\eta_k =\frac{f_{\w_k}(\x_k) - f_{\w_k}^\star} {\enorm{\nabla f_{\w_k}(\x_k)}^2}$, namely, we apply an optimization step with either SGD or ADAM optimizers on $\eta_k f_{\w_k}(\x)$. For the GD variation of \MGAMOO\ we do not perform such additional scaling.

%Describe the GD variation

\paragraph{General parameters for \PAMOO.} We use the implementation of~\citet{efroni2025aligned} and use the projected gradient-descent based approach to update the weight vector.

\paragraph{Computational Resources.} Our experiments were run on CPUs and can be easily replicated. The average running time of the experiments is between 0.1-2 minutes. 

\paragraph{Additional Plots.}\label{app:weights} We include additional experimental results that depicts the iteration of the weight vector as a function of the iteration (see  Figure~\ref{fig:weights_main}). These show either a switching behavior--as the iterates converge the weight vector switches between different functions-- or selection behavior--where the algorithms focus on the more informative function. Further, it is evident that the momentum term that supports weight averaging improves the stability of the weight vector. 

\begin{figure}[t]
  \centering
  \begin{tabular}{ccc}
    \begin{subfigure}{0.28\textwidth}
      \centering
      \includegraphics[width=\linewidth]{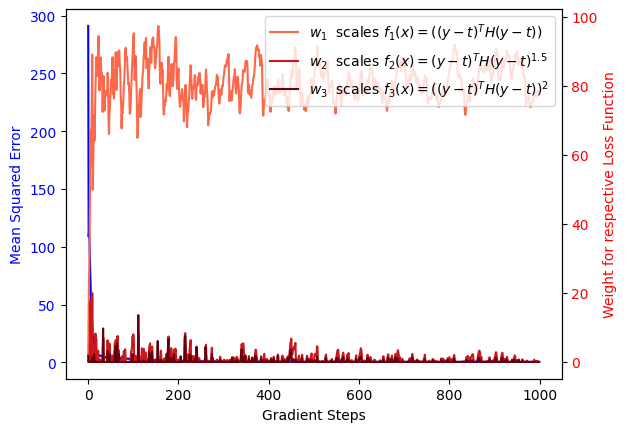}
      \caption{\texttt{PAMOO} SGD}
      \label{fig:sub1_weights}
    \end{subfigure} &
    \begin{subfigure}{0.28\textwidth}
      \centering
      \includegraphics[width=\linewidth]{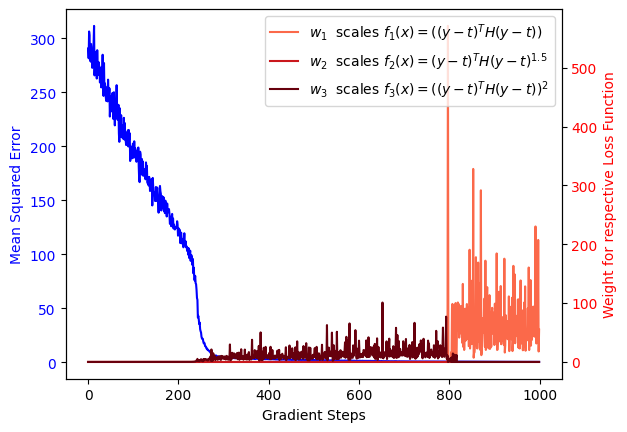}
      \caption{ \texttt{MG-AMOO} w/ Polyak, SGD}
      \label{fig:sub2_weights}
    \end{subfigure} &
    \begin{subfigure}{0.28\textwidth}
      \centering
      \includegraphics[width=\linewidth]{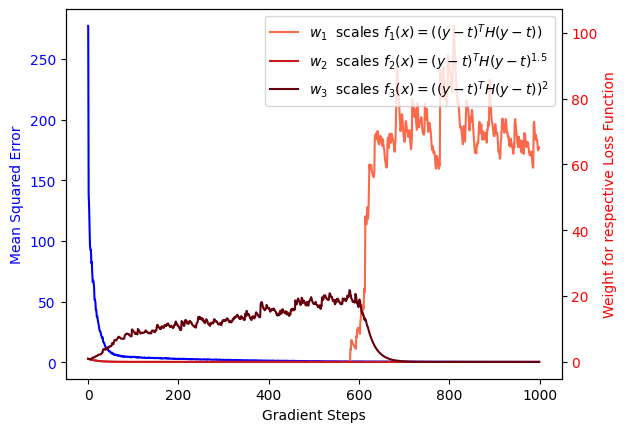}
      \caption{\texttt{MG-AMOO} w/ Polyak, momentum, SGD}
      \label{fig:sub3_weights}
    \end{subfigure} \\
    \begin{subfigure}{0.28\textwidth}
      \centering
      \includegraphics[width=\linewidth]{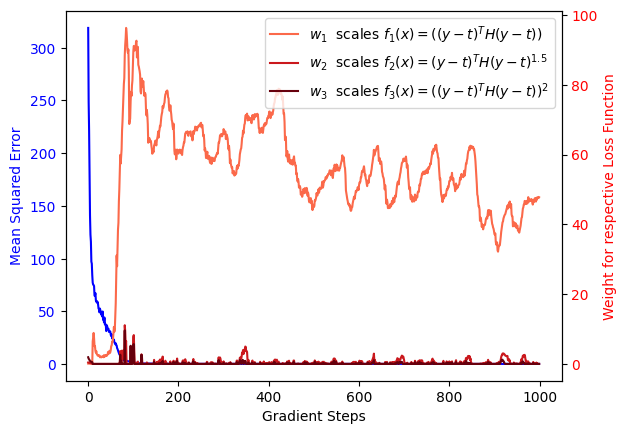}
      \caption{\texttt{PAMOO} Adam}
      \label{fig:sub4_weights}
    \end{subfigure} &
    \begin{subfigure}{0.28\textwidth}
      \centering
      \includegraphics[width=\linewidth]{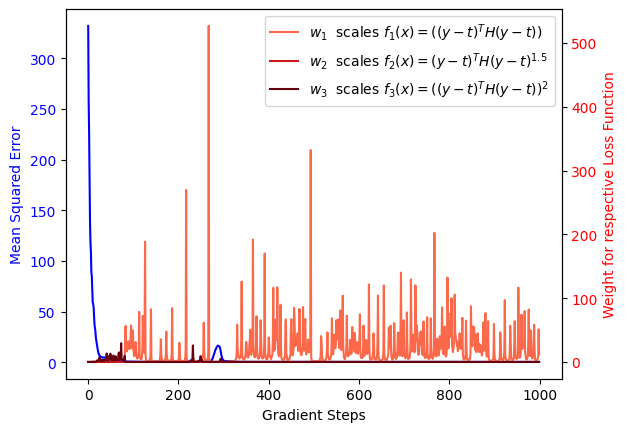}
      \caption{ \texttt{MG-AMOO} w/ Polyak, ADAM}
      \label{fig:sub5_weights}
    \end{subfigure} &
    \begin{subfigure}{0.28\textwidth}
      \centering
      \includegraphics[width=\linewidth]{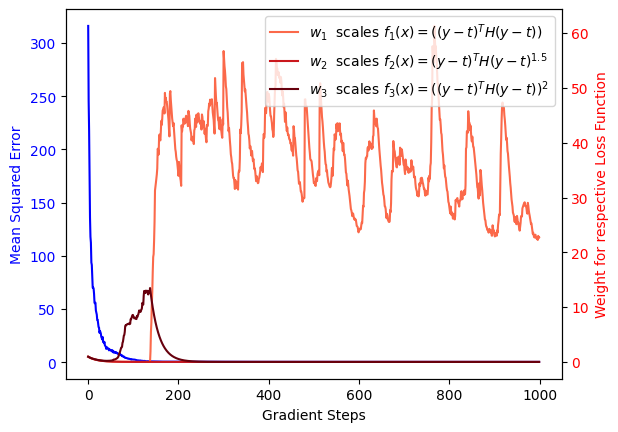}
      \caption{\texttt{MG-AMOO} w/ Polyak, Momentum, ADAM}
      \label{fig:sub6_weights}
    \end{subfigure}
  \end{tabular}
  \caption{The behavior of the three weights (reds) in (P1) shown as a function of the loss (blue) and gradients steps. Once the loss is below 1, the weights shift from $w_3$ to $w_1$ for all three algorithms. When no momentum is applied there seems to be a complex interaction between ADAM and \PAMOO\ that leads to overall instabilities. }
  \label{fig:weights_main}
\end{figure}

\newpage

\section{Proof of Claims}

\subsection{\CAMOO\ is Invalid in Convex AMOO}\label{app:camoo_is_invalid}

Consider the \CAMOO\ algorithm that iteratively calculates the weights by $\w_k\in \arg\max_{\w\in \Delta_m}\lambda_{\min}\brac{\nabla^2 f_{\w}(\x_k)}$. Further, consider the multi-objective problem with the following two functions in $\x=(x_1,x_2,x_3)\in \mathbb{R}^3$:
\begin{align*}
    f_1(\x) =x_1^2,\\
    f_2(\x) = x_2^2.
\end{align*}
Both functions are only convex and share optimal solutions in the set $\mC_\star = \{ \x: \x=0 \}$.  

In this problem it holds that $\max_{\w\in \Delta_2}\lambda_{\min}\brac{\nabla^2 f_{\w}(\x_k)}=0$ for all $\w\in \mathbb{R}^2$ since 
\begin{align*}
\nabla^2 f_{\w}(\x) = w_1 x_1^2 + w_2x^2 =   \mathrm{Diag}(2w_1,2w_2,0).  
\end{align*}
Hence, for any value $\w\in \Delta_3$ it holds that $\max_{\w\in \Delta_2}\lambda_{\min}\brac{\nabla^2 f_{\w}(\x_k)}=0$, and hence, it is possible that $\arg\max_{\w\in \Delta_m}\lambda_{\min}\brac{\nabla^2 f_{\w}(\x_k)}=\brac{0,0,1} $ which is makes no change in $\x$. for any value $\w\in \Delta_2$ it holds that $\max_{\w\in \Delta_2}\lambda_{\min}\brac{\nabla^2 f_{\w}(\x_k)}=0$, and hence, $\arg\max_{\w\in \Delta_m}\lambda_{\min}\brac{\nabla^2 f_{\w}(\x_k)}=\Delta_2$ is the entire simplex. This implies that \CAMOO, without any modification, will produce arbitrary weight vectors.

\subsection{$\x_{\EW}$ Minimizes the Maximum Gap Metric}\label{app:x_ew_minimizes_mg}

Restate the claim:

\ClaimOne*

We will prove by contradiction. 

% \textcolor{purple}{The following proof was wrong. I fixed it! We need to check it.}
Let $\x_{\MG}\in \mC_\star$ be a point that minimizes all objective functions and, hence, satisfies $\MG(\x_{\MG})=0$.  Such a point necessarily exists by AMOO assumption. Assume that  $\MG(\x_{\EW})>0$. This will be shown to lead to a contradiction. We will show that $f_{\EW}(\x_{\MG})< f_{\EW}(\x_{\EW})$, contradicting the optimality of $\x_{\EW}$ for $f_{\EW}$.

Since $\MG(\x_{\EW})>0$ it implies that exists $j\in [m]$ such that
\begin{align*}
    f_j(\x_{\MG}) =  \min_{\x \in \reals^n} ~ f_j(\x)  <   f_j(\x_{\EW}). 
\end{align*}
Hence,
\begin{align*}
    f_{\EW}(\x_{\MG}) =  \frac{1}{m} \sum_{i\in [m]} f_i(\x_{\MG}) = \frac{1}{m} \sum_{i\in [m]} \min_{\x \in \reals^n} ~ f_i(\x) < \frac{1}{m} \sum_{i\in [m]} f_i(\x_{\EW}) = f_{\EW}(\x_{\EW}). 
\end{align*}
This contradicts the optimality of $\x_{\EW}$ on the objective $f_{\EW}$.

\section{Auxiliary Lemmas: \EW\ Lower Bound, Theorem~\ref{thm:lower_bound_ew_polyak}}
\subsection{Induction Lower Bound} \label{app:induction_lower_bound}

In this section we present auxiliary results for Theorem~\ref{thm:lower_bound_ew_polyak}.
\begin{lemma}\label{lemma:induction_lower_bound}
    Let $a, \epsilon > 0 $, $n \geq m \geq 2$, and $a=(m-1)\epsilon$. Suppose the sequence $\{\x_1, \x_2, \dots \}$, when $\x_k = (x_{k,1}, \dots, x_{k,n}) \in \reals^n$, defined by the following:
    \begin{align}
    \x_{k+1} = \x_k - \frac{\sum_{i=1}^{m} |x_{k,i}|}{m} \brac{\mathrm{sign}(x_{k,1}), \dots, \mathrm{sign}(x_{k,m}), 0, \dots, 0}. \label{eq:update_step_induction}
\end{align}
When $\x_1 = (a, \epsilon, \dots, \epsilon, 0, \dots, 0) \in \reals^n$. Then, the followings hold:
\begin{enumerate}
    \item 
    \begin{align*}
        x_{k,1} = a\brac{1-\frac{2}{m}}^{k-1} .
    \end{align*}
    \item for every $1<i\leq m$ \begin{align*}
        x_{k,i} = \epsilon (-1)^{k-1} \brac{1 - \frac{2}{m}}^{k-1}.
    \end{align*}
    % \item for every $m+1<i\leq n$: $x_{k,i} = 0$.
\end{enumerate}
\end{lemma}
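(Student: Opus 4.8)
\textbf{Proof plan for Lemma~\ref{lemma:induction_lower_bound}.}

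The plan is to prove both claims simultaneously by induction on $k$, using a joint induction hypothesis that keeps track of the signs of the coordinates — this is what makes the recursion close. The key structural observation is that the update \eqref{eq:update_step_induction} subtracts the \emph{same} scalar $s_k := \frac{1}{m}\sum_{i=1}^m |x_{k,i}|$ (times the sign) from each of the first $m$ coordinates. So if I can show that at step $k$ the coordinates all have the ``expected'' magnitudes and an alternating sign pattern, then the next step is a routine computation. Concretely, my induction hypothesis at step $k$ is: $x_{k,1} = a(1-\tfrac{2}{m})^{k-1} > 0$ and $x_{k,i} = \epsilon(-1)^{k-1}(1-\tfrac{2}{m})^{k-1}$ for $1 < i \le m$ (and $x_{k,i}=0$ for $i>m$, which is immediate since those coordinates are never touched). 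Note $1-\tfrac{2}{m} \in (-1,1)$ for $m \ge 2$ but could be negative when $m<2$... actually $m\ge 2$ gives $1-\tfrac2m \in [0,1)$, so $x_{k,1}$ stays nonnegative and the sign of $x_{k,i}$ for $i>1$ is exactly $(-1)^{k-1}$; I should state this sign bookkeeping explicitly as part of the hypothesis.

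\textbf{Base case.} At $k=1$: $x_{1,1}=a = a(1-\tfrac2m)^0$, and $x_{1,i}=\epsilon = \epsilon(-1)^0(1-\tfrac2m)^0$ for $1<i\le m$. Trivially true.

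\textbf{Inductive step.} Assume the hypothesis at step $k$. First compute
\[
s_k := \frac{1}{m}\sum_{i=1}^m |x_{k,i}| = \frac{1}{m}\left( a\Bigl(1-\tfrac2m\Bigr)^{k-1} + (m-1)\,\epsilon\Bigl(1-\tfrac2m\Bigr)^{k-1}\right) = \frac{1}{m}\cdot 2a\Bigl(1-\tfrac2m\Bigr)^{k-1} = \frac{2a}{m}\Bigl(1-\tfrac2m\Bigr)^{k-1},
\]
where I used $a = (m-1)\epsilon$ to collapse $a + (m-1)\epsilon = 2a$. By the hypothesis the signs are $\mathrm{sign}(x_{k,1})=+1$ and $\mathrm{sign}(x_{k,i})=(-1)^{k-1}$ for $i>1$. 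Then for the first coordinate:
\[
x_{k+1,1} = x_{k,1} - s_k = a\Bigl(1-\tfrac2m\Bigr)^{k-1} - \frac{2a}{m}\Bigl(1-\tfrac2m\Bigr)^{k-1} = a\Bigl(1-\tfrac2m\Bigr)^{k-1}\Bigl(1-\tfrac2m\Bigr) = a\Bigl(1-\tfrac2m\Bigr)^{k},
\]
which is the claimed formula and is still $\ge 0$. For $1 < i \le m$:
\[
x_{k+1,i} = x_{k,i} - s_k(-1)^{k-1} = \epsilon(-1)^{k-1}\Bigl(1-\tfrac2m\Bigr)^{k-1} - \frac{2a}{m}(-1)^{k-1}\Bigl(1-\tfrac2m\Bigr)^{k-1}.
\]
Factoring out $(-1)^{k-1}(1-\tfrac2m)^{k-1}$ and using $\tfrac{2a}{m} = \tfrac{2(m-1)\epsilon}{m}$ gives a bracket $\epsilon - \tfrac{2(m-1)\epsilon}{m} = \epsilon\bigl(1 - \tfrac{2(m-1)}{m}\bigr) = \epsilon\bigl(\tfrac{m - 2m + 2}{m}\bigr) = \epsilon\bigl(\tfrac{2-m}{m}\bigr) = -\epsilon\bigl(1-\tfrac2m\bigr)$. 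Hence $x_{k+1,i} = -\epsilon(-1)^{k-1}(1-\tfrac2m)^{k} = \epsilon(-1)^{k}(1-\tfrac2m)^{k}$, matching the formula at step $k+1$, and the new sign of this coordinate is $(-1)^k$ as required to continue the induction. Coordinates $i>m$ are untouched, so they remain $0$.

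\textbf{Main obstacle.} There is no deep difficulty here; the one thing that must be handled carefully is the \emph{sign tracking} — the subgradient $\mathrm{sign}(x_{k,i})$ is only well-defined (as $\pm1$) when $x_{k,i}\ne 0$, so I should note that $1-\tfrac2m > 0$ strictly for $m \ge 3$ and treat $m=2$ (where $1-\tfrac2m=0$, so $x_{2,i}=0$ for all $i\le m$ and the sequence is stationary thereafter, consistent with the formulas which all vanish) as a degenerate-but-consistent case. Also worth stating: for $m\ge 3$ all first-$m$ coordinates are strictly nonzero at every step, so the subgradient in \eqref{eq:update_step_induction} is the genuine gradient and the update is unambiguous. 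With those remarks the induction is complete.
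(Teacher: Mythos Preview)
Your proof is correct and follows essentially the same induction argument as the paper. The only cosmetic differences are that you take the base case at $k=1$ (trivially) rather than $k=2$, and you track the sign of $x_{k,i}$ for $i>1$ directly as $(-1)^{k-1}$ instead of splitting the inductive step into the two cases ``$k$ odd'' and ``$k$ even'' as the paper does; your explicit handling of the degenerate $m=2$ boundary (where $1-\tfrac{2}{m}=0$ and the sequence becomes stationary) is a nice addition that the paper leaves implicit.
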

\begin{proof}[Proof of Lemma \ref{lemma:induction_lower_bound}]
We prove it by induction.\\
\emph{Base case:} when $k=2$.\\
Using the update step from \eqref{eq:update_step_induction}, we have
\begin{align*}
    \x_{2} & = \x_1 - \frac{a + (m-1) \epsilon}{m} \brac{\mathrm{sign}(x_1), \dots, \mathrm{sign}(x_{m}), 0, \dots, 0} \\
    & = \x_1 - \frac{a + (m-1) \epsilon}{m} \brac{+1, \dots, +1, 0, \dots, 0}.
\end{align*}
Recall that $a = \epsilon(m-1)$. Then, for every $1<i\leq m$ we have
\begin{align*}
    x_{2,i} & = x_{1,i} - \frac{a + (m-1) \epsilon}{m}= \epsilon - \frac{2 (m-1) \epsilon}{m} = \epsilon \brac{ 1 - 2 + \frac{2}{m}} = \epsilon (-1) \brac{1-\frac{2}{m}},
\end{align*}
and for $i=1$ we have
\begin{align*}
    x_{2,1} & = x_{1,1} - \frac{a + (m-1) \epsilon}{m}= a - \frac{2 a}{m} = a \brac{ 1 -  \frac{2}{m}}.
\end{align*}
Thus, we obtain
\begin{align*}
    x_{2,1} = a\brac{1-\frac{2}{m}}, ~~~~ \text{and} ~~~~ x_{2,i} = \epsilon (-1) \brac{1- \frac{2}{m }} \text{ for every } 1<i\leq m.
\end{align*}
\emph{Induction hypothesis:}
Suppose that for $k=n$. \\
Then, it holds:
\begin{align*}
    x_{n,1} = a\brac{1-\frac{2}{m}}^{n-1}, ~~~~ \text{and} ~~~~ x_{n,i} = \epsilon (-1)^{n-1} \brac{1- \frac{2}{m }}^{n-1} \text{ for every } 1<i\leq m.
\end{align*}
\emph{Induction step:} we prove for $k=n+1$.\\
Using the update step from \eqref{eq:update_step_induction}, we have
\begin{align*}
    \x_{n+1} & = \x_n - \frac{\sum_{i=1}^{m} |x_{n,i}|}{m} \brac{\mathrm{sign}(x_{n,1}), \dots, \mathrm{sign}(x_{n,m}), 0, \dots, 0}\\
    & = \x_n - \frac{|a\brac{1-\frac{2}{m}}^{n-1}| + (m-1) | \epsilon (-1)^{n-1} \brac{1- \frac{2}{m }}^{n-1}|}{m} \brac{\mathrm{sign}(x_{n,1}), \dots, \mathrm{sign}(x_{n,m}), 0, \dots, 0}.
\end{align*}
From the \emph{induction hypothesis} it holds that $x_{n,1} = a\brac{1-\frac{2}{m}}^{n-1} > 0 $. Since $a=(m-1)\epsilon$, we have 
\begin{align*}
    \x_{n+1} & = \x_n - \frac{2 a\brac{1-\frac{2}{m}}^{n-1} }{m} \brac{+1, \mathrm{sign}(x_{n,2}), \dots, \mathrm{sign}(x_{n,m}), 0, \dots, 0}.
\end{align*}
Thus, we obtain
\begin{align*}
    x_{n+1,1} &=  x_{n,1} - \frac{2 a\brac{1-\frac{2}{m}}^{n-1}}{m}(+1) \\
    &= a\brac{1-\frac{2}{m}}^{n-1} - \frac{2 }{m} a\brac{1-\frac{2}{m}}^{n-1} \\
    &= a\brac{1-\frac{2}{m}}\brac{1-\frac{2}{m}}^{n-1}\\
    &=a\brac{1-\frac{2}{m}}^{n}
\end{align*}
Now we move to show that the lemma holds for $x_{n+1,i}$ for every $1<i\leq m$. We separate this into two cases.\\
\emph{Case 1:} $n$ is odd. \\
From the \emph{induction hypothesis} it holds that $x_{n,i} = \epsilon (-1)^{n-1} \brac{1- \frac{2}{m }}^{n-1}$. Thus, $x_{n,i} > 0$ for every $1<i\leq m$, and it holds that
\begin{align*}
    \x_{n+1} &  =  \x_n - \frac{2 a\brac{1-\frac{2}{m}}^{n-1}}{m} \brac{+1, +1, \dots, +1, 0, \dots, 0}.
\end{align*}
Recall that $a=\epsilon(m-1)$, and $n-1$ is even. Then, we obtain:
\begin{align*}
    x_{n+1,i} & =  x_{n,i} - \frac{2 a\brac{1-\frac{2}{m}}^{n-1}}{m}(+1) = \epsilon (-1)^{n-1} \brac{1- \frac{2}{m }}^{n-1} - 2\frac{ m-1}{m} \epsilon \brac{1-\frac{2}{m}}^{n-1}\\
    & = \epsilon \brac{1- \frac{2}{m }}^{n-1} - 2\brac{ 1 - \frac{1}{m}} \epsilon \brac{1-\frac{2}{m}}^{n-1} = \epsilon \brac{1- \frac{2}{m }}^{n-1} \brac{1-2+\frac{2}{m}}\\
    & = \epsilon \brac{1- \frac{2}{m }}^{n-1} (-1) \brac{1-\frac{2}{m }} = \epsilon (-1)^n \brac{1- \frac{2}{m }}^{n}.
\end{align*}
\emph{Case 2:} $n$ is even.\\ From the \emph{induction hypothesis} it holds that $x_{n,i} = \epsilon (-1)^{n-1} \brac{1- \frac{2}{m }}^{n-1}$. Thus, $x_{n,i} < 0$ for every $1<i\leq m$, and it holds that
\begin{align*}
    \x_{n+1} & =  \x_n - \frac{2 a\brac{1-\frac{2}{m}}^{n-1}}{m} \brac{+1, -1, \dots, -1, 0, \dots, 0}.
\end{align*}
Recall that $a=\epsilon(m-1)$, and $n-1$ is odd. Then, we obtain:
\begin{align*}
    x_{n+1,i} & =  x_{n,i} - \frac{2 a\brac{1-\frac{2}{m}}^{n-1}}{m}(-1) = \epsilon (-1)^{n-1} \brac{1- \frac{2}{m }}^{n-1} + 2\frac{ m-1}{m} \epsilon \brac{1-\frac{2}{m}}^{n-1}\\
    & = - \epsilon \brac{1- \frac{2}{m }}^{n-1} + 2\brac{ 1 - \frac{1}{m}} \epsilon \brac{1-\frac{2}{m}}^{n-1} = \epsilon \brac{1- \frac{2}{m }}^{n-1} \brac{-1+2-\frac{2}{m}}\\
    & = \epsilon \brac{1- \frac{2}{m }}^{n-1} \brac{1-\frac{2}{m }} = \epsilon (-1)^n \brac{1- \frac{2}{m }}^{n}.
\end{align*}
\end{proof}
\subsection{Additional Result}
\begin{lemma}
    Let $K>0$, and $m \geq 2$. Then, it holds that:
\begin{align}
    \brac{1 - \brac{1-\frac{2}{m}}^K} \geq \frac{2K}{m+2K}.\label{eq:add_res_upper_bound_for_lower_bound}
\end{align}
\end{lemma}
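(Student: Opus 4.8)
The plan is to rewrite the target inequality in a form where a single elementary fact — $1+y\le e^y$ for all real $y$ — does all the work. Writing $x := 2K/m > 0$, the claim $1-\brac{1-\frac{2}{m}}^K \ge \frac{2K}{m+2K}$ is equivalent, after noting $\frac{2K}{m+2K}=\frac{x}{1+x}=1-\frac{1}{1+x}$, to the statement
\[
\brac{1-\tfrac{2}{m}}^K \le \frac{1}{1+x} = \frac{1}{1+2K/m}.
\]
So it suffices to prove this last inequality.

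\textbf{Key steps.} First I would record that $m\ge 2$ implies $\frac{2}{m}\in(0,1]$, so the base $1-\frac{2}{m}$ lies in $[0,1)$; in particular it is nonnegative, and since $K>0$ the map $z\mapsto z^K$ is nondecreasing on $[0,\infty)$. Second, apply $1+y\le e^y$ with $y=-\frac{2}{m}$ to get $1-\frac{2}{m}\le e^{-2/m}$, and raise both sides to the power $K$ (legitimate by the monotonicity just noted) to obtain $\brac{1-\frac{2}{m}}^K \le e^{-2K/m} = e^{-x}$. Third, apply $1+y\le e^y$ again, now with $y=x=2K/m\ge 0$, giving $1+x\le e^{x}$, hence $e^{-x}\le \frac{1}{1+x}$. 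Chaining the two bounds yields $\brac{1-\frac{2}{m}}^K\le \frac{1}{1+x}$, which is exactly what was needed. Finally, translate back: $1-\brac{1-\frac{2}{m}}^K \ge 1-\frac{1}{1+x} = \frac{x}{1+x} = \frac{2K/m}{1+2K/m} = \frac{2K}{m+2K}$, completing the proof.

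\textbf{Main obstacle.} There is essentially no hard step here: the only subtlety worth stating carefully is the nonnegativity of $1-\frac{2}{m}$ (which fails if $m<2$, hence the hypothesis $m\ge 2$) so that exponentiating by $K$ preserves the inequality; the boundary case $m=2$ is fine since $0^K=0\le\frac{1}{1+2K/m}$. Everything else is two invocations of the convexity inequality $1+y\le e^y$ and a line of algebra rearranging $\frac{2K}{m+2K}$.
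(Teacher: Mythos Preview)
Your proof is correct and follows essentially the same route as the paper: both arguments reduce to showing $\brac{1-\frac{2}{m}}^K \le \frac{1}{1+2K/m}$ via the chain $1-\frac{2}{m}\le e^{-2/m}$ and $e^{-2K/m}\le \frac{1}{1+2K/m}$, each step being an instance of $1+y\le e^y$. Your write-up is in fact slightly more careful than the paper's in explicitly noting why the base is nonnegative (so that raising to the $K$-th power preserves the inequality), but otherwise the proofs coincide.
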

\begin{proof}
Since $1-x \leq e^{-x}$, and since $e^{-x} \leq \frac{1}{1+x}$ for every $x \geq 0$, it holds that
\begin{align*}
    \brac{1-\frac{2}{m}}^K \leq \brac{e^{-\frac{2}{m}}}^K = e^{-\frac{2K}{m}} \leq \frac{1}{1+\frac{2K}{m}} = \frac{m}{m+2K}.
\end{align*}
Then, we have
\begin{align*}
    1-\brac{1-\frac{2}{m}}^K \geq 1 - \frac{m}{m+2K} = \frac{2K}{m+2K}.
\end{align*}
\end{proof}

\section{Auxiliary Lemma for Polyak step}
In this section we prove an auxiliary lemma for Polyak step, which is used to prove theorems from Section~\ref{sec:amoo_algorithms}.

\begin{lemma}\label{lemma:upper_bound_over_trajectory_gap}
Let $I(k) \in \argmax_{i\in[m]} f_i(\x_{k}) - f_i^\star$, and the trajectory $\{ \x_k \}_{k=1}^K$. Suppose for every $k$ the following inequality holds 

\begin{align}
    \enorm{\x_{k+1} - \x_\star}^2 \leq \enorm{\x_k - \x_\star}^2 - \frac{\brac{ f_{I(k)}(\x_k) - f_{I(k)}^\star}^2}{\enorm{\nabla f_{I(k)}(\x_k)}^2},
    \label{eq:conv_GDPolyakAMOO}
\end{align}
when $\x_\star \in \mC_\star$.
Then, Any algorithm which returns the trajectory average, $\bar{\x} = \frac{1}{K} \sum_{k=1}^K \x_k$, guarantees the worst-case gap is upper bounded by:
\begin{align*}
    (1) \quad & \max_{i\in[m]} f_i(\bar{\x}) - f_i^\star \leq \frac{ G \enorm{\x_1 - \x_\star} }{\sqrt{K}} , \quad \text{ when $\{ f_i(\x) \}_{i\in [m]}$ are $G$-Lipschitz.} \\
    (2) \quad & \max_{i\in[m]} f_i(\bar{\x}) - f_i^\star \leq \frac{2\beta \enorm{\x_1 - \x_\star} }{K}, \quad \text{when $\{ f_i(\x) \}_{i\in [m]}$ are $\beta$-smooth.}
\end{align*}
\end{lemma}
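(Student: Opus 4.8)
The strategy is to telescope the per-step inequality \eqref{eq:conv_GDPolyakAMOO} and then combine it with the reduction lemma, Lemma~\ref{lemma:worst_case_gap_avg}. Summing \eqref{eq:conv_GDPolyakAMOO} over $k = 1, \dots, K$ and using that $\enorm{\x_{K+1} - \x_\star}^2 \geq 0$ gives
\begin{align*}
\sum_{k=1}^K \frac{\brac{ f_{I(k)}(\x_k) - f_{I(k)}^\star}^2}{\enorm{\nabla f_{I(k)}(\x_k)}^2} \leq \enorm{\x_1 - \x_\star}^2.
\end{align*}
From here the two cases diverge only in how the denominator $\enorm{\nabla f_{I(k)}(\x_k)}^2$ is controlled.

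\textbf{Lipschitz case.} First I would bound $\enorm{\nabla f_{I(k)}(\x_k)}^2 \leq G^2$, so the summed inequality becomes $\sum_{k=1}^K \brac{f_{I(k)}(\x_k) - f_{I(k)}^\star}^2 \leq G^2 \enorm{\x_1 - \x_\star}^2$. Then by Cauchy--Schwarz (or power-mean), $\frac{1}{K}\sum_{k=1}^K \brac{f_{I(k)}(\x_k) - f_{I(k)}^\star} \leq \sqrt{\frac{1}{K}\sum_{k=1}^K \brac{f_{I(k)}(\x_k) - f_{I(k)}^\star}^2} \leq \frac{G\enorm{\x_1 - \x_\star}}{\sqrt{K}}$. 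Applying Lemma~\ref{lemma:worst_case_gap_avg} to the left-hand side yields the claimed bound $\MG(\bar\x) \leq G\enorm{\x_1 - \x_\star}/\sqrt{K}$.

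\textbf{Smooth case.} Here I would invoke Lemma~\ref{lemma:smooth-gradient-norm}, which gives $\enorm{\nabla f_{I(k)}(\x_k)}^2 \leq 2\beta\brac{f_{I(k)}(\x_k) - f_{I(k)}^\star}$. Substituting this into the summed inequality makes one factor of the gap cancel: $\sum_{k=1}^K \frac{f_{I(k)}(\x_k) - f_{I(k)}^\star}{2\beta} \leq \enorm{\x_1 - \x_\star}^2$, i.e. $\frac{1}{K}\sum_{k=1}^K \brac{f_{I(k)}(\x_k) - f_{I(k)}^\star} \leq \frac{2\beta \enorm{\x_1 - \x_\star}^2}{K}$. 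Applying Lemma~\ref{lemma:worst_case_gap_avg} gives $\MG(\bar\x) \leq 2\beta\enorm{\x_1 - \x_\star}^2/K$. (I note the statement as written has $\enorm{\x_1 - \x_\star}$ rather than its square; I would double-check whether this is a typo or whether an additional normalization is intended, but the argument naturally produces the squared distance.)

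\textbf{Main obstacle.} There is no deep obstacle: the per-step inequality is given as a hypothesis, so the work is purely the telescoping-plus-Cauchy--Schwarz bookkeeping and the correct application of Lemma~\ref{lemma:smooth-gradient-norm} in the smooth case. The one point requiring care is ensuring $\x_\star \in \mC_\star$ is used consistently (it must be the \emph{same} $\x_\star$ for which \eqref{eq:conv_GDPolyakAMOO} holds and for which $f_i(\x_\star) = f_i^\star$ for all $i$, which is guaranteed by the AMOO alignment assumption), and handling the degenerate case $\enorm{\nabla f_{I(k)}(\x_k)} = 0$, which forces $f_{I(k)}(\x_k) = f_{I(k)}^\star$ and hence $\MG(\x_k) = 0$, so that step contributes nothing and can be skipped.
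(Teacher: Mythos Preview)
Your proposal is correct and follows essentially the same argument as the paper: bound $\enorm{\nabla f_{I(k)}(\x_k)}^2$ (by $G^2$ in the Lipschitz case, by $2\beta(f_{I(k)}(\x_k)-f_{I(k)}^\star)$ via Lemma~\ref{lemma:smooth-gradient-norm} in the smooth case), telescope, apply Cauchy--Schwarz in the Lipschitz case, and finish with Lemma~\ref{lemma:worst_case_gap_avg}. Your observation about the missing square on $\enorm{\x_1-\x_\star}$ in the smooth bound is also correct---the paper's own derivation produces $2\beta\enorm{\x_1-\x_\star}^2/K$, so the stated bound is indeed a typo.
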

\begin{proof}[Proof of Lemma~\ref{lemma:upper_bound_over_trajectory_gap}]
    We separate to two cases.\\
    \emph{Case 1:}
    Since the functions are $G$-Lipschitz, i.e. $\enorm{\nabla f_{i}(\x) } \leq G $ for all $i \in [m]$, and $ \x \in \reals^n$. Then, from \eqref{eq:conv_GDPolyakAMOO}, we have
    \begin{align*}
        \brac{f_{I(k)}(\x_k) - f_{I(k)}^\star}^2 \leq  G^2 \brac{\enorm{\x_k - \x_\star}^2 - \enorm{\x_{k+1} - \x_\star}^2} .
    \end{align*}
    Summing over $k$, and using Cauchy–Schwarz inequality, i.e. $\brac{\sum_{i=1}^n a_i}^2 \leq n \sum_{i=1}^n a_i^2$ when $a_i \in \reals$, it holds
    \begin{align*}
        \brac{\sum_{k=1}^K \brac{f_{I(k)}(\x_k) - f_{I(k)}^\star}}^2 & \leq K \sum_{k=1}^K \brac{f_{I(k)}(\x_k) - f_{I(k)}^\star}^2 \leq   G^2 K \sum_{k=1}^K \brac{\enorm{\x_k - \x_\star}^2 - \enorm{\x_{k+1} - \x_\star}^2} \\
        & =  G^2 \enorm{\x_1 - \x_\star}^2 K.
    \end{align*} 
    Then by taking square root on both sides, we have 
    \begin{align}\label{eq:Lipchitz_general_max_gap}
        \sum_{k=1}^K \brac{f_{I(k)}(\x_k) - f_{I(k)}^\star} \leq  G \enorm{\x_1 - \x_\star} \sqrt{K} .
    \end{align}
    \emph{Case 2:} Since the functions are $\beta$-smooth, then, From Lemma~\ref{lemma:smooth-gradient-norm}, and \eqref{eq:conv_GDPolyakAMOO} it holds that
    \begin{align*}
        f_{I(k)}(\x_k) - f_{I(k)}^\star \leq  2\beta \brac{\enorm{\x_k - \x_\star}^2 - \enorm{\x_{k+1} - \x_\star}^2} .
    \end{align*}
    Summing over $k$, we have
    \begin{align} \label{eq:smooth_general_max_gap}
        \sum_{k=1}^K f_{I(k)}(\x_k) - f_{I(k)}^\star \leq 2\beta \enorm{\x_1 - \x_\star}^2.
    \end{align}
    By applying Lemma~\ref{lemma:worst_case_gap_avg} over \eqref{eq:Lipchitz_general_max_gap}, and ~\eqref{eq:smooth_general_max_gap} we obtain the proof.
\end{proof}

\section{Proof of Theorem~\ref{thm:original_pamoo_convergence}}\label{app:original_pamoo}
In this section we prove Theorem~\ref{thm:original_pamoo_convergence}. First, we prove first an auxiliary lemma, and then we prove the theorem.
\begin{lemma}\label{lemma:conv_PAMOOv1}
    Let $I(k) \in \argmax_{i\in[m]} f_i(\x_{k}) - f_i^\star$, and $\x_\star \in \mC_\star$. At each iteration $k$ in Algorithm \ref{alg:PAMOO_original} the following holds
    \begin{align*}
        \enorm{\x_{k+1} - \x_\star}^2 \leq \enorm{\x_k - \x_\star}^2 - \frac{\brac{ f_{I(k)}(\x_k) - f_{I(k)}^\star}^2}{\enorm{\nabla f_{I(k)}(\x_k)}^2} .
    \end{align*}
\end{lemma}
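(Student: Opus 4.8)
The plan is to start from the standard one-step inequality for a gradient step on the weighted objective and then exploit the optimality of $\w_k$ in the quadratic subproblem of Algorithm~\ref{alg:PAMOO_original}. Concretely, since the update is $\x_{k+1}=\x_k-\nabla f_{\w_k}(\x_k)$ and each $f_i$ (hence each $f_{\w}$ for $\w\in\reals_+^m$) is convex, expanding $\enorm{\x_{k+1}-\x_\star}^2$ gives, for any $\x_\star\in\mC_\star$,
\begin{align*}
    \enorm{\x_{k+1}-\x_\star}^2 &= \enorm{\x_k-\x_\star}^2 - 2\nabla f_{\w_k}(\x_k)^\top(\x_k-\x_\star) + \enorm{\nabla f_{\w_k}(\x_k)}^2 \\
    &\leq \enorm{\x_k-\x_\star}^2 - 2\brac{f_{\w_k}(\x_k)-f_{\w_k}(\x_\star)} + \enorm{\nabla f_{\w_k}(\x_k)}^2,
\end{align*}
which is exactly \eqref{eq:PAMOO_main_lower_bound}. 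Now observe that because $\x_\star\in\mC_\star$ minimizes every $f_i$, we have $f_{\w_k}(\x_\star)=\sum_i w_{k,i} f_i^\star$, so $f_{\w_k}(\x_k)-f_{\w_k}(\x_\star)=\w_k^\top\Delta_k$ in the notation of the algorithm, and $\enorm{\nabla f_{\w_k}(\x_k)}^2=\w_k^\top\J_k^\top\J_k\w_k$. Therefore the bound reads $\enorm{\x_{k+1}-\x_\star}^2\leq\enorm{\x_k-\x_\star}^2-\brac{2\w_k^\top\Delta_k-\w_k^\top\J_k^\top\J_k\w_k}$, i.e. the decrease is precisely the objective value of the quadratic program that $\w_k$ maximizes.

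The second step is to lower-bound $2\w_k^\top\Delta_k-\w_k^\top\J_k^\top\J_k\w_k$ by plugging in a cleverly chosen feasible point. Since $\w_k$ is a maximizer over all of $\reals_+^m$, any $\w\in\reals_+^m$ gives a valid lower bound. I would take $\w=\eta\,\mathbf{e}_{I(k)}$ for a scalar $\eta\geq0$ to be optimized, where $I(k)\in\argmax_i f_i(\x_k)-f_i^\star$. For this choice $\w^\top\Delta_k=\eta\brac{f_{I(k)}(\x_k)-f_{I(k)}^\star}$ and $\w^\top\J_k^\top\J_k\w=\eta^2\enorm{\nabla f_{I(k)}(\x_k)}^2$, so the subproblem value at this point is $2\eta\brac{f_{I(k)}(\x_k)-f_{I(k)}^\star}-\eta^2\enorm{\nabla f_{I(k)}(\x_k)}^2$. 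Maximizing this scalar quadratic in $\eta$ (the classical Polyak step choice $\eta=\brac{f_{I(k)}(\x_k)-f_{I(k)}^\star}/\enorm{\nabla f_{I(k)}(\x_k)}^2$) yields exactly $\brac{f_{I(k)}(\x_k)-f_{I(k)}^\star}^2/\enorm{\nabla f_{I(k)}(\x_k)}^2$. Combining with the first step gives the claimed inequality.

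I do not expect a genuine obstacle here; the only minor points to get right are (i) checking $f_{\w_k}(\x_\star)=\sum_i w_{k,i}f_i^\star$ really does use the alignment assumption that $\x_\star$ simultaneously minimizes all $f_i$ (this is where AMOO enters, and it is essential), and (ii) handling the degenerate case $\enorm{\nabla f_{I(k)}(\x_k)}=0$, which forces $f_{I(k)}(\x_k)=f_{I(k)}^\star$ by convexity so the gap term is zero and the inequality holds trivially (one can interpret $0/0$ as $0$). The conceptual content is entirely in the observation that \PAMOO's weight selection dominates the single-index Polyak step applied to the worst-gap function, which is what makes the subsequent convergence analysis via Lemma~\ref{lemma:upper_bound_over_trajectory_gap} go through.
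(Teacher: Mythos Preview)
Your proposal is correct and follows essentially the same approach as the paper: expand $\enorm{\x_{k+1}-\x_\star}^2$, apply convexity to reach \eqref{eq:PAMOO_main_lower_bound}, and then lower-bound the maximized quadratic by plugging in the scaled one-hot vector $\eta\,\mathbf{e}_{I(k)}$ with the Polyak choice of $\eta$. Your treatment is arguably slightly more careful than the paper's, since you make explicit where the alignment assumption enters (through $f_{\w_k}(\x_\star)=\sum_i w_{k,i}f_i^\star$) and you address the degenerate case $\nabla f_{I(k)}(\x_k)=0$.
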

\begin{proof}[Proof of Lemma~\ref{lemma:conv_PAMOOv1}]
    Define that $f_{\w}(\x)$ is an weighted function of the functions according to $\w$, i.e. $f_{\w}(\x) = \sum_i w_i f_i(\x)$. Denote that for every $\w\in \reals^m_+$ it holds that $f_{\w}$ is a convex function. Hence, for every $\x,\y \in \reals^n$ it holds that
\begin{align*}
    &f_{\w}(\x) - f_{\w}(\y)  \leq \nabla f_\w(\x)^\top(\x-\y). 
\end{align*}
Let $\x_\star \in \mC_\star$. Recall that the update rule is given by $\x_{k+1} = \x_k - \nabla f_{\w_k}(\x_k)$. Then, by using the previous equation, we have
\begin{align}
    \enorm{\x_{k+1}-\x_\star}^2 &=\enorm{\x_k-\x_\star}^2 - 2 \nabla f_{\w_k}(\x_k)^\top\brac{\x_k - \x_\star} + \enorm{\nabla f_{\w_k}(\x_k)}^2 \nonumber\\
    &\leq \enorm{\x_k-\x_\star}^2 - 2 \brac{f_{\w_k}(\x_k)- f_{\w_k} (\x_\star)} +\enorm{\nabla f_{\w_k}(\x_k)}^2 , \label{eq:PAMOO_dist_from_x}
\end{align}
Recall that the update rule of \texttt{PAMOO} is 
$$
\w_k\in \argmax_{\w\in \reals^m_+} 2\brac{f_{\w}(\x_k)- f_{\w} (\x_\star)} - \enorm{\nabla f_{\w}(\x_k)}^2.
$$
Note that $I(k) \in \argmax_{i\in[m]} f_i(\x_{k}) - f_i^\star$ when $I(k)$ is a one-hot vector, $a_k = f_{I(k)}(\x_k)- f_{I(k)}^\star$, and $b_k = \enorm{\nabla f_{I(k)}(\x_k)}^2$. Let $\w(\x_k) = I(k) \frac{a_k}{b_k} \in \reals^m_+$, we can lower bound of the last expression as follows
\begin{align*}
    2 \brac{f_{\w_k}(\x_k)- f_{\w_k} (\x_\star)} - \enorm{\nabla f_{\w_k}(\x_k)}^2 & = \max_{\w\in \reals_+^m} \left[ 2\brac{f_{\w}(\x_k)- f_{\w} (\x_\star)} - \enorm{\nabla f_{\w}(\x_k)}^2 \right] \\
    & \geq 2\brac{f_{\w(\x_k)}(\x_k)- f_{\w(\x_k)} (\x_\star)}  -\enorm{\nabla f_{\w(\x_k)}(\x_k)}^2 \\
    & =   \frac{\brac{f_{I(k)}(\x_k)- f_{I(k)}^\star}^2}{\enorm{\nabla f_{I(k)}(\x_k)}^2}.
\end{align*}
Combining the last equation with \eqref{eq:PAMOO_dist_from_x}, we obtain
\begin{align*}
    \enorm{\x_{k+1}-\x_\star}^2 &\leq \enorm{\x_k-\x_\star}^2 -  \frac{\brac{f_{I(k)}(\x_k)- f_{I(k)}^\star}^2}{\enorm{\nabla f_{I(k)}(\x_k)}^2}
\end{align*}
\end{proof}

Restate the theorem:
\OriginalPAmooConvergence*

\begin{proof}[Proof of Theorem~\ref{thm:original_pamoo_convergence}]
    The theorem holds directly from Lemma~\ref{lemma:upper_bound_over_trajectory_gap}, which holds directly from Lemma~\ref{lemma:conv_PAMOOv1}. 
\end{proof}

\section{Proof of  Theorem~\ref{thm:MG_AMOO}}\label{app:MG_AMOO}

In this section we prove the following theorem: 
\MGPAmooConvergence*
\begin{proof}[Proof of Theorem~\ref{thm:MG_AMOO}]
    We provide the result for \PAMOO\ in Theorem~\ref{thm:MG_AMOO_Polyak}, the result for \MGAMOO\ with GD in Theorem~\ref{thm:MG_AMOO_GD_smooth}, and the result for \MGAMOO\ with Polyak step-size design in Theorem~\ref{thm:MG_AMOO_GD_Lip}. 

\end{proof}

\subsection{The Polyak method}\label{app:MG_AMOO_Polyak}

In this section we will prove the Polyak part of Theorem~\ref{thm:MG_AMOO}, namely, run \MGAMOO\ with the Polyak method as a \texttt{SOO}.
First, recall the update step in iteration $k$ of the Polyak method, which has the following form:
\begin{align}
    \label{eq:polyak_as_soo}
    \text{Set} \quad \eta_k = \frac{f(\x_{k}) - f^\star}{\enorm{\nabla f(\x_k)}^2}, \quad \text{define} \quad \g_k \gets \eta_k \nabla f(\x_k), \quad \text{and return} \quad \x_{k+1} = \x_k - \g_k.
\end{align}

Now, before we prove the Polyak part of Theorem~\ref{thm:MG_AMOO} we prove an auxiliary lemma for it.
\begin{lemma}\label{lemma:conv_MG_AMOO_Polyak_step}
    Let $\x_\star \in \mC_\star$. The Polyak update step, \eqref{eq:polyak_as_soo}, has the following attribute for every $k$:
    \begin{align*}
        \enorm{\x_{k+1} - \x_\star}^2 \leq \enorm{\x_k - \x_\star}^2 - \frac{\brac{ f(\x_k) - f^\star}^2}{\enorm{\nabla f(\x_k)}^2} .
    \end{align*}
\end{lemma}
\begin{proof}[Proof of Lemma~\ref{lemma:conv_MG_AMOO_Polyak_step}] Since $f(\x_{k}) - f^\star \geq 0$ for every $k$, it holds that $\eta_k \geq 0$ for every $k$. Since $\x_{k+1} = \x_k -  \eta_k \nabla f(\x_k)$, it holds that
    \begin{align*}
        \enorm{\x_{k+1} - \x_\star}^2 &= \enorm{\x_k - \eta_k \nabla f(\x_k) - \x_\star}^2 \\
        & = \enorm{\x_k - \x_\star}^2 - 2 \eta_k \nabla f(\x_k)^\top (\x_k - \x_\star) + \eta_k^2 \enorm{ \nabla f(\x_k)}^2\\
        &\leq \enorm{\x_k - \x_\star}^2 - 2 \eta_k \brac{ f(\x_k) - f^\star} + \eta_k^2 \enorm{ \nabla f(\x_k)}^2.
    \end{align*}
    The inequality holds from the convexity attribute, i.e. $f(\x_k) - f^\star  \leq \nabla f(\x_k)^\top(\x_k-\x_\star)$. Thus, by applying $\eta_k$ we obtain the lemma.
\end{proof}

\begin{theorem}\label{thm:MG_AMOO_Polyak}
\MGAMOO~(Algorithm~\ref{alg:MGAMOO-Reduction}) with Polyak method as \texttt{SOO} guarantees that after $K$ iterations the worst-case gap is upper bounded by:
\begin{align*}
        (1) \quad & \max_{i\in[m]} f_i(\bar{\x}) - f_i^\star \leq \frac{ G \enorm{\x_1 - \x_\star} }{\sqrt{K}} , \quad \text{ when $\{ f_i(\x) \}_{i\in [m]}$ are $G$-Lipschitz.} \\
        (2) \quad & \max_{i\in[m]} f_i(\bar{\x}) - f_i^\star \leq \frac{2\beta \enorm{\x_1 - \x_\star} }{K}, \quad \text{when $\{ f_i(\x) \}_{i\in [m]}$ are $\beta$-smooth.}
    \end{align*}    
\end{theorem}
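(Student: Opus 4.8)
The plan is to recognize that the theorem is an immediate consequence of chaining the three lemmas already available: Lemma~\ref{lemma:conv_MG_AMOO_Polyak_step} (a single Polyak step contracts toward any minimizer), Lemma~\ref{lemma:upper_bound_over_trajectory_gap} (trajectories satisfying that per-step contraction attain the stated MG rates), and Lemma~\ref{lemma:worst_case_gap_avg} (the reduction, which is already folded into Lemma~\ref{lemma:upper_bound_over_trajectory_gap}). The essential observation is that running \MGAMOO\ with the Polyak method as \texttt{SOO} produces, at step $k$, the iterate $\x_{k+1} = \x_k - \eta_k \nabla f_{I(k)}(\x_k)$ with $\eta_k = (f_{I(k)}(\x_k) - f_{I(k)}^\star)/\enorm{\nabla f_{I(k)}(\x_k)}^2$ and $I(k)\in\argmax_{i\in[m]} f_i(\x_k)-f_i^\star$ — i.e., exactly one Polyak step on the convex function $f_{I(k)}$, whose global minimum is $f_{I(k)}^\star$.

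First I would fix any $\x_\star\in\mC_\star$, which exists by the alignment assumption and minimizes every $f_i$ (in particular $f_{I(k)}$) simultaneously. Applying Lemma~\ref{lemma:conv_MG_AMOO_Polyak_step} with $f = f_{I(k)}$ then yields, for every $k$,
\[
\enorm{\x_{k+1} - \x_\star}^2 \leq \enorm{\x_k - \x_\star}^2 - \frac{\brac{f_{I(k)}(\x_k) - f_{I(k)}^\star}^2}{\enorm{\nabla f_{I(k)}(\x_k)}^2},
\]
which is precisely hypothesis \eqref{eq:conv_GDPolyakAMOO} of Lemma~\ref{lemma:upper_bound_over_trajectory_gap}. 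Invoking that lemma immediately gives both claims: part (1), the $O(G\enorm{\x_1-\x_\star}/\sqrt{K})$ bound in the $G$-Lipschitz case (via the Cauchy–Schwarz telescoping argument internal to that lemma), and part (2), the $O(\beta\enorm{\x_1-\x_\star}/K)$ bound in the $\beta$-smooth case (via Lemma~\ref{lemma:smooth-gradient-norm} and telescoping). No computation beyond these reductions is required.

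The only delicate point — and the closest thing to an obstacle — is ensuring the Polyak step is well defined and that the convexity inequality is applied with the correct sign. Since $f_{I(k)}(\x_k)-f_{I(k)}^\star\ge 0$ always, we have $\eta_k\ge 0$, so multiplying the convexity bound $f_{I(k)}(\x_k)-f_{I(k)}^\star\le \nabla f_{I(k)}(\x_k)^\top(\x_k-\x_\star)$ by $\eta_k$ preserves direction, exactly as used inside Lemma~\ref{lemma:conv_MG_AMOO_Polyak_step}. The degenerate case $\nabla f_{I(k)}(\x_k)=\vzero$ forces $f_{I(k)}(\x_k)=f_{I(k)}^\star$ by convexity, hence the largest gap at $\x_k$ vanishes and the per-step inequality (with the convention $0/0 = 0$) holds trivially; the overall MG bound is then maintained. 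Everything else is mechanical.
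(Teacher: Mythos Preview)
Your proposal is correct and matches the paper's own proof essentially verbatim: apply Lemma~\ref{lemma:conv_MG_AMOO_Polyak_step} with $f=f_{I(k)}$ to obtain the per-step contraction \eqref{eq:conv_GDPolyakAMOO}, then invoke Lemma~\ref{lemma:upper_bound_over_trajectory_gap}. Your additional remark on the degenerate case $\nabla f_{I(k)}(\x_k)=\vzero$ is a harmless clarification the paper leaves implicit.
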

\begin{proof}[Proof of Theorem~\ref{thm:MG_AMOO_Polyak}]
    The theorem holds directly from Lemma~\ref{lemma:upper_bound_over_trajectory_gap}, which holds directly from Lemma~\ref{lemma:conv_MG_AMOO_Polyak_step}, and the fact that the Polyak method, \eqref{eq:polyak_as_soo}, receive from \MGAMOO\ the function $f_{I(k)}(\x)$ at each iteration $k$.
    
\end{proof}

\subsection{The GD method for the smooth case}\label{app:MG_AMOO_GD}

In this section we will prove the Gradient Descent (GD) part of Theorem~\ref{thm:MG_AMOO}, namely, run \MGAMOO\ with the GD method as a \texttt{SOO}, only for the $\beta$-smooth case.
First, recall the update step in iteration $k$ of the GD method, which has the following form:
\begin{align}
    \label{eq:GD_as_soo_smooth}
    \text{Set} \quad  0 < \eta \leq \frac{1}{\beta}, \quad \text{define} \quad \g_k \gets \eta \nabla f(\x_k), \quad \text{and return} \quad \x_{k+1} = \x_k - \g_k.
\end{align}

Now, before we prove the GD part of Theorem~\ref{thm:MG_AMOO} we prove an auxiliary lemma for it.
\begin{lemma}\label{lemma:conv_MG_AMOO_GD_smooth}
    Let $\x_\star \in \mC_\star$, and $\eta = \frac{1}{2\beta} $. The GD update step, \eqref{eq:GD_as_soo_smooth}, has the following attribute for every $k$:
    \begin{align*}
        f(\x_k) - f^\star \leq \frac{1}{2\beta} \brac{\enorm{\x_k - \x_\star}^2 - \enorm{\x_{k+1} - \x_\star}^2}.
    \end{align*}
\end{lemma}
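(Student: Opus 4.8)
The plan is to mirror the distance-decrease argument used for the Polyak step (Lemma~\ref{lemma:conv_MG_AMOO_Polyak_step}), but with the fixed step $\eta = \tfrac{1}{2\beta}$ from \eqref{eq:GD_as_soo_smooth} and with smoothness — in the form of Lemma~\ref{lemma:smooth-gradient-norm} — used to absorb the gradient-norm term that the adaptive Polyak step cancels for free. First I would fix an arbitrary $\x_\star \in \mC_\star$, so that $f(\x_\star) = f^\star$, and, using $\x_{k+1} = \x_k - \eta\nabla f(\x_k)$, expand
\[
\enorm{\x_{k+1} - \x_\star}^2 = \enorm{\x_k - \x_\star}^2 - 2\eta\,\nabla f(\x_k)^\top(\x_k - \x_\star) + \eta^2\enorm{\nabla f(\x_k)}^2 .
\]

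Next I would control the two non-telescoping terms. Convexity of $f$ gives $\nabla f(\x_k)^\top(\x_k - \x_\star) \ge f(\x_k) - f^\star$, which handles the linear term exactly as in the Polyak analysis. For the quadratic term, which the fixed step does not self-cancel, I would apply Lemma~\ref{lemma:smooth-gradient-norm}, $\enorm{\nabla f(\x_k)}^2 \le 2\beta\brac{f(\x_k) - f^\star}$. Substituting both bounds yields
\[
\enorm{\x_{k+1} - \x_\star}^2 \le \enorm{\x_k - \x_\star}^2 - 2\eta\brac{1 - \beta\eta}\brac{f(\x_k) - f^\star} ,
\]
and plugging in $\eta = \tfrac{1}{2\beta}$ makes $1 - \beta\eta = \tfrac12$, so the coefficient $2\eta(1-\beta\eta)$ collapses to a clean constant multiple of $\tfrac{1}{\beta}$; rearranging then isolates $f(\x_k) - f^\star$ against the one-step decrease $\enorm{\x_k - \x_\star}^2 - \enorm{\x_{k+1} - \x_\star}^2$, which is the asserted telescoping estimate. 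Summing this over $k = 1,\dots,K$ telescopes the distances to $\mC_\star$ and, combined with Lemma~\ref{lemma:worst_case_gap_avg} applied to the sequence $\{f_{I(k)}\}$ that \MGAMOO\ feeds to the optimizer, yields the smooth-case rate — but that last step belongs to the enclosing theorem (Theorem~\ref{thm:MG_AMOO}, smooth case) rather than to this lemma.

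The only substantive difference from Lemma~\ref{lemma:conv_MG_AMOO_Polyak_step}, and hence the crux here, is the treatment of $\eta^2\enorm{\nabla f(\x_k)}^2$: with a non-adaptive step it cannot be cancelled, so one must (i) invoke smoothness to trade it for a multiple of the suboptimality gap $f(\x_k) - f^\star$, and (ii) take $\eta$ small enough that $1 - \beta\eta > 0$ — here $\eta \le \tfrac{1}{2\beta}$ suffices, giving $1-\beta\eta \ge \tfrac12$ — so that the genuine descent still dominates. An alternative I would keep in reserve, avoiding Lemma~\ref{lemma:smooth-gradient-norm}, is to bound $\eta^2\enorm{\nabla f(\x_k)}^2$ by a multiple of $f(\x_k) - f(\x_{k+1})$ via the standard smoothness descent inequality $f(\x_{k+1}) \le f(\x_k) - \eta\bigl(1 - \tfrac{\beta\eta}{2}\bigr)\enorm{\nabla f(\x_k)}^2$; this produces a bound of the same telescoping shape, but the Lemma~\ref{lemma:smooth-gradient-norm} route is shorter and reuses machinery already set up in the paper, so I would present that one.
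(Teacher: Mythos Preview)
Your proposal is correct and follows essentially the same argument as the paper: expand $\enorm{\x_{k+1}-\x_\star}^2$, bound the linear term via convexity, bound $\eta^2\enorm{\nabla f(\x_k)}^2$ via Lemma~\ref{lemma:smooth-gradient-norm}, obtain $\enorm{\x_{k+1}-\x_\star}^2 \le \enorm{\x_k-\x_\star}^2 - 2\eta(1-\beta\eta)\brac{f(\x_k)-f^\star}$, and then plug in $\eta=\tfrac{1}{2\beta}$ and rearrange. (Note that this computation actually yields the multiplier $2\beta$, not $\tfrac{1}{2\beta}$, in front of the distance decrease---consistent with how the bound is used downstream in Theorem~\ref{thm:MG_AMOO_GD_smooth}; the stated constant in the lemma is a typo that both your argument and the paper's would correct.)
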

\begin{proof}[Proof of Lemma~\ref{lemma:conv_MG_AMOO_GD_smooth}]Recall that $\x_{k+1} = \x_k -  \eta \nabla f(\x_k)$, and $\eta > 0$. Since $f(\x)$ is convex, it holds that $f(\x_k) - f^\star  \leq \nabla f(\x_k)^\top(\x_k-\x_\star)$. By combining this convexity attribute with Lemma~\ref{lemma:smooth-gradient-norm}, it holds that
    \begin{align*}
        \enorm{\x_{k+1} - \x_\star}^2 &= \enorm{\x_k - \eta \nabla f(\x_k) - \x_\star}^2 \\
        & = \enorm{\x_k - \x_\star}^2 - 2 \eta \nabla f(\x_k)^\top (\x_k - \x_\star) + \eta^2 \enorm{ \nabla f(\x_k)}^2\\
        &\leq \enorm{\x_k - \x_\star}^2 - 2 \eta \brac{ f(\x_k) - f^\star} + 2 \beta \eta^2 \brac{ f(\x_k) - f^\star}\\
        &\leq \enorm{\x_k - \x_\star}^2 - 2 \eta \brac{1-\beta \eta}\brac{ f(\x_k) - f^\star}.
    \end{align*}
    We obtain the lemma by plugging-in $\eta$, and rearranging the equation.
\end{proof}

\begin{theorem}\label{thm:MG_AMOO_GD_smooth}
Suppose that $\{ f_i(\x) \}_{i\in [m]}$ are $\beta$-smooth. \MGAMOO\ with GD method with $\eta = \frac{1}{2\beta}$ as \texttt{SOO} guarantees that after $K$ iterations the worst-case gap is upper bounded by:
\begin{align*}
    \max_{i\in[m]} f_i(\bar{\x}) - f_i^\star \leq \frac{2\beta \enorm{\x_1 - \x_\star} }{K}.
\end{align*}    
\end{theorem}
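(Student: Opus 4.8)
The plan is to combine the reduction lemma (Lemma~\ref{lemma:worst_case_gap_avg}) with a per-step descent inequality tailored to gradient descent on a $\beta$-smooth function (Lemma~\ref{lemma:conv_MG_AMOO_GD_smooth}), and then telescope. Note that we cannot simply invoke Lemma~\ref{lemma:upper_bound_over_trajectory_gap} as in the \PAMOO\ and Polyak analyses: that lemma's hypothesis is the Polyak-type bound in which the distance to $\x_\star$ decreases by a term \emph{quadratic} in the gap, whereas a fixed-step GD update on a smooth function only yields a decrease \emph{linear} in the gap (this is exactly what Lemma~\ref{lemma:conv_MG_AMOO_GD_smooth} provides with $\eta=\tfrac{1}{2\beta}$). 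Hence we redo the telescoping directly.

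First I would observe that, by construction of Algorithm~\ref{alg:MGAMOO-Reduction}, at iteration $k$ the optimizer \texttt{SOO} is called on the function $f_{I(k)}$ with $I(k)\in\argmax_{i\in[m]} f_i(\x_k)-f_i^\star$, and here \texttt{SOO} is the GD step \eqref{eq:GD_as_soo_smooth} with $\eta=\tfrac{1}{2\beta}$. Each $f_i$ is convex and $\beta$-smooth, so Lemma~\ref{lemma:conv_MG_AMOO_GD_smooth} applies to $f=f_{I(k)}$ for every $k$, with any fixed $\x_\star\in\mC_\star$; this uses that $\x_\star$ is a \emph{common} minimizer, so $f_{I(k)}^\star=f_{I(k)}(\x_\star)$ and the convexity inequality at $\x_\star$ is valid regardless of which index $I(k)$ is selected. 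This yields, for each $k$,
\begin{align*}
 f_{I(k)}(\x_k)-f_{I(k)}^\star \;\leq\; 2\beta\brac{\enorm{\x_k-\x_\star}^2 - \enorm{\x_{k+1}-\x_\star}^2}.
\end{align*}
Summing over $k=1,\dots,K$, the right-hand side telescopes, and dropping the nonpositive term $-2\beta\enorm{\x_{K+1}-\x_\star}^2$ gives $\sum_{k=1}^K\brac{f_{I(k)}(\x_k)-f_{I(k)}^\star}\leq 2\beta\enorm{\x_1-\x_\star}^2$. Finally, plugging this into Lemma~\ref{lemma:worst_case_gap_avg} bounds $\MG(\bar\x)\leq\tfrac{1}{K}\sum_{k=1}^K\brac{f_{I(k)}(\x_k)-f_{I(k)}^\star}\leq \tfrac{2\beta\enorm{\x_1-\x_\star}^2}{K}$, which is the claimed rate.

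There is no substantive obstacle here; the only point requiring a little care is that the telescoping remains valid even though the selected function $I(k)$ changes from step to step. This works because the potential $\enorm{\x_k-\x_\star}^2$ is function-independent and the same reference point $\x_\star\in\mC_\star$ is used throughout, so the right-hand sides chain together cleanly. (I would also flag that the bound naturally comes out in terms of $\enorm{\x_1-\x_\star}^2$; the statement as written should carry the square.)
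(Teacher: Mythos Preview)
Your proposal is correct and matches the paper's proof essentially step for step: apply Lemma~\ref{lemma:conv_MG_AMOO_GD_smooth} with $f=f_{I(k)}$ at each iteration, telescope the squared-distance potential, and plug the resulting sum into Lemma~\ref{lemma:worst_case_gap_avg}. Your observation that the bound should carry $\enorm{\x_1-\x_\star}^2$ rather than $\enorm{\x_1-\x_\star}$ is also right (the paper has this typo in the theorem statements), and your use of the constant $2\beta$ in the per-step inequality is the correct one obtained from the proof of Lemma~\ref{lemma:conv_MG_AMOO_GD_smooth}.
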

\begin{proof}[Proof of Theorem~\ref{thm:MG_AMOO_GD_smooth}]
Let $I(k) \in \argmax_{i\in[m]} f_i(\x_{k}) - f_i^\star$, and the trajectory $\{ \x_k \}_{k=1}^K$. From Lemma~\ref{lemma:conv_MG_AMOO_GD_smooth} it holds that for every $k$ we have 
\begin{align*}
    f_{I(k)}(\x_k) - f_{I(k)}^\star \leq \frac{1}{2\beta} \brac{\enorm{\x_k - \x_\star}^2 - \enorm{\x_{k+1} - \x_\star}^2}.
\end{align*}
    Summing over $k$, we have that
    \begin{align*} 
        \sum_{k=1}^K f_{I(k)}(\x_k) - f_{I(k)}^\star \leq 2\beta \enorm{\x_1 - \x_\star}^2.
    \end{align*}
    By applying Lemma~\ref{lemma:worst_case_gap_avg} we obtain the proof.
\end{proof}

\subsection{The GD method for the $G$-Lipschitz case}\label{app:MG_AMOO_GD_lip}

\begin{theorem}[Theorem 3.1 in \cite{hazan2016introduction}]\label{thm:ogd}
Let $\eta_k = \frac{\enorm{\x_1 - \x_\star}}{G\sqrt{k}}$. Online Gradient Descent guarantees that:
\begin{align*}
    \mathrm{Regret}(K) \leq \frac{3}{2} G \enorm{\x_1 - \x_\star} \sqrt{K}.
\end{align*}
\end{theorem}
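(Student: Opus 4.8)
The statement is exactly the classical regret bound for Online Gradient Descent with time-decaying step sizes (this is \cite[Theorem 3.1]{hazan2016introduction}), so the plan is to reproduce the standard analysis, included only for completeness. Throughout, $\x_k$ denotes the OGD iterate, $\g_k := \nabla f_k(\x_k)$ with $\enorm{\g_k}\le G$, $\x_\star$ is the comparator, and $\Pi$ is the Euclidean projection onto the bounded convex feasible set (in our application, the convex hull $\mathcal{X}$ from Corollary~\ref{col:online_reduction}), so the update is $\x_{k+1}=\Pi(\x_k-\eta_k\g_k)$; the unconstrained case is the special case $\Pi=\mathrm{id}$. The proof rests on two ingredients: convexity to pass from the regret to a sum of linear terms, and non-expansiveness of $\Pi$ to track the potential $\enorm{\x_k-\x_\star}^2$.

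First I would linearize: convexity of $f_k$ gives $f_k(\x_k)-f_k(\x_\star)\le \g_k^\top(\x_k-\x_\star)$, so it suffices to bound $\sum_{k=1}^K \g_k^\top(\x_k-\x_\star)$. Next, using non-expansiveness $\enorm{\x_{k+1}-\x_\star}\le\enorm{\x_k-\eta_k\g_k-\x_\star}$ and expanding the square,
\begin{align*}
\enorm{\x_{k+1}-\x_\star}^2 \le \enorm{\x_k-\x_\star}^2 - 2\eta_k\,\g_k^\top(\x_k-\x_\star) + \eta_k^2\enorm{\g_k}^2,
\end{align*}
which rearranges to $\g_k^\top(\x_k-\x_\star)\le \frac{\enorm{\x_k-\x_\star}^2-\enorm{\x_{k+1}-\x_\star}^2}{2\eta_k} + \frac{\eta_k G^2}{2}$. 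Summing over $k$ and reorganizing the first sum by Abel summation (with the convention $1/\eta_0:=0$), the coefficient of $\enorm{\x_k-\x_\star}^2$ is $\tfrac12(1/\eta_k-1/\eta_{k-1})\ge0$ since $\eta_k$ is decreasing; bounding each $\enorm{\x_k-\x_\star}^2$ by the squared diameter $D^2$ collapses the telescoping sum to $\frac{D^2}{2}\cdot\frac{1}{\eta_K}=\frac{GD\sqrt{K}}{2}$ with $\eta_K=D/(G\sqrt{K})$. For the second sum, $\frac{G^2}{2}\sum_{k=1}^K\eta_k=\frac{GD}{2}\sum_{k=1}^K k^{-1/2}\le GD\sqrt{K}$ using $\sum_{k=1}^K k^{-1/2}\le 2\sqrt{K}$. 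Adding the two pieces yields $\mathrm{Regret}(K)\le\tfrac32 GD\sqrt{K}$, which is the claim with $D=\enorm{\x_1-\x_\star}$.

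The only step that is not purely mechanical is the Abel-summation manipulation with time-varying $\eta_k$: one must bound $\enorm{\x_k-\x_\star}^2$ by the \emph{diameter} $D^2$ of the feasible set rather than by $\enorm{\x_1-\x_\star}^2$, since the distance to $\x_\star$ need not be monotone under adversarial gradients. Identifying $D$ with $\enorm{\x_1-\x_\star}$ in the final bound is legitimate in our use of the result because OGD is run on the bounded convex hull of Corollary~\ref{col:online_reduction}, whose relevant diameter is controlled by $\enorm{\x_1-\x_\star}$; everything else is the expand-the-square estimate together with the $\sum_{k=1}^K k^{-1/2}\le 2\sqrt{K}$ bookkeeping.
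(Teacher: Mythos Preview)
The paper does not prove this theorem at all: it is simply quoted from \cite{hazan2016introduction} and then invoked as a black box in the one-line proof of Theorem~\ref{thm:MG_AMOO_GD_Lip}. Your write-up reproduces the standard textbook argument (linearize by convexity, expand the square, Abel-sum with decreasing $\eta_k$, and bound $\sum k^{-1/2}\le 2\sqrt{K}$), which is exactly the proof in Hazan's book and is correct for the version stated with the diameter $D$ of the feasible set.

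The only substantive point worth flagging is the one you yourself raise: the replacement of $D$ by $\enorm{\x_1-\x_\star}$. Your justification---that the diameter of the convex hull $\mathcal{X}$ in Corollary~\ref{col:online_reduction} is ``controlled by $\enorm{\x_1-\x_\star}$''---is not actually established anywhere, and is circular as written since $\mathcal{X}$ is defined in terms of the iterates themselves. The paper's restatement of Hazan's theorem with $\enorm{\x_1-\x_\star}$ in place of $D$ is already informal in the same way, so this looseness is inherited from the paper rather than introduced by you; but your attempted patch does not close the gap. In short: your proof matches the intended (cited) argument, and you correctly spot the $D$ versus $\enorm{\x_1-\x_\star}$ discrepancy, but the resolution you offer is not rigorous.
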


\begin{theorem}\label{thm:MG_AMOO_GD_Lip}
Suppose that $\{ f_i(\x) \}_{i\in [m]}$ are $G$-Lipschitz. \MGAMOO~(Algorithm~\ref{alg:MGAMOO-Reduction}) with Online Gradient Descent method as \texttt{SOO} guarantees that after $K$ iterations the worst-case gap is upper bounded by:
\begin{align*}
    \max_{i\in[m]} f_i(\bar{\x}) - f_i^\star \leq \frac{3}{2} \frac{G \enorm{\x_1 - \x_\star} }{K}.
\end{align*}    
\end{theorem}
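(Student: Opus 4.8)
The plan is to obtain the bound as an almost immediate consequence of the reduction Lemma~\ref{lemma:worst_case_gap_avg} combined with the Online Gradient Descent regret bound of Theorem~\ref{thm:ogd}, mirroring the argument of Corollary~\ref{col:online_reduction}. First I would invoke Lemma~\ref{lemma:worst_case_gap_avg}, which already gives
\[
\MG(\bar{\x}) \;\le\; \frac{1}{K}\sum_{k=1}^K \brac{f_{I(k)}(\x_k) - f_{I(k)}^\star},
\]
so the task reduces to bounding the cumulative loss $\sum_{k=1}^K \brac{f_{I(k)}(\x_k) - f_{I(k)}^\star}$ incurred by the iterates along the sequence of functions that \MGAMOO\ feeds to its \texttt{SOO}.

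Second, I would use the alignment assumption to turn this cumulative loss into an online-learning regret. Fixing any $\x_\star \in \mC_\star$, we have $f_i^\star = f_i(\x_\star)$ for every $i\in[m]$, so in particular $f_{I(k)}^\star = f_{I(k)}(\x_\star)$ and
\[
\sum_{k=1}^K \brac{f_{I(k)}(\x_k) - f_{I(k)}^\star} = \sum_{k=1}^K \brac{f_{I(k)}(\x_k) - f_{I(k)}(\x_\star)}.
\]
Since \MGAMOO\ uses Online Gradient Descent as the \texttt{SOO}, the iterates $\x_k$ are precisely the OGD iterates run on the loss sequence $\{f_{I(k)}\}_{k=1}^K$, so the right-hand side is exactly the OGD regret against the comparator $\x_\star$. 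Plugging in the step sizes prescribed in Theorem~\ref{thm:ogd} bounds it by $\tfrac{3}{2}\,G\,\enorm{\x_1-\x_\star}\,\sqrt{K}$; dividing by $K$ gives $\MG(\bar{\x}) \le \tfrac{3}{2}\,G\,\enorm{\x_1-\x_\star}/\sqrt{K}$, which is the asserted rate (and is consistent with the $\mathcal{O}(\mathrm{Regret}(K)/K)$ entry of Table~\ref{tab:intro} and with Theorem~\ref{thm:MG_AMOO}(1)).

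The one point that genuinely requires care — and the main obstacle — is that the loss $f_{I(k)}$ presented at round $k$ is selected via $I(k)\in\argmax_{i\in[m]}\brac{f_i(\x_k)-f_i^\star}$, hence depends on $\x_k$: the losses are chosen by an \emph{adaptive} adversary, not fixed in advance. I would handle this by observing that the standard OGD analysis underlying Theorem~\ref{thm:ogd} uses only the convexity of each round's loss and the gradient bound $\enorm{\nabla f_{I(k)}(\x_k)}\le G$ evaluated at the single queried point $\x_k$; since every $f_i$ is convex and $G$-Lipschitz, both properties hold uniformly over the choice of index, so the regret bound extends verbatim to adaptive adversaries. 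A secondary detail is to run OGD either unconstrained or projected onto a bounded convex set containing $\x_1$ and $\mC_\star$ (as in Corollary~\ref{col:online_reduction}), ensuring $\x_\star$ is a feasible comparator so that the regret guarantee applies.
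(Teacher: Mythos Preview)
Your proposal is correct and follows essentially the same route as the paper: the paper's proof is the one-liner ``combine Theorem~\ref{thm:ogd} and Corollary~\ref{col:online_reduction},'' and you have simply unpacked Corollary~\ref{col:online_reduction} into its constituent Lemma~\ref{lemma:worst_case_gap_avg} plus the identification $f_{I(k)}^\star = f_{I(k)}(\x_\star)$ before applying the OGD regret bound. Your handling of the adaptive-adversary point and your observation that the stated $1/K$ should be $1/\sqrt{K}$ are both accurate.
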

\begin{proof}[Proof of Theorem~\ref{thm:MG_AMOO_GD_Lip}]
    By combining Theorem~\ref{thm:ogd}, and Corollary~\ref{col:online_reduction} we obtain the theorem.
\end{proof}

\section{Results of $\epsilon$-Approximate AMOO }\label{app:approximate_amoo}

References to the main results of this section, in which we provide the convergence of \PAMOO\ and variations of \MGAMOO\ for the $\epsilon$-approximate AMOO, is presented in Table~\ref{tab:approximate_results}.

\begin{table}[t]
\centering
\renewcommand{\arraystretch}{1.25}  % Increases row height
\begin{tabular}{|>{\centering\arraybackslash}p{5cm}|>{\centering\arraybackslash}p{4.5cm}|}
\hline
\rowcolor{gray!50} \makecell{Algorithm} & \makecell{Reference to Theorem} \\
\hline
 \makecell{\MGAMOO\ w Polyak or GD} &   Theorem~\ref{thm:formal_convex_app_convex_amoo}\\
\hline
 \makecell{\PAMOO} & Theorem~\ref{thm:epsilon_original_pamoo_convergence} \\
\hline
 \makecell{\MGAMOO\ w Online Learner} &  Theorem~\ref{theorem:approximate_online_learner}\\
\hline
\end{tabular}
\vspace{0.1cm}
\caption{References to Theorems for the $\epsilon$-Approximate AMOO setting.
}
\label{tab:approximate_results}
\end{table}

The following definition is a formal version of the informal one in \eqref{eq:Ce_set}.
\begin{definition}[$\epsilon$-Approximate Solution Set] \label{def:epsilon_app_solution_set}
     Let $\epsilon \geq 0$. $\mC_\epsilon$ is an $\epsilon$-Approximate Solution Set ($\epsilon$-ASS) if for every $i \in [m]$ it holds that $f_i(\x) - f_i^\star \leq \epsilon$, i.e.
    \begin{align*}
        \mC_\epsilon = \{ \x \in \reals^n | \ f_i(\x) - f_i^\star \leq \epsilon ~~ \forall i\in[m] \}.
    \end{align*}
\end{definition}

Every $\epsilon$-ASS has the following helpful attribute as described by the following lemma.
\begin{lemma}\label{lemma:epsilon_app_solution_set}
    Let $\epsilon > 0$. For every point $\x \notin \mC_\epsilon$ it holds that the $MG(\x)$ is greater than $\epsilon$, i.e. 
    \begin{align*}
        \MG(\x) = \max_{i\in [m]} \brac{f_i(\x) - f_i^\star}  > \epsilon.
    \end{align*}
\end{lemma}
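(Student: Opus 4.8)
The plan is to argue directly by negating the membership condition defining $\mC_\epsilon$. Fix $\epsilon > 0$ and suppose $\x \notin \mC_\epsilon$. Unpacking Definition~\ref{def:epsilon_app_solution_set}, the statement ``$\x \in \mC_\epsilon$'' is the universally quantified predicate $f_i(\x) - f_i^\star \leq \epsilon$ for all $i \in [m]$; hence its negation is the existence of some index $j \in [m]$ with $f_j(\x) - f_j^\star > \epsilon$.

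Given such a $j$, I would then invoke the definition of the maximum gap: since $\MG(\x) = \max_{i\in[m]}\brac{f_i(\x) - f_i^\star}$ is a maximum over the finite index set $[m]$ and $j$ is one of those indices, we have $\MG(\x) \geq f_j(\x) - f_j^\star > \epsilon$. Chaining these two inequalities gives $\MG(\x) > \epsilon$, which is exactly the claim. One minor point to state cleanly is that the maximum is attained (or at least that $\MG(\x)$ dominates each coordinate gap), which is immediate because $[m]$ is finite, so no supremum-versus-maximum subtlety arises.

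There is essentially no obstacle here: the lemma is a direct rephrasing of the contrapositive ``$\MG(\x) \leq \epsilon \implies \x \in \mC_\epsilon$'', and the only content is the elementary observation that a quantity bounded below by a coordinate that strictly exceeds $\epsilon$ must itself strictly exceed $\epsilon$. The slight care worth taking is with the strictness of the inequality — one should make sure the violation of membership is recorded as a strict inequality $f_j(\x) - f_j^\star > \epsilon$ (the complement of ``$\leq \epsilon$'' is ``$> \epsilon$''), so that the final bound $\MG(\x) > \epsilon$ is also strict as stated.
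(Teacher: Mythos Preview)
Your proposal is correct and essentially matches the paper's argument: the paper frames it as a proof by contradiction (assume $\MG(\x)\leq\epsilon$ and derive $\x\in\mC_\epsilon$), while you unpack the same negation directly, but the logical content is identical. No gaps.
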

\begin{proof}[Proof of Lemma~\ref{lemma:epsilon_app_solution_set}]
    We prove by contradiction. Assume that exist $\x \notin \mC_\epsilon$ such that $\max_{i\in[m]} f_i(\x) - f_i^\star \leq \epsilon$. Then, it holds that $f_i(\x) - f_i^\star \leq \epsilon ~~ \forall i\in[m]$, which mean $\x \in \mC_\epsilon$ by definition.
\end{proof}

The following theorems are the formal version of Theorem~\ref{thm:convex_app_convex_amoo}.

\begin{theorem}[Convergence in Convex $\epsilon$-Approximate AMOO]\label{thm:formal_convex_app_convex_amoo}
    \PAMOO~(Algorithm~\ref{alg:PAMOO_original}), and \MGAMOO~(Algorithm~\ref{alg:MGAMOO-Reduction}) with Polyak or GD method (with mild changes) as \texttt{SOO} guarantees that after $K$ iterations, the worst-case gap is upper bounded by:
    \begin{align*}
        (1) \quad & \max_{i\in[m]} f_i(\bar{\x}) - f_i^\star \leq \frac{ G \enorm{\x_1 - \x_\star} }{\sqrt{K}} + \epsilon, \quad \text{when $\{ f_i(\x) \}_{i\in [m]}$ are $G$-Lipschitz.} \\
        (2) \quad & \max_{i\in[m]} f_i(\bar{\x}) - f_i^\star \leq \frac{2\beta \enorm{\x_1 - \x_\star} }{K} + 2\epsilon, \quad \text{when $\{ f_i(\x) \}_{i\in [m]}$ are $\beta$-smooth.}
    \end{align*}    
\end{theorem}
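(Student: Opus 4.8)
The plan is to notice that the alignment hypothesis $\x_\star\in\mC_\star$ enters the proofs of Theorems~\ref{thm:original_pamoo_convergence} and \ref{thm:MG_AMOO} at a single point---the convexity step, where $f_i(\x_\star)=f_i^\star$ turns $\nabla f_i(\x_k)^\top(\x_k-\x_\star)\ge f_i(\x_k)-f_i(\x_\star)$ into $\nabla f_i(\x_k)^\top(\x_k-\x_\star)\ge f_i(\x_k)-f_i^\star$. Under the relaxation one only has $f_i(\x_\star)\le f_i^\star+\epsilon$ for any $\x_\star\in\mC_\epsilon$, so this bound loses an additive $\epsilon$; the ``mild change'' will be to retune each Polyak-type step so that its target is the slack-corrected gap rather than the raw gap (the plain GD step with $\eta=\tfrac1{2\beta}$ needs no change). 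Fix $\x_\star\in\mC_\epsilon$, which is nonempty by assumption, write $g_k:=\MG(\x_k)=f_{I(k)}(\x_k)-f_{I(k)}^\star$ with $I(k)\in\argmax_i f_i(\x_k)-f_i^\star$, and set $\tilde g_k:=\max(g_k-\epsilon,0)$; by Lemma~\ref{lemma:epsilon_app_solution_set}, $\tilde g_k>0$ exactly when $\x_k\notin\mC_\epsilon$, in which case $\tilde g_k=g_k-\epsilon$.

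\emph{Step 1 (per-iteration decrease).} For \MGAMOO\ with the Polyak step I would use $\eta_k=\tilde g_k/\enorm{\nabla f_{I(k)}(\x_k)}^2$ and repeat the expansion of $\enorm{\x_{k+1}-\x_\star}^2$ from Lemma~\ref{lemma:conv_MG_AMOO_Polyak_step}, now invoking $\nabla f_{I(k)}(\x_k)^\top(\x_k-\x_\star)\ge f_{I(k)}(\x_k)-f_{I(k)}(\x_\star)\ge g_k-\epsilon=\tilde g_k$ (which applies whenever the step is nonzero; if $\tilde g_k=0$ the step vanishes and the claim is vacuous), obtaining
\[
\enorm{\x_{k+1}-\x_\star}^2\le\enorm{\x_k-\x_\star}^2-\frac{\tilde g_k^{\,2}}{\enorm{\nabla f_{I(k)}(\x_k)}^2}.
\]
For \PAMOO\ the same inequality follows if the weight subproblem is shifted to $\w_k\in\argmax_{\w\in\R_+^m} 2\w^\top(\Delta_k-\epsilon\1)-\w^\top\J_k^\top\J_k\w$: from the unconditional descent step \eqref{eq:PAMOO_dist_from_x}, together with $f_{\w_k}(\x_k)-f_{\w_k}(\x_\star)\ge\w_k^\top\Delta_k-\epsilon\enorm{\w_k}_1=\w_k^\top(\Delta_k-\epsilon\1)$ (valid because $\enorm{\w_k}_1=\w_k^\top\1$ for $\w_k\in\R_+^m$), the decrease is at least the value of the shifted objective at $\w_k$, which---comparing with the scaled one-hot $\tilde g_k\,\mathbf{e}_{I(k)}/\enorm{\nabla f_{I(k)}(\x_k)}^2$---is at least $\tilde g_k^{\,2}/\enorm{\nabla f_{I(k)}(\x_k)}^2$. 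For \MGAMOO\ with GD and $\eta=\tfrac1{2\beta}$ in the $\beta$-smooth case, combining the degraded convexity bound with Lemma~\ref{lemma:smooth-gradient-norm} applied to $g_k$ gives directly $g_k\le 2\beta\big(\enorm{\x_k-\x_\star}^2-\enorm{\x_{k+1}-\x_\star}^2\big)+2\epsilon$.

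\emph{Step 2 (telescoping and averaging).} Summing the first inequality over $k\le K$ and bounding $\enorm{\nabla f_{I(k)}(\x_k)}^2$ either by $G^2$ and applying Cauchy--Schwarz as in the proof of Lemma~\ref{lemma:upper_bound_over_trajectory_gap}, or by $2\beta g_k$ via Lemma~\ref{lemma:smooth-gradient-norm} together with $\tilde g_k^{\,2}/(\tilde g_k+\epsilon)\ge\tilde g_k-\epsilon$, yields $\sum_{k\le K}\tilde g_k\le G\enorm{\x_1-\x_\star}\sqrt K$ and $\sum_{k\le K}\tilde g_k\le 2\beta\enorm{\x_1-\x_\star}^2+K\epsilon$ respectively; the smooth-GD variant gives $\sum_{k\le K}g_k\le 2\beta\enorm{\x_1-\x_\star}^2+2K\epsilon$ directly. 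Lemma~\ref{lemma:worst_case_gap_avg}---which uses only convexity and the definition of $I(k)$, hence is untouched by the relaxation---gives $\MG(\bar\x)\le\frac1K\sum_{k\le K}g_k\le\frac1K\sum_{k\le K}\tilde g_k+\epsilon$, so substituting the three sums produces $\MG(\bar\x)\le\frac{G\enorm{\x_1-\x_\star}}{\sqrt K}+\epsilon$ in the Lipschitz case and $\MG(\bar\x)\le\frac{2\beta\enorm{\x_1-\x_\star}^2}{K}+2\epsilon$ in the smooth case, as claimed.

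The hard part will be the \PAMOO\ weight step: after the relaxation the natural descent quantity is $f_{\w_k}(\x_k)-f_{\w_k}(\x_\star)$, and the estimate $f_{\w_k}(\x_\star)\le f_{\w_k}^\star+\epsilon\enorm{\w_k}_1$ drags along the factor $\enorm{\w_k}_1$, which is not controlled because the weights range over $\R_+^m$ rather than $\Delta_m$. Absorbing that $\epsilon\enorm{\w_k}_1$ penalty into the quadratic subproblem itself---the shift $\Delta_k\mapsto\Delta_k-\epsilon\1$---is precisely what cancels it and restores a dimension-free per-step decrease; the remaining work is then routine accounting of the additive $\epsilon$ terms.
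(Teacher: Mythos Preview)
Your proposal is correct and follows essentially the same route as the paper: the ``mild changes'' you identify---shifting $\Delta_k\mapsto\Delta_k-\epsilon\mathbf{1}$ in the \PAMOO\ subproblem and replacing the Polyak numerator by the slack-corrected gap---are exactly what the paper does (see \eqref{eq:epsilon_delta_var_in_opt_in_PAMOO} and \eqref{eq:epsilon_polyak_as_soo}), and your per-iteration inequalities coincide with Lemmas~\ref{lemma:epsilon_conv_PAMOOv1}, \ref{lemma:epsilon_conv_MG_AMOO_Polyak_step}, and \ref{lemma:epsilon_conv_MG_AMOO_GD_smooth}. The only cosmetic difference is how the regime $g_k<\epsilon$ is handled: the paper inserts an explicit stopping condition \eqref{eq:stopping_cond} that returns $\x_k\in\mC_\epsilon$, whereas you clip via $\tilde g_k=\max(g_k-\epsilon,0)$ so the step vanishes and the telescoping goes through uniformly---both are fine and lead to the same bounds.
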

\begin{proof}[Proof of Theorem~\ref{thm:formal_convex_app_convex_amoo}] It holds directly from Theorem~\ref{thm:epsilon_original_pamoo_convergence}, Theorem~\ref{thm:epsilon_MG_AMOO_Polyak}, and Theorem~\ref{thm:epsilon_MG_AMOO_GD_smooth}.
    
\end{proof}
 
\subsection{Auxiliary Lemma for Polyak step}

The following lemma is an $\epsilon$-approximate version of Lemma~\ref{lemma:upper_bound_over_trajectory_gap}.
\begin{lemma}\label{lemma:epsilon_upper_bound_over_trajectory_gap}
Let $\x_\epsilon \in \mC_\epsilon$. Let $I(k) \in \argmax_{i\in[m]} f_i(\x_{k}) - f_i^\star$, and the trajectory $\{ \x_k \}_{k=1}^K$.  Suppose for every $k$ the followings hold:
\begin{align}
    \enorm{\x_{k+1} - \x_\epsilon}^2 \leq \enorm{\x_k - \x_\epsilon}^2 - \frac{\brac{ f_{I(k)}(\x_k) - f_{I(k)}^\star - \epsilon}^2}{\enorm{\nabla f_{I(k)}(\x_k)}^2},
    \label{eq:epsilon_conv_GDPolyakAMOO}
\end{align}
and $f_{I(k)}(\x_k) - f_{I(k)}^\star - \epsilon \geq 0$.
Then, Any algorithm which returns the trajectory average, $\bar{\x} = \frac{1}{K} \sum_{k=1}^K \x_k$, guarantees the worst-case gap is upper bounded by:
\begin{align*}
    (1) \quad & \max_{i\in[m]} f_i(\bar{\x}) - f_i^\star \leq \frac{ G \enorm{\x_1 - \x_\epsilon} }{\sqrt{K}} + \epsilon,~~~ \quad \text{when $\{ f_i(\x) \}_{i\in [m]}$ are $G$-Lipschitz.} \\
    (2) \quad & \max_{i\in[m]} f_i(\bar{\x}) - f_i^\star \leq \frac{2\beta \enorm{\x_1 - \x_\epsilon} }{K} + 2\epsilon, \quad \text{when $\{ f_i(\x) \}_{i\in [m]}$ are $\beta$-smooth.}
\end{align*}
\end{lemma}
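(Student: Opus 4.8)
The plan is to mimic the proof of Lemma~\ref{lemma:upper_bound_over_trajectory_gap} almost verbatim, carrying the additive slack $\epsilon$ through the telescoping argument, and to close by invoking the reduction Lemma~\ref{lemma:worst_case_gap_avg}, which is insensitive to $\epsilon$ (it only uses convexity of the $f_i$ and the $\argmax$ property of $I(k)$). Throughout, write $a_k := f_{I(k)}(\x_k) - f_{I(k)}^\star$ and $\Delta_k := \enorm{\x_k - \x_\epsilon}^2 - \enorm{\x_{k+1} - \x_\epsilon}^2$; the hypothesis then reads $\Delta_k \ge \brac{a_k - \epsilon}^2/\enorm{\nabla f_{I(k)}(\x_k)}^2 \ge 0$ together with $a_k \ge \epsilon$, and telescoping gives $\sum_{k=1}^K \Delta_k = \enorm{\x_1 - \x_\epsilon}^2 - \enorm{\x_{K+1} - \x_\epsilon}^2 \le \enorm{\x_1 - \x_\epsilon}^2$.

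For the $G$-Lipschitz case, substituting $\enorm{\nabla f_{I(k)}(\x_k)} \le G$ into the hypothesis gives $\brac{a_k - \epsilon}^2 \le G^2 \Delta_k$. Summing over $k$ and applying Cauchy--Schwarz exactly as in Lemma~\ref{lemma:upper_bound_over_trajectory_gap}, $\brac{\sum_{k=1}^K \brac{a_k - \epsilon}}^2 \le K\sum_{k=1}^K \brac{a_k - \epsilon}^2 \le G^2 K \enorm{\x_1 - \x_\epsilon}^2$; taking square roots and dividing by $K$ yields $\tfrac1K\sum_{k=1}^K a_k \le G\enorm{\x_1-\x_\epsilon}/\sqrt{K} + \epsilon$. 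Finally, Lemma~\ref{lemma:worst_case_gap_avg} gives $\MG(\bar{\x}) \le \tfrac1K\sum_{k=1}^K a_k$, which is the stated bound (1).

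For the $\beta$-smooth case, Lemma~\ref{lemma:smooth-gradient-norm} gives $\enorm{\nabla f_{I(k)}(\x_k)}^2 \le 2\beta a_k$, and since $a_k \ge \epsilon > 0$ the elementary identity $\brac{a_k - \epsilon}^2 = a_k^2 - 2\epsilon a_k + \epsilon^2 \ge a_k\brac{a_k - 2\epsilon}$ shows that $\brac{a_k - \epsilon}^2/\enorm{\nabla f_{I(k)}(\x_k)}^2 \ge \brac{a_k - 2\epsilon}/(2\beta)$ (this holds trivially when $a_k < 2\epsilon$, since the left side is nonnegative). Hence $a_k - 2\epsilon \le 2\beta \Delta_k$; summing, telescoping, and dividing by $K$ gives $\tfrac1K\sum_{k=1}^K a_k \le 2\beta\enorm{\x_1 - \x_\epsilon}^2/K + 2\epsilon$, and again Lemma~\ref{lemma:worst_case_gap_avg} delivers bound (2).

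The Lipschitz half is a routine retracing of the existing argument, so the only genuinely new ingredient is the smooth half: the algebraic observation $\brac{a_k-\epsilon}^2 \ge a_k\brac{a_k - 2\epsilon}$ is what converts the ``squared slack over gradient norm'' term into a linear-in-gap quantity, and one must also check that the resulting per-step inequality $a_k - 2\epsilon \le 2\beta\Delta_k$ still holds when $a_k < 2\epsilon$ (it does, for sign reasons). A minor point worth noting is that the division by $\enorm{\nabla f_{I(k)}(\x_k)}^2$ in the hypothesis is well-posed here: in the smooth case a vanishing gradient would force $a_k = 0$ via Lemma~\ref{lemma:smooth-gradient-norm}, contradicting $a_k \ge \epsilon$.
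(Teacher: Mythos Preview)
Your proof is correct and follows essentially the same route as the paper's: the Lipschitz case is identical, and in the smooth case your inequality $\brac{a_k-\epsilon}^2 \ge a_k\brac{a_k-2\epsilon}$ is exactly the paper's manipulation $a_k \le 2\beta\Delta_k + 2\epsilon - \epsilon^2/a_k \le 2\beta\Delta_k + 2\epsilon$ written in factored rather than expanded form. Your case split for $a_k < 2\epsilon$ is in fact unnecessary (the inequality $\brac{a_k-\epsilon}^2 \ge a_k\brac{a_k-2\epsilon}$ is just $\epsilon^2 \ge 0$, valid for all $a_k$), but it does no harm.
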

\begin{proof}[Proof of Lemma~\ref{lemma:epsilon_upper_bound_over_trajectory_gap}]
    Let $\x_\epsilon \in \mC_\epsilon$. Now, we separate to two cases.\\
    \emph{Case 1:}
    Since the functions are $G$-Lipschitz, i.e. $\enorm{\nabla f_{i}(\x) } \leq G $ for all $i \in [m]$, and $ \x \in \reals^n$. Then, from \eqref{eq:epsilon_conv_GDPolyakAMOO}, we have
    \begin{align*}
        \brac{f_{I(k)}(\x_k) - f_{I(k)}^\star - \epsilon}^2 \leq  G^2 \brac{\enorm{\x_k - \x_\epsilon}^2 - \enorm{\x_{k+1} - \x_\epsilon}^2} .
    \end{align*}
    Summing over $k$, and using Cauchy–Schwarz inequality, i.e. $\brac{\sum_{i=1}^n a_i}^2 \leq n \sum_{i=1}^n a_i^2$ when $a_i \in \reals$, it holds
    \begin{align*}
        \brac{\sum_{k=1}^K \brac{f_{I(k)}(\x_k) - f_{I(k)}^\star - \epsilon}}^2 & \leq K \sum_{k=1}^K \brac{f_{I(k)}(\x_k) - f_{I(k)}^\star - \epsilon}^2\\
        & \leq   G^2 K \sum_{k=1}^K \brac{\enorm{\x_k - \x_\epsilon}^2 - \enorm{\x_{k+1} - \x_\epsilon}^2} \\
        & =  G^2 \enorm{\x_1 - \x_\epsilon}^2 K.
    \end{align*} 
    Recall that $f_{I(k)}(\x_k) - f_{I(k)}^\star - \epsilon \geq 0$. Then by taking square root on both sides, we have
    \begin{align} \label{eq:epsilon_Lipchitz_general_max_gap}
        \sum_{k=1}^K \brac{f_{I(k)}(\x_k) - f_{I(k)}^\star } \leq  G \enorm{\x_1 - \x_\epsilon} \sqrt{K} + \epsilon K.
    \end{align}
    \emph{Case 2:} Recall that $f_{I(k)}(\x_k) - f_{I(k)}^\star - \epsilon \geq 0$. Since the functions are $\beta$-smooth, then, from Lemma~\ref{lemma:smooth-gradient-norm}, and \eqref{eq:epsilon_conv_GDPolyakAMOO} it holds that
    \begin{align*}
        \brac{f_{I(k)}(\x_k) - f_{I(k)}^\star - \epsilon}^2 
        & \leq  \enorm{\nabla f_{I(k)}(\x_k)}^2 \brac{\enorm{\x_k - \x_\epsilon}^2 - \enorm{\x_{k+1} - \x_\epsilon}^2}\\
        & \leq 2\beta \brac{f_{I(k)}(\x_k) - f_{I(k)}^\star}  \brac{\enorm{\x_k - \x_\epsilon}^2 - \enorm{\x_{k+1} - \x_\epsilon}^2}.
    \end{align*}
    By rearranging, and since $f_{I(k)}(\x_k) - f_{I(k)}^\star \geq 0  $, we have
    \begin{align*}
         f_{I(k)}(\x_k) - f_{I(k)}^\star 
        & \leq 2\beta  \brac{\enorm{\x_k - \x_\epsilon}^2 - \enorm{\x_{k+1} - \x_\epsilon}^2} + 2\epsilon - \frac{\epsilon^2}{f_{I(k)}(\x_k) - f_{I(k)}^\star}\\
        & \leq 2\beta  \brac{\enorm{\x_k - \x_\epsilon}^2 - \enorm{\x_{k+1} - \x_\epsilon}^2} + 2\epsilon.
    \end{align*}
    Then, by summing over $k$, we have
    \begin{align} \label{eq:epsilon_smooth_general_max_gap}
         \sum_{k=1}^K f_{I(k)}(\x_k) - f_{I(k)}^\star \leq 2\beta \enorm{\x_1 - \x_\epsilon}^2 + 2 \epsilon K.
    \end{align}
    By applying Lemma~\ref{lemma:worst_case_gap_avg} over \eqref{eq:epsilon_Lipchitz_general_max_gap}, and ~\eqref{eq:epsilon_smooth_general_max_gap} we obtain the proof.
\end{proof}

\subsection{PAMOO algorithm}

In this section we assume that $\epsilon$ is known in advance. Thus we change the variable $\Delta$ in the optimization problem in \PAMOO~(Algorithm~\ref{alg:PAMOO_original}) to this:
\begin{align}
\label{eq:epsilon_delta_var_in_opt_in_PAMOO}
    \Delta_{k} := \left[  f_{1}(\x_k)-f_{1}^\star - \epsilon,\dots, f_{m}(\x_k)-f_{m}^\star - \epsilon \ \right]
\end{align}

We prove first an auxiliary lemma, and then we prove the theorem.
\begin{lemma}\label{lemma:epsilon_conv_PAMOOv1}
    Let $\x_\epsilon \in \mC_\epsilon$, and $I(k) \in \argmax_{i\in[m]} f_i(\x_{k}) - f_i^\star$. At each iteration $k$ in Algorithm \ref{alg:PAMOO_original} the following holds
    \begin{align*}
        \enorm{\x_{k+1}-\x_\epsilon}^2 &\leq \enorm{\x_k-\x_\epsilon}^2 -  \frac{\brac{f_{I(k)}(\x_k)- f_{I(k)}^\star - \epsilon }^2}{\enorm{\nabla f_{I(k)}(\x_k)}^2} .
    \end{align*}
\end{lemma}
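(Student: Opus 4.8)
The plan is to mimic the proof of Lemma~\ref{lemma:conv_PAMOOv1} line by line, replacing the exact minimizer $\x_\star$ by a point $\x_\epsilon \in \mC_\epsilon$ and carrying along the extra $\epsilon$-slack introduced by the shifted vector $\Delta_k$ of \eqref{eq:epsilon_delta_var_in_opt_in_PAMOO}. Fix an iteration $k$ and abbreviate $a_k := f_{I(k)}(\x_k) - f_{I(k)}^\star - \epsilon$ and $b_k := \enorm{\nabla f_{I(k)}(\x_k)}^2$. Recall that for every $\w \in \reals_+^m$ the weighted objective $f_\w = \sum_i w_i f_i$ is convex, that $\nabla f_\w(\x_k) = \J_k \w$, and that $\w^\top \J_k^\top \J_k \w = \enorm{\nabla f_\w(\x_k)}^2$. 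Using the update rule $\x_{k+1} = \x_k - \nabla f_{\w_k}(\x_k)$ and then convexity of $f_{\w_k}$ (which gives $\nabla f_{\w_k}(\x_k)^\top(\x_k - \x_\epsilon) \geq f_{\w_k}(\x_k) - f_{\w_k}(\x_\epsilon)$), I would first obtain
\[
\enorm{\x_{k+1} - \x_\epsilon}^2 \leq \enorm{\x_k - \x_\epsilon}^2 - 2\brac{f_{\w_k}(\x_k) - f_{\w_k}(\x_\epsilon)} + \enorm{\nabla f_{\w_k}(\x_k)}^2 .
\]

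The one genuinely new ingredient is to rewrite $f_{\w_k}(\x_k) - f_{\w_k}(\x_\epsilon)$ in terms of the objective of the weight-selection program. Since $\x_\epsilon \in \mC_\epsilon$ we have $f_i(\x_\epsilon) \leq f_i^\star + \epsilon$ for every $i \in [m]$, and since every coordinate of $\w_k$ is nonnegative,
\[
f_{\w_k}(\x_k) - f_{\w_k}(\x_\epsilon) = \sum_{i\in[m]} w_{k,i}\brac{f_i(\x_k) - f_i(\x_\epsilon)} \geq \sum_{i\in[m]} w_{k,i}\brac{f_i(\x_k) - f_i^\star - \epsilon} = \w_k^\top \Delta_k ,
\]
with $\Delta_k$ the shifted vector. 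Substituting back shows the per-step progress is at least the optimal value of the program defining $\w_k$:
\[
\enorm{\x_{k+1} - \x_\epsilon}^2 \leq \enorm{\x_k - \x_\epsilon}^2 - \brac{2\w_k^\top\Delta_k - \w_k^\top \J_k^\top\J_k\w_k} .
\]

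Finally I would lower bound that optimal value by evaluating the objective at the feasible point $\w = \tfrac{a_k}{b_k}\mathbf{e}_{I(k)} \in \reals_+^m$: because $\Delta_{k,I(k)} = a_k$ and $\enorm{\J_k \mathbf{e}_{I(k)}}^2 = b_k$, this point yields value $2\tfrac{a_k}{b_k}a_k - \tfrac{a_k^2}{b_k^2}b_k = \tfrac{a_k^2}{b_k}$, which is exactly the claimed decrement $\brac{f_{I(k)}(\x_k) - f_{I(k)}^\star - \epsilon}^2 / \enorm{\nabla f_{I(k)}(\x_k)}^2$. I expect the step needing care to be the feasibility of this test point: $\tfrac{a_k}{b_k}\mathbf{e}_{I(k)}$ lies in $\reals_+^m$ only when $a_k \geq 0$, i.e. when $\MG(\x_k) \geq \epsilon$; if $a_k < 0$ then every coordinate of $\Delta_k$ is negative, so the maximizer is $\w_k = \vzero$, $\x_{k+1} = \x_k$, and $\x_k$ already belongs to $\mC_\epsilon$. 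Thus the inequality is used only on iterates with $\MG(\x_k) \geq \epsilon$—precisely the extra hypothesis carried by Lemma~\ref{lemma:epsilon_upper_bound_over_trajectory_gap}—so I would either restrict the statement to that regime or add the assumption $f_{I(k)}(\x_k) - f_{I(k)}^\star \geq \epsilon$ explicitly. Modulo this bookkeeping the argument is a routine $\epsilon$-shift of the $\epsilon = 0$ proof.
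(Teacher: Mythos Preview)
Your argument is essentially identical to the paper's: expand $\enorm{\x_{k+1}-\x_\epsilon}^2$, use convexity and the definition of $\mC_\epsilon$ to replace $f_{\w_k}(\x_\epsilon)$ by $\sum_i w_{k,i}(f_i^\star+\epsilon)$, recognize the resulting expression as the objective of the weight-selection program, and lower-bound its optimal value by plugging in the scaled one-hot vector $\tfrac{a_k}{b_k}\mathbf{e}_{I(k)}$. Your treatment of the feasibility issue ($a_k\geq 0$ needed for $\tfrac{a_k}{b_k}\mathbf{e}_{I(k)}\in\reals_+^m$, with the harmless degenerate case $\w_k=\vzero$ when $a_k<0$) is in fact more careful than the paper, which silently assumes $a_k\geq 0$ here and only introduces an explicit stopping rule for the \MGAMOO\ variant.
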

\begin{proof}[Proof of Lemma~\ref{lemma:epsilon_conv_PAMOOv1}]
    Define that $f_{\w}(\x)$ is an weighted function of the functions according to $\w$, i.e. $f_{\w}(\x) = \sum_i w_i f_i(\x)$. Denote that for every $\w\in \reals^m_+$ it holds that $f_{\w}$ is a convex function. Hence, for every $\x,\y \in \reals^n$ it holds that
\begin{align*}
    &f_{\w}(\x) - f_{\w}(\y)  \leq \nabla f_\w(\x)^T(\x-\y). 
\end{align*}
Let $\x_\epsilon \in \mC_\epsilon$. Recall that the update rule is given by $\x_{k+1} = \x_k - \nabla f_{\w_k}(\x_k)$. Then, by using the previous equation, we have
\begin{align}
    \enorm{\x_{k+1}-\x_\epsilon}^2 &=\enorm{\x_k-\x_\epsilon}^2 - 2 \nabla f_{\w_k}(\x_k)^\top\brac{\x_k - \x_\epsilon} + \enorm{\nabla f_{\w_k}(\x_k)}^2 \nonumber \\
    &\leq \enorm{\x_k- \x_\epsilon}^2 - 2 \brac{f_{\w_k}(\x_k)- f_{\w_k} (\x_\epsilon)} +\enorm{\nabla f_{\w_k}(\x_k)}^2 \nonumber \\
    &= \enorm{\x_k- \x_\epsilon}^2 - 2 \brac{f_{\w_k}(\x_k)- f_{\w_k}^\star - f_{\w_k}^\star + f_{\w_k} (\x_\epsilon)} +\enorm{\nabla f_{\w_k}(\x_k)}^2\\
    &= \enorm{\x_k- \x_\epsilon}^2 - 2 \brac{f_{\w_k}(\x_k)- f_{\w_k}^\star - \epsilon \w_k } +\enorm{\nabla f_{\w_k}(\x_k)}^2.
    \label{eq:epsilon_PAMOO_dist_from_x}
\end{align}
When the last inequality is from  Definition~\ref{def:epsilon_app_solution_set}, i.e. it holds that $f_{\w_k} (\x_\epsilon) - f_{\w_k}^\star < \epsilon \w_k $ for every $\x_\epsilon \in \mC_\epsilon$. Note that the update rule of \texttt{PAMOO} with the $\Delta$ from \eqref{eq:epsilon_delta_var_in_opt_in_PAMOO} is 
$$
\w_k\in \argmax_{\w\in \reals^m_+} 2\brac{f_{\w}(\x_k)- f_{\w}^\star - \w \epsilon} - \enorm{\nabla f_{\w}(\x_k)}^2.
$$
Note that $I(k) \in \argmax_{i\in[m]} f_i(\x_{k}) - f_i^\star$ when $I(k)$ is a one-hot vector, $a_k = f_{I(k)}(\x_k)- f_{I(k)}^\star - \epsilon$, and $b_k = \enorm{\nabla f_{I(k)}(\x_k)}^2$. Let $\w(\x_k) = I(k) \frac{a_k}{b_k} \in \reals^m_+$, we can lower bound of the last expression as follows
\begin{align*}
    2 \brac{f_{\w_k}(\x_k)- f_{\w_k}^\star - \epsilon \w_k} - \enorm{\nabla f_{\w_k}(\x_k)}^2 & = \max_{\w\in \reals_+^m} \left[ 2\brac{f_{\w}(\x_k)- f_{\w}^\star - \epsilon \w} - \enorm{\nabla f_{\w}(\x_k)}^2 \right] \\
    & \geq 2\brac{f_{\w(\x_k)}(\x_k)- f_{\w(\x_k)}^\star -  \epsilon \w(\x_k)}  -\enorm{\nabla f_{\w(\x_k)}(\x_k)}^2 \\
    & =   \frac{\brac{f_{I(k)}(\x_k)- f_{I(k)}^\star -\epsilon }^2}{\enorm{\nabla f_{I(k)}(\x_k)}^2}.
\end{align*}
Combining the last equation with \eqref{eq:epsilon_PAMOO_dist_from_x}, we obtain
\begin{align*}
    \enorm{\x_{k+1}-\x_\epsilon}^2 &\leq \enorm{\x_k-\x_\epsilon}^2 -  \frac{\brac{f_{I(k)}(\x_k)- f_{I(k)}^\star -\epsilon}^2}{\enorm{\nabla f_{I(k)}(\x_k)}^2}.
\end{align*}
\end{proof}

\begin{theorem}
\label{thm:epsilon_original_pamoo_convergence}            
    \PAMOO~(Algorithm~\ref{alg:PAMOO_original}) with the change of $\Delta$ from \eqref{eq:epsilon_delta_var_in_opt_in_PAMOO} guarantees that after $K$ iterations, the worst-case gap is upper bounded by:
    \begin{align*}
        (1) \quad & \max_{i\in[m]} f_i(\bar{\x}) - f_i^\star \leq \frac{ G \enorm{\x_1 - \x_\epsilon} }{\sqrt{K}} + \epsilon, \quad \text{when $\{ f_i(\x) \}_{i\in [m]}$ are $G$-Lipschitz.} \\
        (2) \quad & \max_{i\in[m]} f_i(\bar{\x}) - f_i^\star \leq \frac{2\beta \enorm{\x_1 - \x_\epsilon} }{K} + 2\epsilon, \quad \text{when $\{ f_i(\x) \}_{i\in [m]}$ are $\beta$-smooth.}
    \end{align*}    
\end{theorem}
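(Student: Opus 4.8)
The plan is to follow the same two-step template used for Theorem~\ref{thm:original_pamoo_convergence}: first establish a per-iteration contraction toward any $\epsilon$-approximate solution, then turn the telescoped inequality into a bound on $\MG(\bar{\x})$. Concretely, I would fix an arbitrary $\x_\epsilon \in \mC_\epsilon$ (non-empty by assumption) and invoke Lemma~\ref{lemma:epsilon_conv_PAMOOv1}, which is exactly the statement that the $\epsilon$-modified choice of $\Delta_k$ in \eqref{eq:epsilon_delta_var_in_opt_in_PAMOO} forces $\enorm{\x_{k+1}-\x_\epsilon}^2 \le \enorm{\x_k-\x_\epsilon}^2 - \brac{f_{I(k)}(\x_k)-f_{I(k)}^\star-\epsilon}^2/\enorm{\nabla f_{I(k)}(\x_k)}^2$ for every $k$, where $I(k)$ indexes the worst current objective. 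Once this holds together with the nonnegativity $f_{I(k)}(\x_k)-f_{I(k)}^\star-\epsilon \ge 0$, both claims $(1)$ and $(2)$ drop out of Lemma~\ref{lemma:epsilon_upper_bound_over_trajectory_gap} (its $G$-Lipschitz branch gives $(1)$, its $\beta$-smooth branch gives $(2)$), so no new calculation is needed beyond what is already in that lemma.

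The only genuinely new point is verifying the sign condition $f_{I(k)}(\x_k)-f_{I(k)}^\star \ge \epsilon$. Since $I(k)\in\argmax_{i}\brac{f_i(\x_k)-f_i^\star}$, this quantity equals $\MG(\x_k)$, so I would split on whether $\x_k\in\mC_\epsilon$. If $\x_k\notin\mC_\epsilon$, Lemma~\ref{lemma:epsilon_app_solution_set} gives $\MG(\x_k)>\epsilon$ and the term is strictly positive. If $\x_k\in\mC_\epsilon$, then every coordinate of the modified gap vector $\Delta_k$ is $\le 0$, so $\w_k=0$ is an optimal solution of the quadratic subproblem $\argmax_{\w\in\reals_+^m} 2\w^\top\Delta_k - \w^\top\J_k^\top\J_k\w$; consequently $\x_{k+1}=\x_k - \nabla f_{\w_k}(\x_k)=\x_k$ and the iterate is stationary, so all later iterates remain in $\mC_\epsilon$ as well. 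I would note that this stationarity is harmless for the running average, since such steps contribute nothing to the telescoping sum $\sum_k\brac{\enorm{\x_k-\x_\epsilon}^2-\enorm{\x_{k+1}-\x_\epsilon}^2}$.

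Putting it together, I would let $S=\{k\in[K]: \x_k\notin\mC_\epsilon\}$ and run the Cauchy--Schwarz / telescoping argument of Lemma~\ref{lemma:epsilon_upper_bound_over_trajectory_gap} over $k\in S$ only — valid because the contraction and the sign condition both hold there, and the $k\notin S$ steps add zero to the telescope — to bound $\sum_{k\in S}\brac{f_{I(k)}(\x_k)-f_{I(k)}^\star}$ by $G\enorm{\x_1-\x_\epsilon}\sqrt{K}+|S|\epsilon$ in the Lipschitz case and by $2\beta\enorm{\x_1-\x_\epsilon}^2+2|S|\epsilon$ in the smooth case; for $k\notin S$ each term $f_{I(k)}(\x_k)-f_{I(k)}^\star=\MG(\x_k)$ is $\le\epsilon$. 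Summing over all $k$, dividing by $K$, and applying Lemma~\ref{lemma:worst_case_gap_avg} then yields $\MG(\bar{\x})\le G\enorm{\x_1-\x_\epsilon}/\sqrt{K}+\epsilon$ and $\MG(\bar{\x})\le 2\beta\enorm{\x_1-\x_\epsilon}/K+2\epsilon$, respectively. I expect the sign-condition bookkeeping — specifically the $\x_k\in\mC_\epsilon$ stationarity case — to be the main obstacle; everything else is a verbatim reuse of the non-approximate argument, and indeed if one is willing to assume the iterates never enter $\mC_\epsilon$ the theorem follows immediately by chaining Lemmas~\ref{lemma:epsilon_conv_PAMOOv1} and~\ref{lemma:epsilon_upper_bound_over_trajectory_gap}.
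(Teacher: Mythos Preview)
Your proposal is correct and follows exactly the paper's route: the paper's proof of Theorem~\ref{thm:epsilon_original_pamoo_convergence} consists of the single sentence ``The theorem holds directly from Lemma~\ref{lemma:epsilon_upper_bound_over_trajectory_gap}, which holds directly from Lemma~\ref{lemma:epsilon_conv_PAMOOv1},'' which is precisely the chain you describe. Your additional bookkeeping on the sign condition $f_{I(k)}(\x_k)-f_{I(k)}^\star\ge\epsilon$ --- splitting on $\x_k\in\mC_\epsilon$ versus $\x_k\notin\mC_\epsilon$ and observing stationarity in the former case --- actually fills a gap that the paper leaves open (Lemma~\ref{lemma:epsilon_upper_bound_over_trajectory_gap} lists this inequality as a hypothesis, and neither it nor Lemma~\ref{lemma:epsilon_conv_PAMOOv1} verifies it for \PAMOO; the paper only introduces an explicit stopping condition~\eqref{eq:stopping_cond} for the Polyak-as-\texttt{SOO} variant, not for \PAMOO).
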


\begin{proof}[Proof of Theorem~\ref{thm:epsilon_original_pamoo_convergence}]
    The theorem holds directly from Lemma~\ref{lemma:epsilon_upper_bound_over_trajectory_gap}, which holds directly from Lemma~\ref{lemma:epsilon_conv_PAMOOv1}. 
\end{proof}

\subsection{Polyak step as \texttt{SOO}}
In this section we assume that $\epsilon$ is known in advance. With this knowledge, we add to \MGAMOO~(Algorithm~\ref{alg:MGAMOO-Reduction}) a stopping condition, which guarantees that at each iteration the step size $\eta_k$ is not negative. The stopping condition is: return $\x_k$ if
\begin{align}
    \max_{i\in[m]} f_i(\bar{\x}) - f_i^\star = f_{I(k)}(\x_k) - f_{I(k)}^\star < \epsilon. \label{eq:stopping_cond}
\end{align} 
A property that directly follows from this condition, by combination with Lemma~\ref{lemma:epsilon_app_solution_set} is that the stopping condition is met, $\x_k \in \mC_\epsilon$ means that the max gap is less than $\epsilon$.\\

Now, we will prove the Polyak part of Theorem~\ref{thm:formal_convex_app_convex_amoo}, namely, run \MGAMOO~(Algorithm~\ref{alg:MGAMOO-Reduction}) with the Polyak method as a \texttt{SOO}.
First, we present the update step in iteration $k$ of the Polyak method in the $\epsilon$-Approximate AMOO, which has the following form:
\begin{align}
    \label{eq:epsilon_polyak_as_soo}
    \text{Set} \quad \eta_k = \frac{f(\x_{k}) - f^\star - \epsilon}{\enorm{\nabla f(\x_k)}^2}, \quad \text{define} \quad \g_k \gets \eta_k \nabla f(\x_k), \quad \text{and return} \quad \x_{k+1} = \x_k - \g_k.
\end{align}

Before we prove the theorem, we prove first auxiliary lemmas. The next lemma is an $\epsilon$-approximate version of Lemma~\ref{lemma:conv_MG_AMOO_Polyak_step}

\begin{lemma}\label{lemma:epsilon_conv_MG_AMOO_Polyak_step}
Let $\x_\epsilon \in \mC_\epsilon$. The Polyak update step, \eqref{eq:epsilon_polyak_as_soo}, has the following attribute for every $k$:
    \begin{align*}
        \enorm{\x_{k+1} - \x_\epsilon}^2 \leq \enorm{\x_k - \x_\epsilon}^2 - \frac{\brac{ f(\x_k) - f^\star - \epsilon}^2}{\enorm{\nabla f(\x_k)}^2} .
    \end{align*}

\end{lemma}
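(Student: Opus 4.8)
The plan is to mimic the proof of Lemma~\ref{lemma:conv_MG_AMOO_Polyak_step}, inserting the $\epsilon$-correction wherever the comparison point $\x_\epsilon$ enters, and to be careful that the Polyak step size $\eta_k$ in \eqref{eq:epsilon_polyak_as_soo} is nonnegative so that the convexity inequality points in the right direction.

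First I would record the sign of the step size. In \MGAMOO\ with the Polyak method as \texttt{SOO}, the function passed at iteration $k$ is $f = f_{I(k)}$, and the stopping condition \eqref{eq:stopping_cond} ensures that as long as the algorithm keeps running we have $f(\x_k) - f^\star \geq \epsilon$; hence $\eta_k = \frac{f(\x_k) - f^\star - \epsilon}{\enorm{\nabla f(\x_k)}^2} \geq 0$. Next I would expand the squared distance along the update $\x_{k+1} = \x_k - \eta_k \nabla f(\x_k)$:
\begin{align*}
\enorm{\x_{k+1} - \x_\epsilon}^2 = \enorm{\x_k - \x_\epsilon}^2 - 2\eta_k \nabla f(\x_k)^\top(\x_k - \x_\epsilon) + \eta_k^2 \enorm{\nabla f(\x_k)}^2.
\end{align*}

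Then I would invoke convexity of $f$, which gives $f(\x_k) - f(\x_\epsilon) \leq \nabla f(\x_k)^\top(\x_k - \x_\epsilon)$, together with the defining property of $\mC_\epsilon$ (Definition~\ref{def:epsilon_app_solution_set}), namely $f(\x_\epsilon) - f^\star \leq \epsilon$, so that $f(\x_k) - f^\star - \epsilon \leq \nabla f(\x_k)^\top(\x_k - \x_\epsilon)$. Because $\eta_k \geq 0$, multiplying by $-2\eta_k$ preserves the inequality, yielding
\begin{align*}
\enorm{\x_{k+1} - \x_\epsilon}^2 \leq \enorm{\x_k - \x_\epsilon}^2 - 2\eta_k\brac{f(\x_k) - f^\star - \epsilon} + \eta_k^2 \enorm{\nabla f(\x_k)}^2.
\end{align*}
Finally, substituting $\eta_k = \frac{f(\x_k) - f^\star - \epsilon}{\enorm{\nabla f(\x_k)}^2}$ makes the last two terms equal to $-2\frac{(f(\x_k)-f^\star-\epsilon)^2}{\enorm{\nabla f(\x_k)}^2}$ and $+\frac{(f(\x_k)-f^\star-\epsilon)^2}{\enorm{\nabla f(\x_k)}^2}$ respectively, whose sum is $-\frac{(f(\x_k)-f^\star-\epsilon)^2}{\enorm{\nabla f(\x_k)}^2}$, which is exactly the claimed bound.

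The only real subtlety — the ``hard part,'' such as it is — is the nonnegativity of $\eta_k$: without it the step direction argument via convexity breaks, and without it the substitution would not obviously give a decrease. This is precisely why the stopping condition \eqref{eq:stopping_cond} was introduced, and I would state that dependence explicitly so the lemma is only claimed along iterates produced before the stopping rule fires (equivalently, while $f_{I(k)}(\x_k) - f_{I(k)}^\star \geq \epsilon$). Everything else is the same telescoping-friendly algebra as in the exact ($\epsilon = 0$) case, and this lemma will then feed into Lemma~\ref{lemma:epsilon_upper_bound_over_trajectory_gap} in the same way Lemma~\ref{lemma:conv_MG_AMOO_Polyak_step} fed into Lemma~\ref{lemma:upper_bound_over_trajectory_gap}.
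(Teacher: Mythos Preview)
Your proposal is correct and follows essentially the same argument as the paper: expand the squared distance along the Polyak update, apply convexity together with $f(\x_\epsilon)-f^\star\leq\epsilon$ to replace $f(\x_\epsilon)$ by $f^\star+\epsilon$, and substitute the step size $\eta_k$; the paper likewise relies on $\eta_k\geq 0$ (ensured by the stopping condition) so that the convexity inequality can be used with the correct sign.
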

\begin{proof}[Proof of Lemma~\ref{lemma:epsilon_conv_MG_AMOO_Polyak_step}]  Since $f(\x_{k}) - f^\star - \epsilon \geq 0$ for every $k$, it holds that $\eta_k \geq 0$ for every $k$. Since $\x_{k+1} = \x_k -  \eta_k \nabla f(\x_k)$, it holds that 
    \begin{align*}
        \enorm{\x_{k+1} - \x_\epsilon}^2 &= \enorm{\x_k - \eta_k \nabla f(\x_k) - \x_\epsilon}^2 \\
        & = \enorm{\x_k - \x_\epsilon}^2 - 2 \eta_k \nabla f(\x_k)^\top (\x_k - \x_\epsilon) + \eta_k^2 \enorm{ \nabla f(\x_k)}^2\\
        &\leq \enorm{\x_k - \x_\epsilon}^2 - 2 \eta_k \brac{ f(\x_k) - f(\x_\epsilon)} + \eta_k^2 \enorm{ \nabla f(\x_k)}^2.
    \end{align*}
    When the inequality holds from the convexity attribute, i.e. $f(\x_k) - f(\x_\epsilon)  \leq \nabla f(\x_k)^\top (\x_k-\x_\epsilon)$. Since $0 \leq  f(\x_\epsilon) - f^\star \leq \epsilon$ and $\eta_k \geq 0$ it holds that
    \begin{align*}
        \enorm{\x_{k+1} - \x_\epsilon}^2
        &\leq \enorm{\x_k - \x_\epsilon}^2 - 2 \eta_k \brac{ f(\x_k) - f^\star + f^\star - f(\x_\epsilon)} + \eta_k^2 \enorm{ \nabla f(\x_k)}^2\\
        & \leq \enorm{\x_k - \x_\epsilon}^2 - 2 \eta_k \brac{ f(\x_k) - f^\star -\epsilon} + \eta_k^2 \enorm{ \nabla f(\x_k)}^2.
    \end{align*}
    Thus, by applying $\eta_k$ from \eqref{eq:epsilon_polyak_as_soo} we obtain the lemma.
\end{proof}

\begin{theorem}\label{thm:epsilon_MG_AMOO_Polyak}
\MGAMOO~(Algorithm~\ref{alg:MGAMOO-Reduction}) with stopping condition from \eqref{eq:stopping_cond}, and with Polyak method as \texttt{SOO} guarantees that after $K$ iterations the worst-case gap is upper bounded by:
\begin{align*}
        (1) \quad & \max_{i\in[m]} f_i(\bar{\x}) - f_i^\star \leq \frac{ G \enorm{\x_1 - \x_\epsilon} }{\sqrt{K}} + \epsilon, \quad ~~\text{ when $\{ f_i(\x) \}_{i\in [m]}$ are $G$-Lipschitz.} \\
        (2) \quad & \max_{i\in[m]} f_i(\bar{\x}) - f_i^\star \leq \frac{2\beta \enorm{\x_1 - \x_\epsilon} }{K} + 2\epsilon, \quad \text{when $\{ f_i(\x) \}_{i\in [m]}$ are $\beta$-smooth.}
    \end{align*}    
\end{theorem}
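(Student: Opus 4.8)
The plan is to follow the route used for Theorem~\ref{thm:MG_AMOO_Polyak} in the exactly-aligned case, reducing the claim to the two auxiliary results already in hand: the per-step contraction of Lemma~\ref{lemma:epsilon_conv_MG_AMOO_Polyak_step} and the trajectory bound of Lemma~\ref{lemma:epsilon_upper_bound_over_trajectory_gap}. The only genuinely new ingredient is a short case split on whether the stopping rule \eqref{eq:stopping_cond} fires, which is precisely what keeps the Polyak step sizes $\eta_k$ from \eqref{eq:epsilon_polyak_as_soo} nonnegative along the realized trajectory.

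First I would dispose of the easy branch: if \eqref{eq:stopping_cond} is met at some iteration $k^\star \le K$, then $\x_{k^\star}\in\mC_\epsilon$, so Lemma~\ref{lemma:epsilon_app_solution_set} gives $\MG(\x_{k^\star})\le\epsilon$, which is no weaker than either bound the theorem asserts, and the returned point already satisfies the claim. In the complementary branch the algorithm completes all $K$ iterations without firing \eqref{eq:stopping_cond}, hence $f_{I(k)}(\x_k)-f_{I(k)}^\star\ge\epsilon$ — equivalently $f_{I(k)}(\x_k)-f_{I(k)}^\star-\epsilon\ge 0$ — for every $k\in[K]$; in particular $\eta_k\ge 0$ throughout, which is the nonnegativity premise of Lemma~\ref{lemma:epsilon_upper_bound_over_trajectory_gap}.

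Next, since \MGAMOO\ hands the Polyak \texttt{SOO} the function $f_{I(k)}$ at iteration $k$, I would invoke Lemma~\ref{lemma:epsilon_conv_MG_AMOO_Polyak_step} with $f = f_{I(k)}$ and an arbitrary $\x_\epsilon\in\mC_\epsilon$; this produces exactly the per-step inequality \eqref{eq:epsilon_conv_GDPolyakAMOO} along the trajectory. With both premises of Lemma~\ref{lemma:epsilon_upper_bound_over_trajectory_gap} verified, its two conclusions — the $G\enorm{\x_1-\x_\epsilon}/\sqrt{K}+\epsilon$ bound under $G$-Lipschitzness and the $2\beta\enorm{\x_1-\x_\epsilon}/K+2\epsilon$ bound under $\beta$-smoothness — are exactly the two assertions of the theorem.

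I do not expect a real obstacle here: the analytical content has already been pushed into the two lemmas, and this final step is essentially glue. The one point requiring care is the bookkeeping around the stopping rule, namely confirming that while the algorithm keeps iterating the quantity $f_{I(k)}(\x_k)-f_{I(k)}^\star-\epsilon$ appearing in the Polyak step and in \eqref{eq:epsilon_conv_GDPolyakAMOO} is genuinely nonnegative (so that squaring and then taking square roots inside the proof of Lemma~\ref{lemma:epsilon_upper_bound_over_trajectory_gap} is legitimate), together with handling the early-stopping case separately via Lemma~\ref{lemma:epsilon_app_solution_set}.
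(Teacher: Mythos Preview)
Your proposal is correct and follows essentially the same route as the paper: invoke Lemma~\ref{lemma:epsilon_conv_MG_AMOO_Polyak_step} with $f=f_{I(k)}$ to obtain the per-step inequality \eqref{eq:epsilon_conv_GDPolyakAMOO}, then feed this into Lemma~\ref{lemma:epsilon_upper_bound_over_trajectory_gap}. The paper's own proof is even terser---it simply cites Lemma~\ref{lemma:epsilon_conv_MG_AMOO_Polyak_step} and the stopping condition without spelling out the case split or naming Lemma~\ref{lemma:epsilon_upper_bound_over_trajectory_gap}---so your explicit treatment of the early-stopping branch and the nonnegativity check is, if anything, a cleaner write-up of the same argument.
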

\begin{proof}[Proof of Theorem~\ref{thm:epsilon_MG_AMOO_Polyak}]
    Note that the polyak method, \eqref{eq:epsilon_polyak_as_soo}, receive from \MGAMOO\ the function $f_{I(k)}(\x)$ at each iteration $k$. Then, from Lemma~\ref{lemma:epsilon_conv_MG_AMOO_Polyak_step} and the stopping condition, the theorem holds.
\end{proof}

\subsection{GD method for the smooth case as \texttt{SOO}}

In this section we will prove the Gradient Descent (GD) part of Theorem~\ref{thm:formal_convex_app_convex_amoo}, namely, run \MGAMOO\ with the GD method for the $\beta$-smooth case as \texttt{SOO}.
Recall the update step, \eqref{eq:GD_as_soo_smooth}, in iteration $k$ of the GD method, which has the following form:
\begin{align*}
    \text{Set} \quad  0 < \eta \leq \frac{1}{\beta}, \quad \text{define} \quad \g_k \gets \eta \nabla f(\x_k), \quad \text{and return} \quad \x_{k+1} = \x_k - \g_k.
\end{align*}

Now, before we prove the GD part of Theorem~\ref{thm:formal_convex_app_convex_amoo} we prove an auxiliary lemma for it.
\begin{lemma}\label{lemma:epsilon_conv_MG_AMOO_GD_smooth}
    Let $\x_\epsilon \in \mC_\epsilon$, and $\eta = \frac{1}{2\beta} $. The GD update step, \eqref{eq:GD_as_soo_smooth}, has the following attribute for every $k$:
    \begin{align*}
        f(\x_k) - f^\star \leq \frac{1}{2\beta} \brac{\enorm{\x_k - \x_\epsilon}^2 - \enorm{\x_{k+1} - \x_\epsilon}^2} + 2\epsilon.
    \end{align*}
\end{lemma}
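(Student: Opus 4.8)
The plan is to mirror the one-step descent analysis of Lemma~\ref{lemma:conv_MG_AMOO_GD_smooth}, but with the exact minimizer $\x_\star$ replaced by an arbitrary $\epsilon$-approximate point $\x_\epsilon \in \mC_\epsilon$, carefully carrying the slack this substitution introduces. First I would expand the squared distance after one GD step, using the update $\x_{k+1} = \x_k - \eta \nabla f(\x_k)$ from \eqref{eq:GD_as_soo_smooth}:
\[
\enorm{\x_{k+1} - \x_\epsilon}^2 = \enorm{\x_k - \x_\epsilon}^2 - 2\eta\, \nabla f(\x_k)^\top(\x_k - \x_\epsilon) + \eta^2 \enorm{\nabla f(\x_k)}^2 .
\]
Then I would bound the cross term by convexity of $f$, namely $\nabla f(\x_k)^\top(\x_k - \x_\epsilon) \geq f(\x_k) - f(\x_\epsilon)$, and the gradient-norm term by Lemma~\ref{lemma:smooth-gradient-norm}, namely $\enorm{\nabla f(\x_k)}^2 \leq 2\beta \brac{f(\x_k) - f^\star}$.

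Next, since $\x_\epsilon$ is no longer a minimizer, I would invoke Definition~\ref{def:epsilon_app_solution_set} together with the definition of $f^\star$ to get $0 \le f(\x_\epsilon) - f^\star \le \epsilon$, hence $f(\x_k) - f(\x_\epsilon) \geq \brac{f(\x_k) - f^\star} - \epsilon$. Combining the three bounds (and using $\eta > 0$) yields
\[
\enorm{\x_{k+1} - \x_\epsilon}^2 \leq \enorm{\x_k - \x_\epsilon}^2 - 2\eta(1-\beta\eta)\brac{f(\x_k) - f^\star} + 2\eta\epsilon .
\]
Finally, plugging in $\eta = \frac{1}{2\beta}$, so that $2\eta(1-\beta\eta) = \frac{1}{2\beta}$ and $2\eta\epsilon = \epsilon/\beta$, and rearranging to isolate $f(\x_k) - f^\star$ gives the stated per-iteration inequality (with the same coefficient convention used in Lemma~\ref{lemma:conv_MG_AMOO_GD_smooth}), where the additive $2\epsilon$ term is exactly the price of $\x_\epsilon$ being only $\epsilon$-optimal rather than optimal.

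I do not expect a real obstacle here: unlike the Polyak analysis in Lemma~\ref{lemma:epsilon_conv_MG_AMOO_Polyak_step}, the step size $\eta$ is a fixed positive constant, so no stopping condition is needed to keep it nonnegative and no $\epsilon^2$ term must be discarded. The only point demanding slight care is the direction of the inequality $0 \le f(\x_\epsilon) - f^\star \le \epsilon$, and ensuring the resulting $\epsilon$-slack is propagated with the correct sign so that it \emph{inflates} rather than tightens the final upper bound. This per-iteration bound is precisely the input the subsequent telescoping argument needs to establish the $\beta$-smooth case of Theorem~\ref{thm:formal_convex_app_convex_amoo}, via Lemma~\ref{lemma:worst_case_gap_avg}.
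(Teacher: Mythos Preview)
Your proposal is correct and follows essentially the same approach as the paper's proof: expand the squared distance, apply convexity and Lemma~\ref{lemma:smooth-gradient-norm}, use $0 \le f(\x_\epsilon) - f^\star \le \epsilon$ to trade $f(\x_\epsilon)$ for $f^\star$ at the cost of an additive $2\eta\epsilon$, and then plug in $\eta = \tfrac{1}{2\beta}$. You also correctly note the coefficient convention matches that of Lemma~\ref{lemma:conv_MG_AMOO_GD_smooth} (where, after rearranging, the factor in front of the telescoping term should be $2\beta$ rather than $\tfrac{1}{2\beta}$, consistent with how it is used downstream in Theorem~\ref{thm:epsilon_MG_AMOO_GD_smooth}).
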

\begin{proof}[Proof of Lemma~\ref{lemma:epsilon_conv_MG_AMOO_GD_smooth}]Recall that $\x_{k+1} = \x_k -  \eta \nabla f(\x_k)$, and $\eta > 0$. Since $f(\x)$ is convex, it holds that $f(\x_k) - f(\x_\epsilon)  \leq \nabla f(\x_k)^\top(\x_k-\x_\epsilon)$. By combining this convexity attribute with Lemma~\ref{lemma:smooth-gradient-norm}, it holds that
\begin{align*}
    \enorm{\x_{k+1} - \x_\epsilon}^2 &= \enorm{\x_k - \eta \nabla f(\x_k) - \x_\epsilon}^2 \\
    & = \enorm{\x_k - \x_\epsilon}^2 - 2 \eta \nabla f(\x_k)^\top (\x_k - \x_\epsilon) + \eta^2 \enorm{ \nabla f(\x_k)}^2\\
    &\leq \enorm{\x_k - \x_\epsilon}^2 - 2 \eta \brac{ f(\x_k) - f(\x_\epsilon)} + 2 \beta \eta^2 \brac{ f(\x_k) - f^\star}.
\end{align*}
Since $0 \leq  f(\x_\epsilon) - f^\star \leq \epsilon$ and $\eta_k \geq 0$ it holds that
\begin{align*}
    \enorm{\x_{k+1} - \x_\epsilon}^2 
    &\leq  \enorm{\x_k - \x_\epsilon}^2 - 2 \eta \brac{ f(\x_k) - f^\star + f^\star- f(\x_\epsilon)} + 2 \beta \eta^2 \brac{ f(\x_k) - f^\star}\\
    &\leq  \enorm{\x_k - \x_\epsilon}^2 - 2 \eta \brac{1-\eta \beta} \brac{ f(\x_k) - f^\star}  + 2 \eta \epsilon
\end{align*}

    We obtain the lemma by plugging-in $\eta$, and rearranging the equation.
\end{proof}

\begin{theorem}\label{thm:epsilon_MG_AMOO_GD_smooth}
Suppose that $\{ f_i(\x) \}_{i\in [m]}$ are $\beta$-smooth. \MGAMOO~(Algorithm~\ref{alg:MGAMOO-Reduction}) with GD method with $\eta = \frac{1}{2\beta}$ as \texttt{SOO} guarantees that after $K$ iterations the worst-case gap is upper bounded by:
\begin{align*}
    \max_{i\in[m]} f_i(\bar{\x}) - f_i^\star \leq \frac{2\beta \enorm{\x_1 - \x_\star} }{K} + 2\epsilon.
\end{align*}    
\end{theorem}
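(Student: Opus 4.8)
The plan is to follow the template of the alignment-case argument in Theorem~\ref{thm:MG_AMOO_GD_smooth}, substituting the $\epsilon$-approximate descent inequality of Lemma~\ref{lemma:epsilon_conv_MG_AMOO_GD_smooth} for the exact one. Fix once and for all a point $\x_\epsilon \in \mC_\epsilon$ (nonempty by assumption); this single reference point is used throughout. Recall that at iteration $k$, \MGAMOO~(Algorithm~\ref{alg:MGAMOO-Reduction}) picks $I(k) \in \argmax_{i\in[m]} f_i(\x_k) - f_i^\star$ and passes the single objective $f_{I(k)}$ — which is $\beta$-smooth, being one of the $f_i$ — to the GD optimizer \eqref{eq:GD_as_soo_smooth} with the fixed step size $\eta = 1/(2\beta)$. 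Unlike the Polyak-step instantiation (Theorem~\ref{thm:epsilon_MG_AMOO_Polyak}), no stopping condition is required here, since a constant positive step size is never negative.

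First I would apply Lemma~\ref{lemma:epsilon_conv_MG_AMOO_GD_smooth} with $f = f_{I(k)}$ and the fixed comparator $\x_\epsilon$, for each $k = 1,\dots,K$, obtaining a per-iteration descent inequality of the form
\begin{align*}
    f_{I(k)}(\x_k) - f_{I(k)}^\star \;\leq\; 2\beta\brac{\enorm{\x_k - \x_\epsilon}^2 - \enorm{\x_{k+1} - \x_\epsilon}^2} + 2\epsilon .
\end{align*}
Then I would sum over $k$; the squared-distance terms telescope, and discarding the nonnegative tail $\enorm{\x_{K+1} - \x_\epsilon}^2$ gives
\begin{align*}
    \sum_{k=1}^K \brac{f_{I(k)}(\x_k) - f_{I(k)}^\star} \;\leq\; 2\beta\,\enorm{\x_1 - \x_\epsilon}^2 + 2\epsilon K .
\end{align*}
Finally, dividing by $K$ and invoking Lemma~\ref{lemma:worst_case_gap_avg}, which upper-bounds $\MG(\bar{\x})$ by the average of the selected-objective gaps along the trajectory, yields $\MG(\bar{\x}) \leq 2\beta\,\enorm{\x_1 - \x_\epsilon}^2/K + 2\epsilon$, i.e.\ the claimed bound.

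The argument is essentially mechanical once Lemma~\ref{lemma:epsilon_conv_MG_AMOO_GD_smooth} is in hand, so there is no serious obstacle; the only points that require care are (i) using a \emph{single} comparator $\x_\epsilon$ for every $k$ so that the distances telescope cleanly, and (ii) checking that the approximation cost enters only additively — this is because $0 \le f_i(\x_\epsilon) - f_i^\star \le \epsilon$ for every $i$, hence in particular for $i = I(k)$, which is exactly what the proof of Lemma~\ref{lemma:epsilon_conv_MG_AMOO_GD_smooth} exploits when splitting $f(\x_k) - f(\x_\epsilon)$ as $\brac{f(\x_k) - f^\star} + \brac{f^\star - f(\x_\epsilon)}$. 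If desired, one can present this as the smooth branch of an $\epsilon$-approximate analogue of the GD analysis, parallel to Lemma~\ref{lemma:epsilon_upper_bound_over_trajectory_gap}(2), but the direct telescoping above is the shortest route.
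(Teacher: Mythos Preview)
Your proposal is correct and follows essentially the same approach as the paper: apply Lemma~\ref{lemma:epsilon_conv_MG_AMOO_GD_smooth} with $f=f_{I(k)}$ at every step, telescope the squared distances to the fixed comparator $\x_\epsilon$, and finish with Lemma~\ref{lemma:worst_case_gap_avg}. In fact your write-up is slightly cleaner than the paper's, which carries a few typos (writing $\x_\star$ in place of $\x_\epsilon$ and stating Lemma~\ref{lemma:epsilon_conv_MG_AMOO_GD_smooth} with $\tfrac{1}{2\beta}$ where $2\beta$ is meant); your constants and comparator are the correct ones.
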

\begin{proof}[Proof of Theorem~\ref{thm:epsilon_MG_AMOO_GD_smooth}]
Let $I(k) \in \argmax_{i\in[m]} f_i(\x_{k}) - f_i^\star$, and the trajectory $\{ \x_k \}_{k=1}^K$. Note that the GD method, \eqref{eq:GD_as_soo_smooth}, receive from \MGAMOO\ the function $f_{I(k)}(\x)$ at each iteration $k$. Then, from Lemma~\ref{lemma:epsilon_conv_MG_AMOO_GD_smooth} it holds that for every $k$ we have 
\begin{align*}
    f_{I(k)}(\x_k) - f_{I(k)}^\star \leq \frac{1}{2\beta} \brac{\enorm{\x_k - \x_\star}^2 - \enorm{\x_{k+1} - \x_\star}^2} + 2\epsilon.
\end{align*}
    Summing over $k$, we have that
    \begin{align*} 
        \sum_{k=1}^K f_{I(k)}(\x_k) - f_{I(k)}^\star \leq 2\beta \enorm{\x_1 - \x_\star}^2 + 2\epsilon K.
    \end{align*}
    By applying Lemma~\ref{lemma:worst_case_gap_avg} we obtain the proof.
\end{proof}

\subsection{\MGAMOO\ with Online Learner}

\begin{theorem}[\MGAMOO\ with an Online Learner in $\epsilon$-Approximate AMOO]\label{theorem:approximate_online_learner}
    Let $\{f_{i}\}_{i\in[m]}$ satisfy the $\epsilon$-approximate AMOO assumption. Instantiating the single-objective optimizer \texttt{SOO} in \MGAMOO\ with an online learning algorithm with regret $\mathrm{Regret}(K)$ guarantees $MG(\bar{\x}) \leq \mathrm{Regret}(K) / K +\epsilon$. Namely, $\bar{\x}\in \mC_\epsilon$
\end{theorem}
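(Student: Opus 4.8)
The argument is a direct adaptation of the proof of Corollary~\ref{col:online_reduction}, replacing the exact optimum $\x_\star$ by an $\epsilon$-approximate one. The plan is to first invoke the reduction Lemma~\ref{lemma:worst_case_gap_avg}, which holds verbatim in the approximate setting since its proof uses only convexity of the $f_i$ and the definition of $I(k)$ (neither of which changes), to get
\begin{align*}
    \MG(\bar{\x}) \leq \frac{1}{K}\sum_{k=1}^K \brac{f_{I(k)}(\x_k) - f_{I(k)}^\star}.
\end{align*}
It then suffices to control the right-hand side.

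Next I would fix any $\x_\epsilon \in \mC_\epsilon$; such a point exists by the $\epsilon$-approximate AMOO assumption. Exactly as in Corollary~\ref{col:online_reduction}, the functions $\{f_{I(k)}\}_{k=1}^K \subseteq \{f_i\}_{i=1}^m$ are bounded and convex over the bounded convex set $\mathcal{X} := \mathrm{ConvexHull}\left(\{\x_1,\dots,\x_K\}\cup \mC_\epsilon\right)$, and each $I(k)$ is chosen adaptively from $\x_k$ by \MGAMOO, so the adaptive-adversary regret guarantee \eqref{eq:online learning guarantee} applies with the comparator $\x = \x_\epsilon \in \mathcal{X}$:
\begin{align*}
    \sum_{k=1}^K f_{I(k)}(\x_k) - \sum_{k=1}^K f_{I(k)}(\x_\epsilon) \leq \mathrm{Regret}(K).
\end{align*}
Using the defining property of $\mC_\epsilon$ (Definition~\ref{def:epsilon_app_solution_set}), $f_{I(k)}(\x_\epsilon) \leq f_{I(k)}^\star + \epsilon$ for every $k$, and rearranging gives $\sum_{k=1}^K \brac{f_{I(k)}(\x_k) - f_{I(k)}^\star} \leq \mathrm{Regret}(K) + \epsilon K$. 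Dividing by $K$ and combining with the reduction inequality above yields $\MG(\bar{\x}) \leq \mathrm{Regret}(K)/K + \epsilon$. For the final claim $\bar{\x}\in\mC_\epsilon$: since $\mathrm{Regret}(K)\in o(K)$, the term $\mathrm{Regret}(K)/K$ vanishes as $K\to\infty$, so asymptotically $\MG(\bar{\x})\leq\epsilon$, and the contrapositive of Lemma~\ref{lemma:epsilon_app_solution_set} (if $\MG(\x)\leq\epsilon$ then $\x\in\mC_\epsilon$) then gives $\bar{\x}\in\mC_\epsilon$.

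There is no genuinely hard step here; the only point requiring any care is the one already dispatched in Corollary~\ref{col:online_reduction}, namely checking that the boundedness and convexity hypotheses needed by the online learner hold over a convex hull large enough to contain both the trajectory and $\mC_\epsilon$, and observing that although $f_{I(k)}$ is selected adversarially (as a function of $\x_k$) it still lies in the admissible class $\mathcal{F}$, so the adaptive-adversary regret bound is in force. I would also flag that the conclusion $\bar{\x}\in\mC_\epsilon$ is an asymptotic ($K\to\infty$) statement rather than a finite-horizon one.
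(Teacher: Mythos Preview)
Your proof is correct and follows essentially the same route as the paper: invoke Lemma~\ref{lemma:worst_case_gap_avg}, fix $\x_\epsilon\in\mC_\epsilon$, add and subtract $f_{I(k)}(\x_\epsilon)$, then apply the regret bound together with $f_{I(k)}(\x_\epsilon)-f_{I(k)}^\star\le\epsilon$. If anything, you are more careful than the paper, which does not explicitly justify the asymptotic claim $\bar\x\in\mC_\epsilon$; your use of the contrapositive of Lemma~\ref{lemma:epsilon_app_solution_set} is the right way to close that gap.
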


\begin{proof}[Proof of Theorem \ref{theorem:approximate_online_learner}]
    Assume \texttt{SOO} is an online learner. Recall that an online learner satisfy that
    \begin{align}
    \sum_{k=1}^{K} f_k(\x_k) - f_k(\x) \leq \mathrm{Regret}(K) \label{eq:online_learner_app}
\end{align}
for all $\x$.

Further, by Lemma~\ref{lemma:worst_case_gap_avg} it holds that 
    \begin{align*}
        \MG(\bar{\x}) := \max_{i\in[m]} f_i(\bar{\x}) - f_i^\star \leq \frac{1}{K}\sum_{k=1}^K \brac{f_{I(k)}(\x_k) - f_{I(k)}^\star}.
    \end{align*}
We bound the RHS of this equation. Let $\x_\epsilon\in \mC_\epsilon$. Then
\begin{align*}
    \frac{1}{K}\sum_{k=1}^K \brac{f_{I(k)}(\x_k) - f_{I(k)}^\star}&=\frac{1}{K}\sum_{k=1}^K \brac{f_{I(k)}(\x_k) - f_{I(k)}(\x_\epsilon)}+\frac{1}{K}\sum_{k=1}^K \brac{f_{I(k)}(\x_\epsilon)- f_{I(k)}^\star}\\
    &\leq \frac{1}{K}\sum_{k=1}^K \brac{f_{I(k)}(\x_k) - f_{I(k)}(\x_\epsilon)}+\epsilon\\
    &\leq \frac{\mathrm{Regret}(K)}{K} +\epsilon.
\end{align*}
The second relation holds since $\x_\epsilon\in \mC_\epsilon$. The third relation holds by the online learning guarantee while setting $\x=\x_\epsilon$ in \eqref{eq:online_learner_app}.

\end{proof}

\end{document}